\font\msbm=msbm10
\numberwithin{equation}{section}
\theoremstyle{plain}
\newtheorem{theorem}{Theorem}[section]
\newtheorem{lemma}[theorem]{Lemma}
\def\mathbb#1{\hbox{\msbm{#1}}}
\newcommand{\lp}{\left(} 
\newcommand{\rp}{\right)} 
\newcommand{\ls}{\left[} 
\newcommand{\rs}{\right]} 
\newcommand{\lc}{\left\{} 
\newcommand{\rc}{\right\}} 
\newcommand{\abs}[1]{ \left|  #1 \right| }
\newcommand{\cond}{\sqrt{\alpha}-\sqrt{\beta}>\sqrt{2}}
\newcommand{\notcond}{\sqrt{\alpha}-\sqrt{\beta}<\sqrt{2}}
\newcommand{\norm}[1]{\left|\left| #1 \right|\right|}
\newcommand{\norminf}[1]{\left|\left| #1 \right|\right|_{\infty}}
\newcommand{\normtwotoinf}[1]{\left|\left| #1 \right|\right|_{2,\infty}}
\newcommand{\normfro}[1]{\left|\left| #1 \right|\right|_{F}}
\newcommand{\E}[1]{\mathbbm{E} #1 }
\newcommand{\prb}[1]{\mathbbm{P} \left( #1\right)  }
\newcommand{\al}{\alpha }
\newcommand{\be}{\beta }
\newcommand{\ones}{\mathbbm{1} }
\newcommand{\Lm}{L^{(m)} }
\newcommand{\Am}{A^{(m)} }
\newcommand{\Dm}{D^{(m)} }
\newcommand{\um}{u^{(m)}_2 }
\newcommand{\mrow}[1]{#1 _{m\cdot} }
\newcommand{\irow}[1]{#1 _{i\cdot} }
\newcommand{\Lsym}{\mathcal{L}}
\newcommand{\inner}[2]{\left\langle #1,#2 \right\rangle }
\newcommand{\bigo}[1]{ O\left( #1\right) }
\newcommand{\smallo}[1]{ o\left( #1\right) }
\newcommand{\Dhalf}{D^{\frac{1}{2}} }
\newcommand{\Dneghalf}{D^{-\frac{1}{2}} }
\newcommand{\Dneg}{D^{-1} }
\newcommand{\uo}{u^{\perp} }
\newcommand{\dm}{d^{(m)} }
\newcommand{\dmin}{d_{\min}}
\newcommand{\bd}{d}
\newcommand{\bdout}{{d}_{\text{out}}}
\newcommand{\sgn}[1]{ \text{sgn}\lp #1 \rp}
\newcommand{\dmax}{d_{\max}}
\newcommand{\Nhalf}{N^{\frac{1}{2}} }
\newcommand{\Nneghalf}{N^{-\frac{1}{2}} }
\newcommand{\Dstarneghalf}{(D^*)^{-\frac{1}{2}} }
\newcommand{\utwostar}{u_2^*}
\newcommand{\Aout}{A_{\text{out}}}
\newcommand\numberthis{\addtocounter{equation}{1}\tag{\theequation}}
\newcommand{\douti}{d_{\text{out}}^{(i)}}
\newcommand{\dini}{d_{\text{in}}^{(i)}}
\newcommand{\Ain}{A_{\text{in}}}
\newcommand{\bdin}{d_{\text{in}}}
\newcommand{\f}{f(\xi;\al,\be)}
\newcommand{\Lsymm}{\mathcal{L}^{(m)}}
\newcommand{\dminm}{d_{\min}^{(m)}}
\newcommand{\dmaxm}{d_{\max}^{(m)}}
\DeclareMathOperator{\diag}{diag}
\begin{document}

\title{\bf Strong Consistency, Graph Laplacians, and the Stochastic Block Model}

\author{Shaofeng Deng\thanks{Department of Mathematics, University of California at Davis (Email: sfdeng@math.ucdavis.edu).}, Shuyang Ling\thanks{New York University Shanghai (Email:  sl3635@nyu.edu).}~~and Thomas Strohmer\thanks{Center of Data Science and Artificial Intelligence Research and Department of Mathematics, University of California at Davis (Email: strohmer@math.ucdavis.edu).}\,\,\thanks{S.D. and T.S.\ acknowledge support from the NSF via grants DMS 1620455 and DMS 1737943.} }

\maketitle
\begin{abstract} 
Spectral clustering has become one of the most popular algorithms in data clustering and community detection. We study the performance of  classical two-step spectral clustering via the graph Laplacian  to learn the stochastic block model. 
Our aim is to answer the following question: when is spectral clustering via the graph Laplacian able to achieve strong consistency, i.e., the exact  recovery of the underlying hidden communities? 
Our work provides an entrywise analysis (an $\ell_{\infty}$-norm perturbation bound) of the Fielder eigenvector of both the unnormalized  and the normalized Laplacian associated with the adjacency matrix sampled from the stochastic block model. 
We prove that spectral clustering is able to achieve exact recovery of the planted community structure under conditions that match the information-theoretic limits. 
\smallskip
\\
{\bf Keywords:} Spectral clustering, community detection, graph Laplacian, eigenvector perturbation, stochastic block model.
\end{abstract}

\section{Introduction}



Data with network structure are ubiquitous, ranging from biological network to social and web networks~\cite{GN02,NM03}. Among many networks, one of the most significant features is community structure or clustering, i.e., a subset of vertices in a huge network are strongly connected while the inter-community connectivity is relatively weak. 
Detecting community structure in networks is one central problem across several scientific fields: how to infer the hidden community structure from the linkage among vertices? A vast amount of research has been done to solve the challenging community detection problem~\cite{F10,LFR08,GN02,NM03}. In particular, community detection with random block structure is an intriguing topic for researchers in mathematics, computer science, physics, and statistics.
One prominent example is the stochastic block model (SBM), which is originally proposed in~\cite{HLL83} to study social networks. Now it has become a benchmark model for comparing different community detection methods. A recent surge of research activities is devoted to designing a variety of algorithms and methods to either detect or recover the hidden community with emphasis on understanding the fundamental limits for community detection in connection with the SBM~\cite{Abbe17}.

On the other hand, spectral clustering is one of most widely used techniques in  data clustering. The classical spectral clustering follows the well-known two-step procedure: Laplacian eigenmap and rounding~\cite{Von07,NJW02,SM00,BN02}.
Despite its popularity and empirical success in numerous applications, its theoretical understanding is still relatively limited. The main difficulty lies in obtaining an entrywise analysis of the Fielder eigenvector of the  graph Laplacian.

In this work, we will study the performance of spectral clustering in community detection for the stochastic block model. We denote by ${\cal G}(n,p_n,q_n)$ the stochastic block model with a total of $n$ vertices and $n/2$ vertices for each community; the adjacency matrix $A = (A_{ij})_{1\leq i,j\leq n}$ of this network is a symmetric matrix which has its $(i,j)$-entry an independent Bernoulli random variable:
\[
\mathbb{P}(a_{ij} = 1) = \begin{cases}
p_n,\quad \text{ if } (i,j)\text{ are in the same community}, \\
q_n,\quad \text{ if } (i,j)\text{ are in different communities},
\end{cases}
\]
where $p_n > q_n$ for all $n$. Note that the parameters $p$ and $q$ usually depend on $n$; for simplicity, we replace $p_n$ and $q_n$ by $p$ and $q$ if there is no confusion.

We focus on answering the following fundamental questions: under what conditions on $(n,p,q)$ is the classical two-step spectral clustering method able to recover the underlying hidden communities exactly?  Moreover, we are interested in the optimality of spectral clustering: does spectral clustering work even if the triple $(n,p,q)$ is close to the information-theoretic limits?

\subsection{Related work and our contributions} 

As all three topics, community detection, spectral clustering, and stochastic block models, have received extensive attention, it is not surprising that there exists a large amount of literature on each of them. While an exhaustive literature review is beyond the scope of this paper, we will briefly review each of these topics, and highlight those contributions that have inspired our research.

Community detection for general networks is well studied and has found many applications. 
We refer interested readers to~\cite{F10,GN02,NM03} for more details on this topic. Spectral clustering~\cite{NJW02,SM00,BN02,Von07}, which is based on the  graph Laplacian~\cite{Chung97}, plays an important role in data- and network-clustering. It is closely related to finding the globally optimal ratio cut and normalized cut of a given graph. In fact, spectral clustering is a natural spectral relaxation of the NP-hard ratio/normalized cut minimization problem. Much excellent research has been done to address how well the solutions to these NP-hard problems are approximated by solutions derived from the spectra of graph Laplacians, which includes (higher-order) Cheeger-type inequalities~\cite{Chung97,LGT14}.
However, one theoretical challenge still remains: for what types of graphs is spectral clustering able to recover the globally optimal graph partitioning and the underlying communities? This is pointed out in~\cite{LR15}, where the authors state {\em ``An important future work would be to extend
	some of the results and techniques [...] to spectral clustering using the graph Laplacian''}.
The main bottleneck is the highly challenging problem of providing an entrywise analysis of the Fiedler eigenvector, the eigenvector associated with the second smallest eigenvalue of the graph Laplacian. 
In fact, this major problem regarding the entrywise analysis of Laplacian eigenvectors is also mentioned in~\cite{AFWZ17} as one future research direction.



\vskip0.2cm

The analysis of the stochastic block model originated from~\cite{HLL83} in the study of social networks. Since then, a vast amount of follow-up research has been conducted to understand how to recover the hidden planted partition with efficient polynomial-time algorithms. In particular, we are interested in the fundamental limits of detection and community recovery in the stochastic block model~\cite{Abbe17}. Here, detection is defined as providing a network clustering which is correlated with the underlying true partition~\cite{MNS18}. 
Generally speaking, the sparser the graph is, the more difficult it is to detect or recover the underlying communities. For the model $\mathcal{G}(n,p,q)$ we call the rate at which $p$ and $q$ tend to 0 their sparsity regimes. 
The detection threshold is usually studied for sparser graphs, in particular in the regime $p =a n^{-1} $ and $q = bn^{-1}$ with $a > b$. The work~\cite{DKMZ11} applied the cavity method, a heuristic from statistical physics, to predict that a detection threshold exists for the community detection problem under stochastic block models. Later on, this detection threshold is confirmed by~\cite{MNS15,MNS18,M14}: the detection of community is possible if and only if $(a-b)^2 > 2(a+b)$.

Another line of work on the stochastic block model focuses on correctly recovering from the adjacency matrix the true label of each 
vertex~\cite{Abbe17,ABH16,GV16,HWX16,Vu18,B18,AminiL18,BC09}, which is only possible in denser regimes.  
We say an algorithm achieves weak consistency (or almost exact recovery) if with probability $1-o(1)$, the proportion of misclassified nodes goes to 0 as $n$ goes to infinity. The weak consistency of spectral method in learning stochastic block model is discussed in~\cite{RCY11,LR15,Yun2014AccurateCD,mossel_consistency_2016}. Strong consistency (or exact recovery) on the other hand requires no misclassified node with probability $1-o(1)$. The concept of strong consistency was introduced and investigated in~\cite{BC09}, which is followed by a series of work including a sharp theoretical threshold~\cite{ABH16,mossel_consistency_2016} in the critical regime $p = \alpha n^{-1}\log n$ and $q = \beta n^{-1}\log n$.
This fundamental threshold states that maximal likelihood estimation (MLE) achieves strong consistency if $\sqrt{\alpha} - \sqrt{\beta} > \sqrt{2}$ and no algorithm can achieve strong consistency if $\sqrt{\alpha} - \sqrt{\beta} < \sqrt{2}$. Among all the existing approaches, semidefinite programming relaxation has proven to be a powerful tool for exact recovery~\cite{GV16,ABH16,HWX16,B18,MS16}. 
In particular, in~\cite{HWX16,B18} it has been shown that SDP relaxation will  find the underlying hidden partition exactly if $\sqrt{\alpha} - \sqrt{\beta} > \sqrt{2}$ with high probability, which is optimal in terms of the information-theoretic limit~\cite{ABH16,mossel_consistency_2016}.

The success of SDP relaxation always comes with a high price: its expensive computational costs are the main roadblock towards practical application. Instead, spectral methods~\cite{BO87,MF01,CO10,RCY11,ZA19,LR15,mossel_consistency_2016,Vu18,Yun2014AccurateCD} are sometimes preferred when tackling large-scale community detection problems. Some spectral methods perform the clustering tasks via the eigenvector of the adjacency matrix or the Laplacian: if the adjacency matrix (Laplacian) is close to its expectation whose eigenvector reveals the hidden partition~\cite{FO05}, then the eigenvector of the adjacency matrix (Laplacian) contains important information which can be used to infer the hidden partition. 
With the help of classical $\ell_2$-norm eigenvector perturbation, mainly based on the Davis-Kahan theorem~\cite{DK70}, one can prove the correct recovery of the majority of the labels by simply taking the sign of the eigenvectors. However, matrix perturbation under $\ell_2$-norm, in spite of its convenience, becomes rather limited in studying the exact recovery of hidden community structure. 
The Davis-Kahan theorem does not give a satisfactory bound of how many labels are correctly classified because $\ell_2$-norm perturbation analysis does not yield a sufficiently tight bound on each entry of the eigenvector. 

As a result, we prefer an $\ell_{\infty}$-norm perturbation bound of eigenvectors when we are concerned with exact recovery. However, it is much more challenging to get an $\ell_{\infty}$-norm perturbation bound for eigenvectors of general matrices.  
Fortunately, recent years have witnessed a series of excellent contributions on  the entrywise analysis of eigenvectors for a family of random matrices~\cite{FWZ18,EBW18,AFWZ17,su2019strong}. Our approach is mainly inspired by the work of Abbe and his co-authors (see~\cite{AFWZ17}), who give an entrywise analysis of eigenvectors with interesting applications in $\mathbb{Z}_2$-synchronization, community detection, and matrix completion. In particular, one application of their work shows that the second eigenvector of the adjacency matrix is strongly consistent down to theoretical limit. The major technical breakthrough is the so-called {\em leave-one-out trick}. One can also find applications of this trick in other examples including synchronization~\cite{ZB18} and the analysis of nonconvex optimization algorithms in signal processing~\cite{MWYC18}. 

It is well worth noting that the result in~\cite{AFWZ17} mainly focuses on studying the eigenvectors of the adjacency matrix which enjoys row/column-wise independence.
However, in our case, the graph Laplacian no longer has this independence. Thus, new techniques need to be developed to overcome this challenge. In~\cite{su2019strong}, the authors study a graph Laplacian based method and prove its strong consistency in the critical regime $p = \alpha n^{-1}\log n$ and $q = \beta n^{-1}\log n$. But they do not show strong consistency for all constants down to theoretical limit $\sqrt{\alpha} - \sqrt{\beta} > \sqrt{2}$.

In this work, we establish an $\ell_{\infty}$-norm perturbation bound for the Fiedler eigenvector of both the unnormalized Laplacian and the normalized Laplacian associated with the stochastic block model. We prove that spectral clustering is able to achieve strong consistency when the triple $(n,p,q)$ satisfies the information-theoretic limits $\sqrt{\alpha} - \sqrt{\beta} > \sqrt{2}$ in~\cite{ABH16,mossel_consistency_2016} where $p = \alpha n^{-1}\log n$ and $q = \beta n^{-1}\log n$. In particular, our analysis of the normalized Laplacian is new and should be of independent interest.

\vskip0.5cm


\subsection{Organization of our paper}
Our paper is organized as follows. Section~\ref{s:prelim} reviews the basics of graph Laplacians, spectral clustering, as well as perturbation theory. We will present the main results, including the strong consistency of spectral clustering, in Section~\ref{s:main}. Numerical experiments are given in Section~\ref{s:numerics} which complement our theoretical analysis. The proofs are delegated to Section~\ref{s:proof}.

\subsection{Notation}
We introduce some notation which will be used throughout this paper.
For any vector $x\in\mathbb{C}^n$, we define $\|x\|_{\infty} = \max_{i}|x_i|$ and $\|x\| = \sqrt{\sum_{i=1}^n x_i^2}$.
For any matrix $M\in\mathbb{C}^{n\times m}$, we denote its conjugate transpose by $M^H$ and its Moore-Penrose inverse by $M^+$. Let $\irow{M}$ be the $i$th row of $M$, which is a row vector.
Let $\|M\|=\max_{\norm{x}=1}\norm{Mx}$ denote the spectral norm, $\|M\|_F : = \sqrt{\sum_{i,j}\left| M_{ij}\right| ^2}$ denote the Frobenius norm and $\normtwotoinf{M}=\max_{\norm{x}=1}\norminf{Mx}=\max_{i}\norm{\irow{M}}$ denote the two-to-infinity norm.
We denote by $\ones_n$ the $n\times 1$ vector with all entries being 1 and let $J_n = \ones_n\ones_n^{\top}$ be the $n\times n$ matrix of all ones. Furthermore, the vector $\sgn{x}$ denotes the entrywise sign of the vector $x$ and $\diag(x)$ denotes a diagonal matrix whose diagonal entries are the same as the vector $x$. Let $f(n)$ and $g(n)$ be two functions. We say $f(n) = O(g(n))$ if $|f(n)|\leq C|g(n)|$ for some positive constant $C$ and $f(n) = o(g(n))$ if $\lim_{n\rightarrow\infty} |f(n)|/|g(n)| = 0$. Moreover, $f(n) = \Omega(g(n))$ if $g(n) = O(f(n))$, $f(n) = \omega(g(n))$ if $g(n) = o(f(n))$, $f(n) = \Theta(g(n))$ if $g(n) = O(f(n))$ and $f(n) = O(g(n))$.

\section{Preliminaries}\label{s:prelim}

\subsection{The Laplacian and spectral clustering}
In this section, we briefly review the basics of spectral clustering which will be frequently used in the discussion later. 
Let $A\in\mathbb{R}^{n\times n}$ be the adjacency matrix where $A_{ij}=1$ if node $i$ and node $j$ are connected and $A_{ij}=0$ if node $i$ and node $j$ are not connected. Let $D=\text{diag}(A\ones_n)$ be the diagonal matrix where $D_{ii}$ is the degree $d_i$ of node $i$, i.e., $d_i = \sum_{j=1}^n A_{ij}$. The unnormalized and normalized Laplacians are defined as 
\[
L:=D-A, \qquad \Lsym : =\Dneghalf L\Dneghalf
\]
respectively. It is a well-known result~\cite{Chung97} that both $L$ and $\Lsym$ are positive semidefinite. Moreover, their smallest eigenvalue is 0 and the corresponding eigenvectors are $\ones_n$ and $\Dhalf\ones_n$, respectively. 

We say $(\lambda,u)$ is an eigenpair of the generalized eigenvalue problem $(M,N)$ if
$$Mu=\lambda Nu.$$
If $N=I$ is the identity then we say $(\lambda,u)$ is an eigenpair of $M$. All eigenvectors are normalized to have unit length if not specifically specified. The unnormalized spectral clustering involves solving the eigenvalue problem $(L,I)$ and the normalized spectral clustering takes many forms due to the following fact.
\begin{align*}
(\lambda,\Dhalf u)\text{ is an eigenpair of }(\Lsym,I)\iff&(\lambda,u)\text{ is an eigenpair of }(L,D)\\
\iff&(\lambda,u)\text{ is an eigenpair of }(\Dneg L,I)\\
\iff&(1-\lambda,u)\text{ is an eigenpair of }(A,D)\\
\iff&(1-\lambda,u)\text{ is an eigenpair of }(\Dneg A,I).
\end{align*}

We order the eigenvalues of $(L,I)$, $(\Lsym,I)$, $(L,D)$, $(\Dneg L,I)$ in increasing order and those of $(A,D)$, $(\Dneg A,I)$ in decreasing order to keep them in correspondence.
\vskip0.25cm

Spectral clustering consists of two steps: (i) compute the Fiedler eigenvector $u$ (here, with a slight abuse of terminology, we call both the eigenvectors with respect to the second smallest eigenvalue of the unnormalized Laplacian $L = D - A$ and of the random walk normalized Laplacian 
$I - D^{-1}A$ the Fiedler eigenvector);
(ii) apply rounding techniques to $u$ to obtain the clusters. In particular, in this paper we simply assign the membership of node $i$ by taking the sign of $u_i$. The spectral clustering algorithm is illustrated for the unnormalized Laplacian and the normalized Laplacian in Algorithm~\ref{alg:L} and Algorithm~\ref{alg:NL}, respectively, see also~\cite{Von07,SM00}.

\begin{algorithm}[h!]
	\caption{Unnormalized spectral clustering}\label{unnormalized}
	\begin{algorithmic}[1]
		\State {\bf Input:} Adjacency matrix $A$.
		\State Compute the unnormalized graph Laplacian $L = D- A$.
		\State Find the eigenvector $u$ of $(L,I)$ that corresponds to the second smallest eigenvalue.
		
		\State Obtain the partitioning based on sgn($u$).
	\end{algorithmic}
	\label{alg:L}
\end{algorithm}
\begin{algorithm}[h!]
	\caption{Normalized spectral clustering}\label{normalized}
	\begin{algorithmic}[1]
		\State {\bf Input:} Adjacency matrix $A$.
		\State Compute the unnormalized graph Laplacian $L = D- A$.
		\State Find the eigenvector $u$ of $(L,D)$ that corresponds to the second smallest eigenvalue. 
		
		\State Obtain the partitioning based on sgn($u$).
	\end{algorithmic}
	\label{alg:NL}
\end{algorithm}

\subsection{Perturbation theory}
Suppose $A$ is an adjacency matrix sampled from two-community symmetric stochastic block model $\mathcal{G}(n,p,q)$. Without loss of generality, we assume the first $n/2$ nodes form one community and the second half nodes form the other one. Let $A^*=\E{A}$ be the expectation of $A$, and then we have
\[
A^* = \begin{pmatrix}
pJ_{n/2} & q J_{n/2} \\
qJ_{n/2} & pJ_{n/2}
\end{pmatrix}
\]
where $p > q.$
Let
\[
D^* := \frac{n(p+q)}{2}I_n, \quad L^* := D^* - A^*, \quad {\cal L}^* := I_n - \frac{2}{n(p+q)}A^*,
\]
which correspond to the degree matrix, unnormalized Laplacian, and normalized Laplacian associated with $A^*.$ 
Then 
$$u_2^*=\frac{1}{\sqrt{n}}\begin{pmatrix}
\ones_{n/2}\\
-\ones_{n/2}
\end{pmatrix}$$
is the eigenvector that corresponds to the second smallest eigenvalue of both $L^*$ and $(L^*, D^*)$. Now one can easily see that running spectral clustering based on $A^*$ gives the perfect result since sgn$(u_2^*)$ exactly recovers the underlying partition. Seeing $A$ as perturbed $A^*$, we study how the eigenvalues and eigenvectors of $L$ (or $\Lsym$) differ from those of $L^*$ (or $\Lsym^*$). For eigenvalue perturbation, we resort to the well-known min-max principle, which gives rise to the famous Weyl's inequality.
\begin{theorem}[{\bf Courant-Fischer-Weyl min-max/max-min principles}]\label{thm:min-max}
	Let $A$ be an $n\times n$ Hermitian matrix with eigenvalues $\lambda_1\leq\cdots\leq\lambda_t\leq\cdots\leq\lambda_n$. For any $d\in\lc1,2,\cdots,n\rc$, write $\mathcal{V}_d$ for the $d$-dimensional subspace of $\mathbb{C}^n$. Then
	$$\lambda_t=\min_{V\in\mathcal{V}_t}\max_{x\in V\backslash\{0\}}\frac{\inner{x}{Ax}}{\inner{x}{x}}=\max_{V\in\mathcal{V}_{n-t+1}}\min_{x\in V\backslash\{0\}}\frac{\inner{x}{Ax}}{\inner{x}{x}}.$$
\end{theorem}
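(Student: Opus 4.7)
The plan is to prove the min-max and max-min identities using the spectral theorem for Hermitian matrices together with a standard dimension-counting argument. Let $v_1,\ldots,v_n$ be an orthonormal eigenbasis of $A$ with $Av_i=\lambda_i v_i$, so every $x\in\mathbb{C}^n$ has the expansion $x=\sum_{i=1}^n c_i v_i$ and the Rayleigh quotient takes the diagonal form
\[
\frac{\inner{x}{Ax}}{\inner{x}{x}}=\frac{\sum_{i=1}^n \lambda_i |c_i|^2}{\sum_{i=1}^n |c_i|^2}.
\]
In particular, if $x$ lies in $\operatorname{span}(v_{j_1},\ldots,v_{j_k})$ with $j_1\le\cdots\le j_k$, the quotient is a convex combination of $\lambda_{j_1},\ldots,\lambda_{j_k}$ and therefore lies in $[\lambda_{j_1},\lambda_{j_k}]$. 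This is the only workhorse I will use.

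For the min-max identity I will establish matching upper and lower bounds. For the upper bound, I choose the explicit subspace $V_\ast=\operatorname{span}(v_1,\ldots,v_t)\in\mathcal{V}_t$; on $V_\ast$ the Rayleigh quotient is bounded above by $\lambda_t$, with equality attained at $x=v_t$. This yields $\min_{V\in\mathcal{V}_t}\max_{x\in V\setminus\{0\}}\frac{\inner{x}{Ax}}{\inner{x}{x}}\le \lambda_t$. For the lower bound, I take an arbitrary $V\in\mathcal{V}_t$ and set $W=\operatorname{span}(v_t,v_{t+1},\ldots,v_n)$, which has dimension $n-t+1$. Since $\dim V+\dim W=n+1>n$, there is a nonzero $x\in V\cap W$. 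Expanding $x$ in the basis $v_t,\ldots,v_n$, the Rayleigh quotient is a convex combination of $\lambda_t,\ldots,\lambda_n$ and is therefore at least $\lambda_t$. Hence $\max_{x\in V\setminus\{0\}}\frac{\inner{x}{Ax}}{\inner{x}{x}}\ge\lambda_t$ for every $V\in\mathcal{V}_t$, which gives the matching lower bound.

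The max-min identity follows by the symmetric argument: for the lower bound, pick the explicit subspace $W_\ast=\operatorname{span}(v_t,v_{t+1},\ldots,v_n)\in\mathcal{V}_{n-t+1}$, on which the Rayleigh quotient is bounded below by $\lambda_t$; for the upper bound, intersect an arbitrary $W\in\mathcal{V}_{n-t+1}$ with $\operatorname{span}(v_1,\ldots,v_t)$, which has dimension $t$, and apply the same dimension count $(n-t+1)+t=n+1>n$ to produce a nonzero vector whose Rayleigh quotient is at most $\lambda_t$. Alternatively, one can simply apply the min-max identity already proved to $-A$, whose eigenvalues are $-\lambda_n\le\cdots\le -\lambda_1$, and reindex.

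The argument is essentially obstacle-free; the only conceptual step is the dimension count producing a nonzero intersection of two subspaces whose dimensions sum to more than $n$. Everything else is linear algebra on the eigenbasis. I would present the proof in the order: spectral decomposition, upper bound via an explicit subspace, lower bound via intersection, then state that the max-min identity is obtained by the symmetric argument (or by applying the min-max identity to $-A$).
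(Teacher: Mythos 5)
Your proof is correct and complete: the spectral decomposition puts the Rayleigh quotient in diagonal form, the explicit subspace $\operatorname{span}(v_1,\ldots,v_t)$ gives the upper bound, the dimension-count $\dim V + \dim W = n+1 > n$ forces a nonzero vector in $V\cap\operatorname{span}(v_t,\ldots,v_n)$ for the lower bound, and the max-min half follows by symmetry (or by applying the min-max half to $-A$). This is the standard textbook argument, and there are no gaps. Note, however, that the paper does not actually prove Theorem~\ref{thm:min-max}: it states the Courant--Fischer--Weyl principle as a classical result (alongside Weyl's inequality, Theorem~\ref{thm:weyl}) and uses it as a black box, for instance in the proofs of Theorems~\ref{thm:lambda(L)} and~\ref{thm:lambda(Lsym)}. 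So there is no paper proof to compare against; your write-up would simply be supplying a proof the authors chose to omit.
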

\begin{theorem}[{\bf Weyl}]\label{thm:weyl}
	Let $A$ be an $n\times n$ Hermitian matrix with eigenvalues $\lambda_1\leq\cdots\leq\lambda_n$. Let $B$ be an $n\times n$ Hermitian matrix with eigenvalues $\mu_1\leq\cdots\leq\mu_n$. Suppose the eigenvalues of $A+B$ are $\rho_1\leq\cdots\leq\rho_n$. Then for $i\in\lc1,2,\cdots,n\rc$,
	$$\lambda_i+\mu_1\leq\rho_i\leq\lambda_i+\mu_n.$$
\end{theorem}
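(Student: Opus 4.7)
The plan is to derive both inequalities directly from the Courant-Fischer min-max characterization provided by Theorem~\ref{thm:min-max}, combined with the standard Rayleigh-quotient bounds
$$\mu_1 \;\leq\; \frac{\inner{x}{Bx}}{\inner{x}{x}} \;\leq\; \mu_n \qquad \text{for all } x \in \mathbb{C}^n \setminus \{0\},$$
which are themselves immediate consequences of Theorem~\ref{thm:min-max} applied to $B$ alone (the right-hand inequality is the $t=n$ case and the left-hand one is the $t=1$ case after specializing to one-dimensional subspaces, where the inner $\max$ becomes trivial).

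For the upper bound $\rho_i \leq \lambda_i + \mu_n$, I would apply the min-max formula to $A+B$ and split the Rayleigh quotient:
$$\rho_i \;=\; \min_{V \in \mathcal{V}_i} \max_{x \in V \setminus \{0\}} \frac{\inner{x}{(A+B)x}}{\inner{x}{x}} \;=\; \min_{V \in \mathcal{V}_i} \max_{x \in V \setminus \{0\}} \left(\frac{\inner{x}{Ax}}{\inner{x}{x}} + \frac{\inner{x}{Bx}}{\inner{x}{x}}\right).$$
Bounding the second summand above by the constant $\mu_n$ uniformly in $x$ lets that constant be pulled outside both the inner $\max$ and the outer $\min$, reducing what remains to exactly the min-max expression for $\lambda_i$. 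This gives $\rho_i \leq \lambda_i + \mu_n$.

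For the lower bound $\lambda_i + \mu_1 \leq \rho_i$, I would run the identical argument with the opposite Rayleigh-quotient inequality $\inner{x}{Bx}/\inner{x}{x} \geq \mu_1$, which survives the monotone $\min$ and $\max$ operations and produces the desired bound. An equally quick alternative is a symmetry trick: apply the already-proved upper bound to the decomposition $A = (A+B) + (-B)$. Since the eigenvalues of $-B$ in increasing order are $-\mu_n \leq \cdots \leq -\mu_1$, this yields $\lambda_i \leq \rho_i - \mu_1$, which rearranges to the claim.

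There is no real obstacle here: the entire argument is a two-line manipulation once Theorem~\ref{thm:min-max} is in hand. The only point requiring mild care is verifying that replacing one summand of the Rayleigh quotient by a uniform constant bound preserves the inequality through the nested $\max$ over $x$ and $\min$ over $V$, which is immediate by monotonicity of both operations.
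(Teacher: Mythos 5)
The paper states Weyl's inequality as a classical fact and does not include a proof; there is nothing in Section~5 corresponding to Theorem~\ref{thm:weyl}. Your argument is a correct and standard derivation from the Courant--Fischer principle stated as Theorem~\ref{thm:min-max}: the split of the Rayleigh quotient of $A+B$ into the $A$-part and the $B$-part, the uniform bounds $\mu_1 \leq \inner{x}{Bx}/\inner{x}{x} \leq \mu_n$, and the observation that adding a constant commutes with the nested $\max$/$\min$ are exactly what is needed, and each step is sound. The symmetry trick you mention for the lower bound, applying the already-proved upper bound to $A = (A+B)+(-B)$ and noting that the $i$-th smallest eigenvalue of $-B$ is $-\mu_{n-i+1}$, is also valid and perhaps the cleanest way to avoid writing the monotonicity argument twice. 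In short, since the paper supplies no proof, your proposal cannot diverge from it, and what you wrote is the textbook proof one would expect to accompany the cited min-max principle.
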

For eigenvector perturbation, the Davis-Kahan theorem plays a powerful role in our analysis. Here we state a version of it that allows us to deal with generalized eigenvalue problems, which is particularly useful in the case of normalized spectral clustering. The following theorem essentially follows from the results in~\cite{eisenstat_relative_1998}, but we will give a self-contained proof in Section 5.
\begin{theorem}[{\bf Generalized Davis-Kahan theorem}]\label{thm:DK}
	Consider the generalized eigenvalue problem $Mu=\lambda Nu$ where $M$ is Hermitian and $N$ is Hermitian positive definite. It has the same eigenpairs as the problem $N^{-1}Mu=\lambda u$. Let $X$ be the matrix that has the eigenvectors of $N^{-1}M$ as columns. Then $N^{-1}M$ is diagonalizable and can be written as \begin{align*}
	N^{-1}M=X\Lambda X^{-1}=X_1\Lambda_1 Y_1^H+X_2\Lambda_2 Y_2^H
	\end{align*}
	where 
	$$X^{-1}=\begin{pmatrix}
	X_1 & X_2
	\end{pmatrix}^{-1}=\begin{pmatrix}
	Y_1^H \\ Y_2^H
	\end{pmatrix},\;\;\;\;\Lambda=\begin{pmatrix}
	\Lambda_1&\\
	&\Lambda_2
	\end{pmatrix}.$$
	
	Suppose $\delta=\min_i|\lp\Lambda_2\rp_{i,i}-\hat{\lambda}|$ is the absolute separation of $\hat{\lambda}$ from $\Lambda_2$, then for any vector $\hat{u}$ we have
	$$||P\hat{u}||\leq\frac{\sqrt{\kappa(N)}||(N^{-1}M-\hat{\lambda}I)\hat{u}||}{\delta}$$
	where 
	$P=(Y_2^+)^HY_2^H=I-(X_1^+)^HX_1^H$ is the orthogonal projection matrix onto the orthogonal complement of the column space of $X_1$, $\kappa(N)=||N|| \cdot ||N^{-1}||$ is the condition number of $N$ and $Y_2^+$ is the Moore-Penrose inverse of $Y_2$.
\end{theorem}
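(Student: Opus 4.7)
The plan is to reduce the claim to the standard Hermitian Davis-Kahan inequality via the symmetrization $\tilde{M} := N^{-1/2} M N^{-1/2}$, which is well-defined and Hermitian because $N$ is Hermitian positive definite. The identity $N^{-1/2} \tilde{M} N^{1/2} = N^{-1}M$ exhibits $N^{-1}M$ as similar to $\tilde{M}$: every eigenpair $(\lambda, v)$ of $\tilde{M}$ corresponds to an eigenpair $(\lambda, N^{-1/2} v)$ of $N^{-1}M$. Taking the spectral decomposition $\tilde{M} = V \Lambda V^H$ with $V = (V_1, V_2)$ unitary, one obtains the concrete identifications $X_i = N^{-1/2} V_i$ and $Y_i = N^{1/2} V_i$; a quick check confirms $Y_j^H X_i = V_j^H V_i = \delta_{ij} I$, matching the block structure in the statement.

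Next I would unpack the projector $P$. Since $X_1$ has full column rank, $X_1 X_1^+ = X_1 (X_1^H X_1)^{-1} X_1^H$ is Hermitian, so $(X_1^+)^H X_1^H = X_1 X_1^+$ and $P = I - X_1 X_1^+$ is the orthogonal projection onto $\mathrm{range}(X_1)^\perp$. An analogous computation shows $(Y_2^+)^H Y_2^H = Y_2 Y_2^+$ is the orthogonal projection onto $\mathrm{range}(Y_2)$; the block relation $X^{-1}X = I$ forces $Y_2^H X_1 = 0$, and matching dimensions then imply the two subspaces coincide. The dual description $P = Y_2 Y_2^+$ will be the convenient one below.

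The core estimate is the standard Hermitian Davis-Kahan bound applied to $\tilde{M}$. For any vector $w$, decomposing $w = V_1 a_1 + V_2 a_2$ yields $\|(\tilde{M} - \hat\lambda I)w\|^2 \geq \|(\Lambda_2 - \hat\lambda I) a_2\|^2 \geq \delta^2 \|a_2\|^2 = \delta^2 \|V_2 V_2^H w\|^2$. I would apply this with $w = \tilde{u} := N^{1/2}\hat{u}$ and use the similarity $(\tilde{M} - \hat\lambda I)N^{1/2} = N^{1/2}(N^{-1}M - \hat\lambda I)$ to bound the right-hand side by $\|N^{1/2}\|\,\|(N^{-1}M - \hat\lambda I)\hat{u}\|$.

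To transfer $V_2 V_2^H \tilde{u}$ back to $P\hat{u}$, I would write $\hat{u} = X_1 c_1 + X_2 c_2$ with $c_2 = Y_2^H \hat{u} = V_2^H \tilde{u}$; since $PX_1 = 0$, this gives $P\hat{u} = P X_2 c_2$ and hence $\|P\hat{u}\| \leq \|X_2\|\,\|c_2\| \leq \|N^{-1/2}\|\,\|V_2 V_2^H \tilde{u}\|$. Assembling the two estimates produces the factor $\|N^{-1/2}\|\cdot\|N^{1/2}\| = \sqrt{\kappa(N)}$ and yields the claimed bound. The main subtlety — and essentially the only place care is required — is reconciling the Euclidean projection $P$ onto $\mathrm{range}(X_1)^\perp$ with the $V$-orthonormal projection $V_2 V_2^H$ that arises naturally from $\tilde{M}$; the $\sqrt{\kappa(N)}$ factor is precisely the distortion between these two geometries induced by the change of variables $\tilde{u} = N^{1/2}\hat{u}$.
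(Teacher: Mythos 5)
Your proposal is correct and takes essentially the same route as the paper: both symmetrize via $N^{\pm1/2}$ (the paper uses the non-Hermitian factor $N^{1/2}=\Sigma^{1/2}V^H$, you use the Hermitian one, but these agree up to a unitary conjugation), reduce to a Hermitian residual estimate on $\widetilde{M}$, and trace the $\sqrt{\kappa(N)}$ factor to the change of variables $\hat{u}\mapsto N^{1/2}\hat{u}$. The only cosmetic difference is that the paper directly solves $\hat{s}=(\Lambda_2-\hat{\lambda}I)^{-1}Y_2^H r$ from the residual and then bounds $P\hat{u}=(Y_2^+)^H\hat{s}$, whereas you invoke the standard Hermitian Davis--Kahan inequality for $\widetilde{M}$ and translate back; you also take the extra (useful) step of verifying that $(Y_2^+)^H Y_2^H = I-(X_1^+)^H X_1^H$, which the paper states but does not check.
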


When $N=I$ and $(\hat{\lambda},\hat{u})$ is the eigenpair of a matrix $\hat{M}$, we have
$$\sin\theta\leq\frac{\norm{(M-\hat{M})\hat{u}}}{\delta},$$
where $\theta$ is the canonical angle between $\hat{u}$ and the column space of $X_1$. In this case Theorem~\ref{thm:DK} reduces to Davis and Kahan's $\sin\theta$ theorem~\cite{DK70}.

\section{Main results}\label{s:main}
The main goal of this paper is to show that both the unnormalized and normalized spectral clustering achieve strong consistency for the model $\mathcal{G}(n,p,q)$ when $p = \al\log n/n$, $q=\be\log n/n$ and $\cond$. To this end, we develop an entrywise analysis of the Fiedler eigenvector of the unnormalized and normalized Laplacian. But before we talk about the eigenvectors, it is important to ensure that the eigenvalues are properly ``separated''. Here the separation of eigenvalues means the perturbations of the eigenvalues of $L$ (or $\Lsym$) away from those of $L^*$ (or $\Lsym^*$) are smaller than the eigengaps of $L^*$ (or $\Lsym^*$). This is to ensure the second eigenvector ``comes from'' $\utwostar$ and it is essential when applying the Davis-Kahan theorem. Since the first eigenvalue of $L$ or $\Lsym$ is not perturbed at all, we want the second and the third eigenvalue to be separated. Specifically, we want
$$(\lambda_3^*-\lambda_3)+(\lambda_2-\lambda_2^*)<\lambda_3^*-\lambda_2^*. $$
This is where the behaviors of the unnormalized and normalized Laplacian differ greatly. For the normalized Laplacian, we first present a concentration bound for $\norm{\Lsym-\Lsym^*}$ in Section 3.1 that is tighter than the ones in existing literature. This bound gives $\norm{\Lsym-\Lsym^*}=\bigo{1/\sqrt{\log n}}$ while the eigengap $\lambda_3(\Lsym^*)-\lambda_2(\Lsym^*)=\Theta(1)$. Therefore Weyl's theorem automatically ensures the separation of $\lambda_2(\Lsym)$ and $\lambda_3(\Lsym)$. For the unnormalized Laplacian $L$, we have $L-L^*=(D-D^*)-(A-A^*)$. By Lemma~\ref{lem:A-concen} we can bound $\norm{A-A^*}=O(\sqrt{\log n})$. Moreover one can use the Chernoff bound to show that $\norm{D-D^*}=O(\log n)$. Thus $\norm{L-L^*}=O(\log n)$. Noting that the eigengap $\lambda_3(L^*)-\lambda_2(L^*)=\Theta(\log n)$, one can not draw an immediate conclusion that $\lambda_2(L)$ and $\lambda_3(L)$ are separated. We will discuss how to resolve this difficulty in Section 3.2, where we bound the eigenvalues of $L$ and $\Lsym$ in a more general setting. In short, we are able to find $\lambda_2(L)\leq\beta\log n+\bigo{\log n/\sqrt{n}}$ and $\lambda_3(L)\geq (\beta+\epsilon)\log n$ for some $\epsilon>0$, which shows that the eigenvalues are indeed separated.

Finally we give entrywise bounds for the second eigenvector of $L$ and $(L,D)$. Our analysis is mostly inspired by the work of~\cite{AFWZ17} as well as the leave-one-out technique in ~\cite{ZB18,MWYC18}. The core is to find an appropriate approximation to the second eigenvector of $L$ or $(L,D)$. Denote by $\tilde{u}_2$ the choice of approximation and $u_2$ the output eigenvector of the algorithm. An admissible candidate of $\tilde{u}_2$ should satisfy the following two properties:
\begin{enumerate}[(i)]
	\item The entrywise error between $u_2$ and $\tilde{u}_2$ is negligible.
	\item The entries of $\tilde{u}_2$ exactly recover the planted communities and are sufficiently bounded away from zero.
\end{enumerate}

We choose the following particular choices of $\tilde{u}_2$ for the unnormalized and the normalized spectral clustering. 
\begin{itemize}
	\item For the unnormalized spectral clustering, we let
	\begin{equation*}
	\tilde{u}_2=(D-\lambda_2(L)I)^{-1}A\utwostar.
	\end{equation*}
	\item For the normalized spectral clustering, we let
	\begin{equation*}
	\tilde{u}_2=(1-\lambda_2(\Lsym))^{-1}\Dneg A\utwostar. 
	\end{equation*}
	
\end{itemize}
While more detailed discussion will be provided in Section 3.3 on how to prove the two properties of $\tilde{u}_2$, we first present a numerical illustration in Figure~\ref{main}, which implies that these two choices are indeed satisfactory.

\begin{figure}
	
	\begin{subfigure}[h]{0.5\linewidth}
		\includegraphics[width=\linewidth]{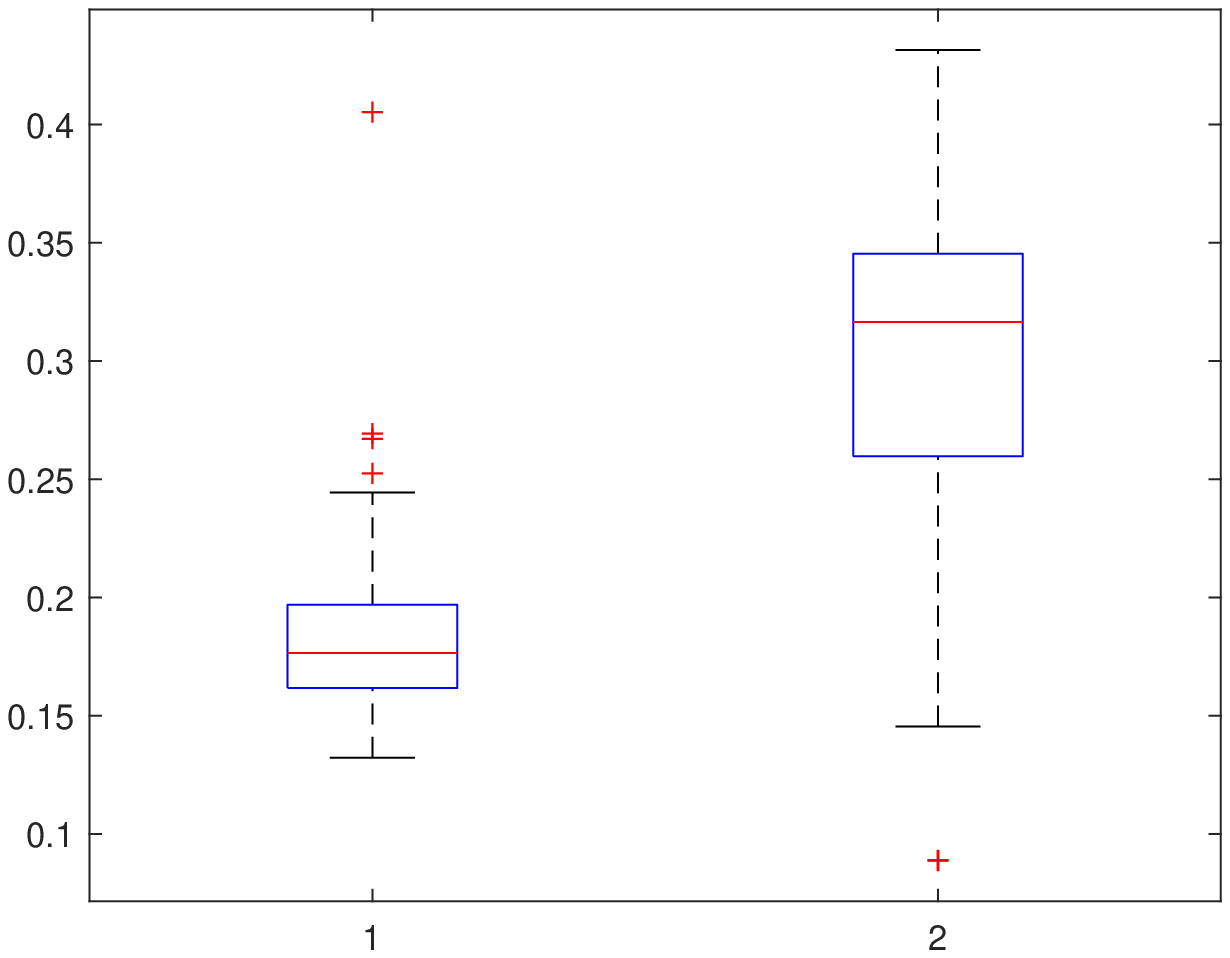}
		\caption{Unnormalized spectral clustering}
	\end{subfigure}
	\hfill
	\begin{subfigure}[h]{0.5\linewidth}
		\includegraphics[width=\linewidth]{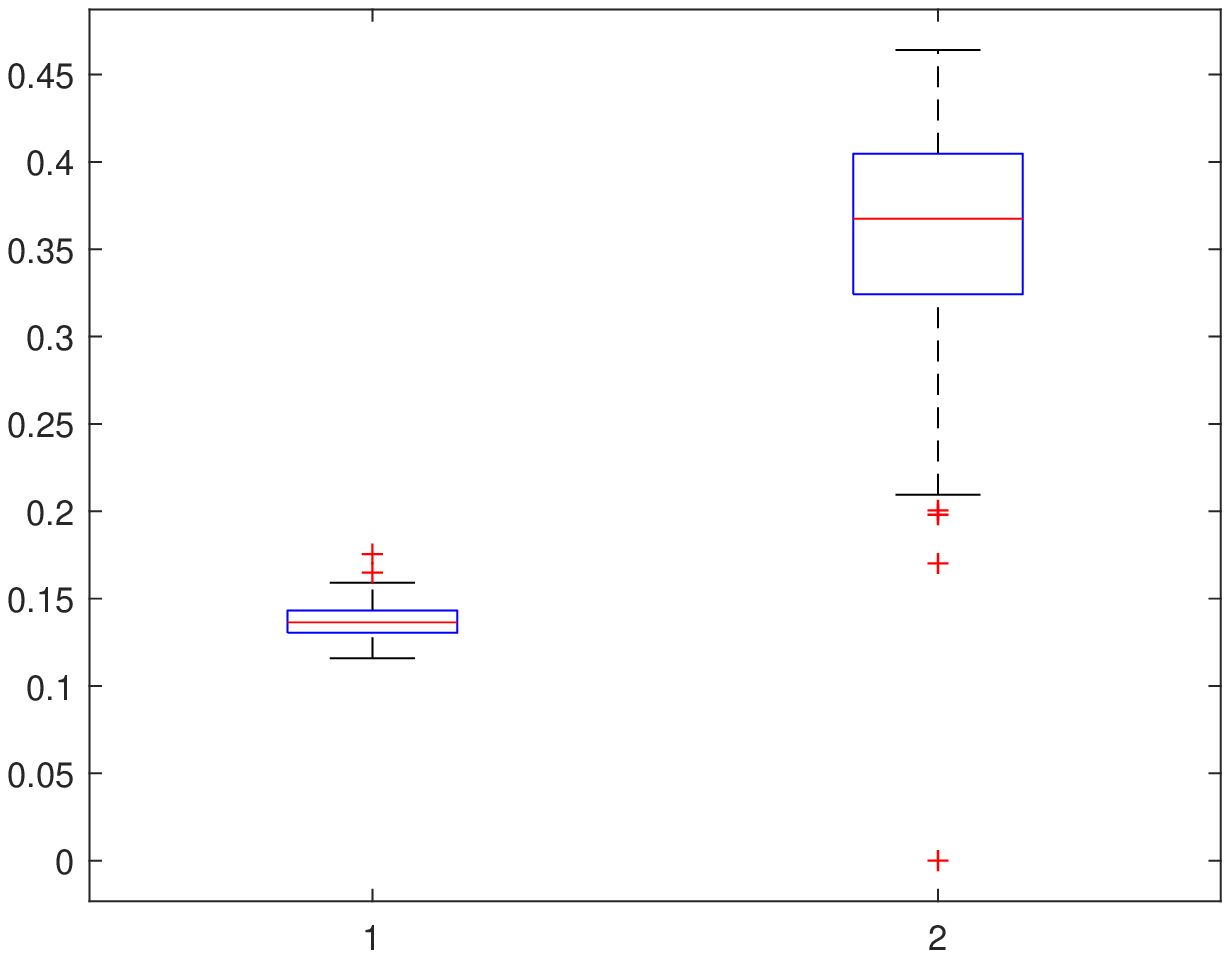}
		\caption{Normalized spectral clustering}
	\end{subfigure}%
	
	\caption{Boxplots  showing the two properties of the approximation $\tilde{u}_2$. For unnormalized spectral clustering $\tilde{u}_2=(D-\lambda_2(L)I)^{-1}A\utwostar$ and for the normalized spectral clustering $\tilde{u}_2=(1-\lambda_2(\Lsym))^{-1}\Dneg A\utwostar$. We fix $n=5000$, $\al=10$, $\be=2$ and the number of trails to be 100. Two quantities (up to sign of $u_2$) are shown in the boxplots: (1) $\sqrt{n}\norminf{u_2-\tilde{u}_2}$; (2) $\sqrt{n}\min\lc z_i(\tilde{u}_2)_i\rc_{i=1}^n$ where $z_i=1$ for $i\leq n/2$ and $z_i=-1$ for $i\geq n/2+1$.}
	\label{main}
\end{figure}

\subsection{Concentration of the normalized Laplacian}
In this section we assume $A$ is an instance of the inhomogeneous Erd\H{o}s-R\'enyi graph on $n$ nodes where node $i$ and $j$ are linked with probability $p_{ij}$. We have the following concentration result for the normalized Laplacian.

\begin{theorem}\label{thm:Lsym-concen}
	Let $A$ be the adjacency matrix of a random graph on $n$ nodes whose edges are sampled independently. Let $A^*=\E{A}=(p_{ij})_{i,j=1,2,\cdots,n}$. Let $\Lsym$ and $\Lsym^*$ be the normalized Laplacian of $A$ and $A^*$ respectively. Assume that $n\max_{ij}p_{ij}\geq c_0\log n$ for some $c_0\geq 1$. Then for any $r>0$, there exists $C=C(c_0,r)$ such that
	\[
	\norm{\Lsym-\Lsym^*}\leq\frac{C\lp n\max_{ij}p_{ij}\rp^{5/2}}{\min\{d_{\min},d^*_{\min}\}^3}
	\] 
	with probability at least $1-n^{-r}$. Here $d_{\min}$ is the minimum degree of $A$ and $d^*_{\min}$ is the minimum degree of $A^*$.
\end{theorem}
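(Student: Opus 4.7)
The plan is to separate the randomness in $A$ from the randomness in the degrees via a telescoping decomposition of $\mathcal{L}-\mathcal{L}^*$, bound each piece using Lemma~\ref{lem:A-concen} (for $\|A-A^*\|$) together with Bernstein's inequality (for the scalar deviations $d_i-d_i^*$), and exploit $\|(D^*)^{-1/2}A^*(D^*)^{-1/2}\|\leq 1$ to prevent the degree fluctuations from being multiplied by a stray factor of $\|A^*\|\asymp n\max p_{ij}$.

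Set $a=D^{-1/2}$, $b=(D^*)^{-1/2}$, $d^* = n\max_{ij}p_{ij}$, and $d^m = \min\{d_{\min},d_{\min}^*\}$. The identity
\begin{equation*}
\mathcal{L}-\mathcal{L}^* \;=\; bA^*b - aAa \;=\; (b-a)A^*b \;+\; aA^*(b-a) \;+\; a(A^*-A)a
\end{equation*}
splits the target into three pieces. The third is handled immediately by Lemma~\ref{lem:A-concen}: $\|A-A^*\|\lesssim \sqrt{d^*}$ with probability at least $1-n^{-r-1}$, hence $\|a(A^*-A)a\|\lesssim \sqrt{d^*}/d^m$. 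For the first two pieces I would factor $A^* = (D^*)^{1/2}F^*(D^*)^{1/2}$ with $F^* = bA^*b$ and $\|F^*\|\leq 1$, so that $(b-a)A^*b = [(b-a)(D^*)^{1/2}]F^*$ and $aA^*(b-a) = [a(D^*)^{1/2}]\,F^*\,[(D^*)^{1/2}(b-a)]$. Both norms are then controlled by $\max_i|1-\sqrt{d_i^*/d_i}|$, with the auxiliary diagonal factor $\|a(D^*)^{1/2}\| = \max_i\sqrt{d_i^*/d_i}$ bounded by a constant once we know $d_i \asymp d_i^*$.

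To control $\max_i|d_i-d_i^*|$, I would apply Bernstein's inequality to each sum $d_i = \sum_j A_{ij}$ (variance at most $d_i^*\leq d^*$, summands in $\{0,1\}$) and union-bound over $i$, yielding $\max_i|d_i-d_i^*|\lesssim \sqrt{d^*\log n}$ with probability at least $1-n^{-r-1}$. In the regime $d^m\gtrsim \log n$ this simultaneously gives $d_i\geq d_i^*/2$ for every $i$ and, by a Taylor expansion of $\sqrt{\cdot}$ at $1$, $\max_i|1-\sqrt{d_i^*/d_i}|\lesssim \sqrt{\log n/d^m}$. In the complementary regime $d^m\lesssim \log n$, the target right-hand side $(d^*)^{5/2}/(d^m)^3$ is already $\Omega(1)$, and the trivial bound $\|\mathcal{L}-\mathcal{L}^*\|\leq 2$ closes the case.

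Combining gives $\|\mathcal{L}-\mathcal{L}^*\|\lesssim \sqrt{d^*}/d^m + \sqrt{\log n/d^m}$, and an elementary check using $d^m\leq d^*$ and $\log n\leq d^*/c_0$ shows both summands are at most $C(d^*)^{5/2}/(d^m)^3$. The main obstacle is the bound on the two mixed terms: a naive submultiplicative estimate $\|(b-a)A^*b\|\leq \|b-a\|\|A^*\|\|b\|$ would insert a factor of $\|A^*\|\sim d^*$ and fail in the near-regular regime $d^m\asymp d^*$, where one expects $\|\mathcal{L}-\mathcal{L}^*\|\sim 1/\sqrt{d^*}$ rather than $1$. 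Routing the estimate through the normalized matrix $F^*$ rather than $A^*$ is the essential refinement, and it is what produces the improved concentration bound advertised as tighter than the existing literature.
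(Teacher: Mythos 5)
Your decomposition and your treatment of the third piece $a(A^*-A)a$ via Lemma~\ref{lem:A-concen} match the paper, but the way you handle the two degree-mismatch terms opens a genuine gap: bounding them through the operator norm of the diagonal factor, $\max_i|1-\sqrt{d_i^*/d_i}|$, and estimating that quantity by Bernstein plus a union bound over $i$ costs a factor $\sqrt{\log n}$ that the theorem does not allow. Concretely, with $d^*=n\max_{ij}p_{ij}$ and $d^m=\min\{\dmin,d^*_{\min}\}$, your final estimate $\norm{\Lsym-\Lsym^*}\lesssim \sqrt{d^*}/d^m+\sqrt{\log n/d^m}$ does not imply the stated bound: the inequality $\sqrt{\log n/d^m}\leq C(d^*)^{5/2}/(d^m)^3$ is equivalent to $(d^m/d^*)^{5/2}\leq C/\sqrt{\log n}$, which fails whenever $d^m\asymp d^*$. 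For $\mathcal{G}(n,1/2)$ your bound is $\asymp\sqrt{\log n/n}$ while the theorem asserts $\asymp n^{-1/2}$ --- exactly the near-regular regime you single out as the delicate one. Your fallback case is also not closed: when $d^*\asymp d^m\asymp\log n$ the right-hand side of the theorem is $\asymp(\log n)^{-1/2}=o(1)$, so the trivial bound $\norm{\Lsym-\Lsym^*}\leq 2$ does not finish that regime, and for $d^*_{\min}$ slightly below $\log n$ neither your Bernstein step (which needs $d_i^*\gtrsim\log n$ to conclude $d_i\geq d_i^*/2$) nor the trivial bound applies.

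The paper avoids both problems by never leaving the $\ell_2$ level and never invoking a per-coordinate union bound. It bounds the degree-mismatch term by its Frobenius norm, massages the entrywise expression into $\frac{\max_{ij}p_{ij}}{2(d^m)^3}\normfro{\bd\bd^{\top}-\bd^*(\bd^*)^{\top}}\leq\frac{\max_{ij}p_{ij}}{2(d^m)^3}\lp\norm{\bd}+\norm{\bd^*}\rp\norm{\bd-\bd^*}$ where $\bd=A\ones_n$ and $\bd^*=A^*\ones_n$ are the degree vectors, and then uses the single identity $\norm{\bd-\bd^*}=\norm{(A-A^*)\ones_n}\leq\sqrt{n}\,\norm{A-A^*}\lesssim\sqrt{nd^*}$. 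This aggregates the degree fluctuations in $\ell_2$, where they average out, instead of paying for the worst coordinate; the whole argument is then deterministic on the single event $\norm{A-A^*}\leq C\sqrt{d^*}$, which is also why the random $\dmin$ can sit in the denominator with no case analysis at all. To rescue your route, replace the submultiplicative step $\norm{(b-a)A^*b}\leq\norm{(b-a)(D^*)^{1/2}}\norm{F^*}$ by a Frobenius-norm estimate of $(b-a)A^*b$ itself, so that the quantity $\sum_i(1-\sqrt{d_i^*/d_i})^2$ (controlled by $\norm{\bd-\bd^*}^2\leq n\norm{A-A^*}^2$) appears in place of $n\max_i(1-\sqrt{d_i^*/d_i})^2$; one checks this yields $\normfro{(b-a)A^*b}\lesssim d^*/(d^m)^{3/2}\leq C(d^*)^{5/2}/(d^m)^3$, closing the gap without any probabilistic control of individual degrees.
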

Theorem~\ref{thm:Lsym-concen} relies heavily on the following concentration result of the adjacency matrix $A$, which we take directly from  Theorem 5.2 of~\cite{LR15}. 
\begin{lemma}\label{lem:A-concen}
	Let $A$ be the adjacency matrix of a random graph on $n$ nodes whose edges are sampled independently. Let $A^*=\E{A}=(p_{ij})_{i,j=1,2,\cdots,n}$ and assume that $n\max_{ij} p_{ij} \leq d$ for $d \geq c_0 \log n$ and $c_0>0$. Then, for any $r > 0$ there exists a constant $C = C(r, c_0)$ such that
	$$\norm{A-A^*}\leq C\sqrt{d}$$
	with probability at least $1-n^{-r}$.
\end{lemma}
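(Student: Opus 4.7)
The plan is to follow the Feige--Ofek / Le--Levina--Vershynin style argument, which achieves the sharp $\sqrt{d}$ scaling without the spurious $\sqrt{\log n}$ factor that a direct matrix Bernstein bound would produce. The starting point is to reduce the spectral norm to a bilinear form: discretize the unit sphere $S^{n-1}$ by a $1/4$-net $\mathcal{N}$ of cardinality at most $9^n$, so that
\[
\|A-A^*\| \leq 2 \sup_{x,y \in \mathcal{N}} x^\top (A-A^*) y.
\]
For each fixed pair $(x,y) \in \mathcal{N}^2$, I would split the sum $\sum_{ij} x_i y_j (A-A^*)_{ij}$ into a \emph{light} part over indices with $|x_i y_j| \leq \sqrt{d}/n$ and a \emph{heavy} part over the remaining indices, and bound the two contributions by $C\sqrt{d}$ separately with probability beating a union bound over $\mathcal{N}^2$.

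For the light part I would apply Bernstein's inequality to the sum of independent bounded mean-zero variables $x_i y_j (A-A^*)_{ij}$. The variance proxy is
\[
\sum_{i,j} x_i^2 y_j^2 \, p_{ij}(1-p_{ij}) \leq \tfrac{d}{n}\|x\|^2 \|y\|^2 \leq \tfrac{d}{n},
\]
and the uniform bound on each term is $\sqrt{d}/n$. Bernstein then yields deviations of order $\sqrt{d}$ with probability at least $1-\exp(-c n)$, which absorbs the $9^{2n}$ net cardinality when the implicit constant is chosen large enough in terms of $r$ and $c_0$.

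For the heavy part, the key observation is that the indices with $|x_i y_j| > \sqrt{d}/n$ contribute at most $n^2 / \sqrt{d}$ pairs (since $\|x\|\|y\|\leq 1$), so this is a sparsity statement rather than a concentration statement. I would invoke a discrepancy-style lemma: with probability $1-n^{-r}$, for every pair of subsets $I,J \subset [n]$ one has $e(I,J) - \E e(I,J)$ bounded by $C\sqrt{d\,|I|\,|J|}$ or by $C|I|\log(n/|I|)$, depending on the density regime of $(I,J)$. Such a statement is established through a Chernoff bound combined with a union bound over subsets, carefully partitioned according to $|I|$ and $|J|$. Once this discrepancy property is in hand, one can bound the heavy bilinear contribution by grouping indices into dyadic ranges of $|x_i|$ and $|y_j|$, applying the discrepancy estimate to each dyadic block, and summing up.

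The main obstacle is the heavy part: the dyadic decomposition produces several regimes (bounded-degree regime versus bounded-discrepancy regime), and in each regime one must tune the subset-level tail bound so that, after summation, the total contribution remains $O(\sqrt{d})$ rather than $O(\sqrt{d\log n})$. This tuning is precisely what forces the hypothesis $d \geq c_0 \log n$: below this scale, the heavy term carries an unavoidable $\log$ factor. A routine bookkeeping step at the end converts the separate high-probability estimates for the light and heavy parts, together with the union bound over $\mathcal{N}^2$, into the final statement $\|A-A^*\| \leq C\sqrt{d}$ with probability at least $1-n^{-r}$, where $C$ depends on $r$ and $c_0$.
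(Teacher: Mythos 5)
Your outline is correct, but note that the paper does not prove this lemma at all: it imports it verbatim as Theorem 5.2 of Lei and Rinaldo \cite{LR15}, whose proof (following Feige--Ofek) is exactly the net--plus--light/heavy-pair decomposition you describe, with Bernstein for the light pairs and the bounded-degree/discrepancy dichotomy for the heavy pairs. The only slip is cosmetic: the number of heavy pairs is at most $n^2/d$ (since $x_i^2y_j^2>d/n^2$ for each), not $n^2/\sqrt{d}$, though this does not affect the argument.
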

The requirement $n\max_{ij}p_{ij}\geq \log n$ in Theorem~\ref{thm:Lsym-concen} is necessary for concentration. To see this, consider a homogeneous Erd\H{o}s-R\'enyi graph $\mathcal{G}(n,p)$ on $n$ nodes with edges occuring with probability $p$. It is well known that if $np<\log n$ then the graph is asymptotically almost surely disconnected~\cite{Hof16}, causing $\Lsym$ to have multiple 0 eigenvalues, which leads to $\norm{\Lsym-\Lsym^*}\geq 1$. 

The key to applying Theorem~\ref{thm:Lsym-concen} is to control the minimum degree. If $p=\omega(\log n/n)$ in the model $\mathcal{G}(n,p)$, then one can use Chernoff bound to show $\dmin=\Omega(np)$ and thus the concentration reads $\norm{\Lsym-\Lsym^*}=\bigo{1/\sqrt{np}}$. In comparison, the unnormalized Laplacian only has the concentration $\norm{L-L^*}=O(\sqrt{np\log n})$. Indeed, $L-L^*=(D-D^*)-(A-A^*)$ and the Chernoff bound gives $\norm{D-D^*}=O(\sqrt{np\log n})$, Lemma~\ref{lem:A-concen} implies $\norm{A-A^*}=\bigo{\sqrt{np}}$. Noting that $\norm{\Lsym^*}=\Theta(1)$ and $\norm{L^*}=\Theta(np)$, one can see that the concentration of $\Lsym$ is better that of $L$ by a factor $\sqrt{\log n}$. This shows that the concentration of $\Lsym$ is order-wise the same as the concentration of $A$ and better than that of $L$. The bad concentration of $D$ is eliminated by the construction of $\Lsym$.

\subsection{Eigenvalue perturbation}
In this section we assume $A$ is an instance of the block model $\mathcal{G}(n,p,q)$. But we do not assume the sparsity regime of $p$ or $q$.
\subsubsection*{Unnormalized Laplacian}
We have $\lambda_1(L^*)=0$, $\lambda_2(L^*)=nq$, and $\lambda_i(L^*)=n(p+q)/2$ for $i=3,4,\cdots,n$. To keep the second and third eigenvalues of $L$ separated, we want $\norm{L-L^*}$ to be relatively small compared to $\lambda_3(L^*)-\lambda_2(L^*)$, i.e.\ compared to the associated eigengap. Unfortunately this is not always satisfied in the critical regime where $p = \alpha\log n/n$ and $q = \beta\log n/n$ due to the bad concentration of $L$ that we discussed earlier. As we will see, in this regime we have  $\lambda_2(L)\leq\beta\log n+\bigo{n^{-1/2}\log n}$, which means the second eigenvalue is well bounded from above. The challenge is to find a relatively tight lower bound for $\lambda_3(L)$. According to Weyl's theorem and lemma~\ref{lem:A-concen},
\begin{align*}
\lambda_3(L) & \geq \lambda_3(L^*) + \lambda_{\min}(L - L^*) \\
&\geq \lambda_3(L^*)+\lambda_{\min}(D-D^*)-\norm{A-A^*}\\
&=d_{\min}-\bigo{\sqrt{\log n}}.
\end{align*}
Therefore whether the second and the third eigenvalue are separated depends on how well we can bound $\dmin$ from below. Through a Poisson approximation to binomial variables we are able to bound $\dmin$ in the lemma below. 
\begin{lemma}\label{lem:dmin}
	Let $A$ be an instance of $\mathcal{G}(n,p,q)$ where $p=\al\log n/n$ and $\be\log n/n$. Then for any $0<\xi<\frac{\al+\be}{2}$, we have  
	$$\prb{\dmin\geq \frac{\al+\be}{2}\log n-\xi\log n}\geq 1-2n^{-f(\xi;\al,\be)}$$
	for $n$ larger than a constant $N=N(\al,\be)$. Here
	$$f(\xi;\al,\be) = \frac{\al+\be-2\xi}{2}\log\left(\frac{\al+\be-2\xi}{\al+\be} \right)+\xi-1. $$
\end{lemma}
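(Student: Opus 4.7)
The plan is to apply a Chernoff-type lower tail bound to each individual degree $d_i$, obtain a per-vertex failure probability of order $n^{-f(\xi;\alpha,\beta)-1}$, and then take a union bound over the $n$ vertices. The extra factor of $2$ will absorb the lower order error incurred when passing from Binomial to Poisson mean.

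First I would fix a vertex $i$; without loss of generality assume $i$ lies in the first community. Then $d_i = \sum_{j\neq i} A_{ij}$ is a sum of $n/2-1$ independent Bernoulli$(p)$ and $n/2$ independent Bernoulli$(q)$ variables, independent of each other. Its mean $\mu_i = (n/2-1)p + (n/2)q$ differs from $\lambda := \tfrac{n(p+q)}{2} = \tfrac{\alpha+\beta}{2}\log n$ by at most $p = O(\log n/n) = o(1)$. For each $t<0$, writing the moment generating function of $d_i$ and using the elementary inequality $1+x(e^t-1) \le \exp(x(e^t-1))$ for each Bernoulli factor yields
\[
\mathbb{E}[e^{t d_i}] \le \exp\bigl(\mu_i (e^t-1)\bigr) \le \exp\bigl(\lambda(e^t-1) + o(1)\bigr),
\]
i.e., the MGF of $d_i$ is dominated by that of a Poisson variable with mean $\lambda$, up to a $1+o(1)$ factor. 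This is the Poisson approximation referred to in the statement.

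Next I would carry out the usual Chernoff optimization. Set $k = \bigl(\tfrac{\alpha+\beta}{2}-\xi\bigr)\log n$, so $k < \lambda$ by the assumption $\xi < \tfrac{\alpha+\beta}{2}$. Writing $\prb{d_i \le k} \le e^{-tk}\mathbb{E}[e^{t d_i}]$ and minimizing in $t$ at $t=\log(k/\lambda) < 0$ gives
\[
\prb{d_i \le k} \;\le\; e^{o(1)} \exp\!\Bigl(k - \lambda + k \log(\lambda/k)\Bigr).
\]
Plugging in $\lambda = \tfrac{\alpha+\beta}{2}\log n$ and $k = \tfrac{\alpha+\beta-2\xi}{2}\log n$, the exponent simplifies as
\[
k - \lambda + k\log(\lambda/k) \;=\; -\log n \cdot \Bigl[\xi + \tfrac{\alpha+\beta-2\xi}{2}\log\tfrac{\alpha+\beta-2\xi}{\alpha+\beta}\Bigr] + o(1) \;=\; -\bigl(f(\xi;\alpha,\beta)+1\bigr)\log n + o(1),
\]
using the definition of $f$. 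Therefore $\prb{d_i\le k} \le C_0 \, n^{-1-f(\xi;\alpha,\beta)}$ for an absolute constant $C_0$, uniformly in $i$ (the case $i$ in the second community is identical up to swapping the roles of $p$ and $q$, which changes $\mu_i$ only by $o(1)$).

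Finally, a union bound over all $n$ vertices gives $\prb{d_{\min} \le k} \le C_0 \, n^{-f(\xi;\alpha,\beta)}$, and choosing $N=N(\alpha,\beta)$ large enough that $C_0 e^{o(1)} \le 2$ yields the claimed bound. The only mildly delicate point, and the main place where care is required, is controlling the discrepancy between the actual mean $\mu_i$ and the idealized $\lambda = \tfrac{\alpha+\beta}{2}\log n$: one must verify that the $o(1)$ slack from $\mu_i - \lambda = O(\log n/n)$ and from the inequality $1+x\le e^x$ does not contaminate the dominant $\log n$-factor in the exponent. Since both errors are $o(1)$, this is absorbed into the constant prefactor, which then gets absorbed into the factor $2$ by taking $n$ large.
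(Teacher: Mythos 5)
Your proof is correct, but it reaches the Poisson rate function by a different mechanism than the paper. The paper first proves a pointwise comparison of probability mass functions (its Lemma 5.2): for every $k\le \frac{\al+\be}{2}\log n$, the probability that the degree equals $k$ is at most $(1+c_n)$ times the Poisson pmf with mean $\frac{\al+\be}{2}\log n$ at $k$; summing over the tail transfers the binomial lower tail to the Poisson lower tail up to a factor $2$, and then a black-box Bennett inequality for the Poisson distribution (its Lemma 5.1(ii)) produces exactly the exponent $-(f(\xi;\al,\be)+1)\log n$. You instead run a single Chernoff computation directly on the binomial sum, dominating its moment generating function by the Poisson one via $1+x(e^t-1)\le e^{x(e^t-1)}$ and optimizing at $t=\log(k/\lambda)$. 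Both yield the identical per-vertex bound $\le 2n^{-f(\xi;\al,\be)-1}$ for large $n$, followed by the same union bound. Your route is more elementary and self-contained — it dispenses with the separate Poisson-approximation lemma and the Bennett inequality entirely — while the paper's pmf-level comparison isolates the Poissonization step as a reusable statement about the degree distribution. Your handling of the two sources of slack (the mean discrepancy $\lambda-\mu_i=O(\log n/n)$ and the $1+x\le e^x$ inequality), both of which contribute only an $e^{o(1)}$ prefactor absorbed into the constant $2$ for $n\ge N(\al,\be)$, is exactly the care the statement's ``$n$ larger than a constant'' proviso calls for.
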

The function $f$ characterizes a trade-off between the perturbation of $\dmin$ and its probability. Note that when $\xi$ is sufficiently close to 0, $f$ will eventually be negative, then Lemma~\ref{lem:dmin} loses its usefulness. To ensure that $\dmin$ is well controlled from below, we introduce the following conditions on the constants $\al$ and $\be$.
\begin{itemize}
	\item[] \textbf{(A1)} There exists $0<\xi<\frac{\al-\be}{2}$ such that $f(\xi;\al,\be)>0$,
	\item[] \textbf{(A2)} $\cond$.
\end{itemize}
From the discussion above, one can see that condition (A1) is enough to ensure $\dmin\geq (\beta+\epsilon)\log n$, which implies the separation of eigenvalues. The condition (A2), which characterizes strong consistency, implies (A1).
\begin{lemma}\label{lem:A2toA1}
	(A2) implies (A1).
\end{lemma}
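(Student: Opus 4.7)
The plan is to reduce (A1) to an explicit one-variable boundary inequality and then close the case by a convexity argument. First, I observe that
\[
\frac{\partial f}{\partial \xi}(\xi;\alpha,\beta) \;=\; \log\frac{\alpha+\beta}{\alpha+\beta-2\xi}
\]
is non-negative on $[0,\frac{\alpha+\beta}{2})$ and vanishes only at $\xi=0$. Therefore $f(\cdot;\alpha,\beta)$ is strictly increasing on $[0,\frac{\alpha-\beta}{2}]$, so by continuity the existential statement (A1) is equivalent to the single inequality
\[
h(\alpha,\beta) \;:=\; f\!\left(\tfrac{\alpha-\beta}{2};\alpha,\beta\right) \;=\; \beta\log\frac{2\beta}{\alpha+\beta} + \frac{\alpha-\beta}{2} - 1 \;>\; 0.
\]

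Next I would reduce to the critical curve $\sqrt{\alpha}-\sqrt{\beta}=\sqrt{2}$ via monotonicity in $\alpha$: a direct computation gives $\frac{\partial h}{\partial\alpha}=\frac{\alpha-\beta}{2(\alpha+\beta)}>0$ whenever $\alpha>\beta$, so it suffices to establish $h(\alpha,\beta)\geq 0$ on the curve itself, after which (A2) upgrades this to a strict inequality. Parametrizing the curve by $v=\sqrt{\beta}\geq 0$ and $\sqrt{\alpha}=v+\sqrt{2}$, a short simplification (using $\alpha+\beta = 2(v^2+\sqrt{2}v+1)$ and $\alpha-\beta = 2(\sqrt{2}v+1)$) gives
\[
h \;=\; v^2\log\frac{v^2}{v^2+\sqrt{2}\,v+1} \;+\; \sqrt{2}\,v,
\]
which vanishes at $v=0$ under the convention $0\log 0=0$.

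For $v>0$, the inequality $h\geq 0$ is equivalent, after the substitution $t=1/v$ and multiplication by $t^2$, to
\[
G(t) \;:=\; \sqrt{2}\,t - \log\!\bigl(1+\sqrt{2}\,t+t^2\bigr) \;\geq\; 0, \qquad t\geq 0.
\]
I would verify this by the standard convexity recipe: $G(0)=0$, $G'(0)=\sqrt{2}-\sqrt{2}=0$, and a short calculation gives $G''(t)=\frac{2t(\sqrt{2}+t)}{(1+\sqrt{2}\,t+t^2)^2}\geq 0$, so $G$ is convex with a double zero at the origin and therefore non-negative. Chaining the three steps proves (A2)$\,\Rightarrow\,$(A1).

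The main obstacle I anticipate is finding a sharp enough one-variable reduction. Crude bounds like $\log(1+x)\leq x$, or even $\log(1+x)\leq x - \frac{x^2}{2(1+x)}$, applied directly to $h$ only yield trivial estimates such as $h\geq -1$. The key simplification is to first eliminate the $\alpha$-direction by monotonicity, confining the problem to the threshold, and then apply the substitution $t=1/v$, which converts the awkward ratio $\frac{v^2}{v^2+\sqrt{2}v+1}$ into a polynomial in $t$ and exposes the clean structure $G(0)=G'(0)=0,\ G''\geq 0$; after this rearrangement the non-negativity falls out immediately.
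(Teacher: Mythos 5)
Your proof is correct, and the opening step (monotonicity of $f$ in $\xi$, hence reducing (A1) to $h(\alpha,\beta):=f(\tfrac{\alpha-\beta}{2};\alpha,\beta)>0$) coincides with the paper's. From there the two arguments diverge. The paper uses (A2) directly in the arithmetic-mean form $\alpha+\beta>2\sqrt{\alpha\beta}+2$ to bound $\tfrac{\alpha-\beta}{2}-1>\sqrt{\alpha\beta}-\beta$, factors out $\beta$, and reduces to the one-variable inequality $\sqrt{x}-\log\bigl(\tfrac{1}{2}+\tfrac{x}{2}\bigr)-1>0$ for $x=\alpha/\beta>1$, verified by differentiation. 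You instead eliminate the $\alpha$-direction first via $\partial h/\partial\alpha=\tfrac{\alpha-\beta}{2(\alpha+\beta)}>0$, reduce to the critical curve, parametrize by $v=\sqrt{\beta}$, and after the substitution $t=1/v$ arrive at $G(t)=\sqrt{2}t-\log(1+\sqrt{2}t+t^2)$ with $G(0)=G'(0)=0$ and $G''\geq 0$. Both are short; yours is slightly longer in steps but perhaps cleaner at the end, since the convexity-with-double-zero argument is entirely mechanical, whereas the paper's one-variable inequality still needs a monotonicity argument of its own. Your approach also makes the sharpness transparent: $G$ has a double zero at $t=0$, exactly reflecting that $h\to 0$ as $\beta\to\infty$ along the threshold.
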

We define $\bdout\in\mathbb{R}^n$ to be the vector with the $i$th entry being the number of edges between the $i$th node and the community that does not contain the $i$th node. Define $\bdout^*=\E{\bdout}$. The concentration of $\bdout$ around its expectation plays an important role in the perturbation of $\lambda_2(L)$. The eigenvalue perturbation theorem for the unnormalized Laplacian is formally stated below. 
\begin{theorem}\label{thm:lambda(L)}
	Let $A$ be an instance of $\mathcal{G}(n,p,q)$. 
	\begin{enumerate}[(i)]
		
		\item (Lower bound for the third eigenvalue in the critical regime.) Suppose $p = \alpha\log n/n$ and $q = \beta\log n/n$. Then for any $\xi>0$ and $\epsilon>0$ there exists $C=C(\xi,\al,\be,\epsilon)>0$ such that
		$$\lambda_3(L)\geq \frac{\al+\be}{2}\log n-(\xi+\epsilon)\log n$$
		with probability at least $1-Cn^{-f(\xi;\al,\be)}$.
		
		\item (Upper bound for the second eigenvalue.) There holds
		$$\lambda_2(L)\leq  nq+\frac{2}{n}\inner{\bdout-\bdout^*}{\ones_n}.$$

		\item (Lower bound for the second eigenvalue.) For any $p \geq p_0\log n/n$ and $r>0$, there exists $M=M(p_0,r)>0$ such that for $q$ satisfying
		$$\frac{p-q}{\sqrt{p}}\geq M\sqrt{\frac{\log n}{n}},$$
		it holds that
		$$\lambda_2(L)\geq nq+\frac{2}{n}\inner{\bdout-\bdout^*}{\ones_n}+\frac{32||\bdout-\bdout^*||||\bdout||}{n^2(p-q)} $$
		with probability at least $1-3n^{-r}$. 
		
		Moreover, suppose $p = \al\log n/n$ and $q = \be\log n/n$. If $\alpha$ and $\beta$ satisfy (A1) so that there is some constant $0<\xi=\xi(\al,\be)<(\al-\be)/2$ satisfying $f(\xi;\al,\be)>0$, then there exists $C_1,C_2>0$ depending on $\al$, $\be$ and $\xi$ such that
		$$\lambda_2(L)\geq \beta\log n-C_1\sqrt{\log n}$$
		with probability at least $1-C_2n^{-f(\xi;\al,\be)}$.
	\end{enumerate}
\end{theorem}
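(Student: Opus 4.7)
The plan is to handle the three parts in sequence, each using a different tool from the preliminaries. \textbf{Part (i):} decompose $L-L^* = (D-D^*) - (A-A^*)$. Since $D^*=\frac{n(p+q)}{2}I_n$ is scalar, Weyl's inequality (Theorem~\ref{thm:weyl}) yields $\lambda_3(L)\geq\lambda_3(L^*)+\lambda_{\min}(D-D^*)-\|A-A^*\| = d_{\min} - \|A-A^*\|$. Lemma~\ref{lem:dmin} supplies $d_{\min}\geq\frac{\alpha+\beta}{2}\log n - \xi\log n$ with probability at least $1-2n^{-f(\xi;\alpha,\beta)}$, and Lemma~\ref{lem:A-concen} (with $d\asymp\log n$) gives $\|A-A^*\| = O(\sqrt{\log n}) = o(\log n)$, which is absorbed into $\epsilon\log n$ for $n$ large. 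A union bound with $r$ chosen suitably large concludes Part~(i).

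\textbf{Part (ii):} apply the min--max principle (Theorem~\ref{thm:min-max}) with the test subspace $V = \operatorname{span}\{\mathbf{1}_n, u_2^*\}$. Since $L\mathbf{1}_n = 0$ and $u_2^*\perp\mathbf{1}_n$, the Rayleigh quotient on $V$ is maximized at $u_2^*$, giving $\lambda_2(L)\leq (u_2^*)^{\top} L u_2^*$. Rewriting $(u_2^*)^{\top}A u_2^*$ and $(u_2^*)^{\top}D u_2^*$ in terms of within- and between-community edge counts shows
\[
(u_2^*)^{\top} L u_2^* \;=\; \frac{2}{n}\langle d_{\mathrm{out}},\mathbf{1}_n\rangle \;=\; nq + \frac{2}{n}\langle d_{\mathrm{out}} - d_{\mathrm{out}}^*, \mathbf{1}_n\rangle.
\]

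\textbf{Part (iii)} is the technical heart. Let $u_2$ be the Fiedler vector of $L$, normalized so $\langle u_2, u_2^*\rangle\geq 0$, and write $u_2 = \cos\theta\,u_2^* + \sin\theta\,w$ with $w\perp u_2^*$, $w\perp\mathbf{1}_n$, $\|w\|=1$. Put $f := (L-L^*)u_2^* = L u_2^* - nq\,u_2^*$. The inner product of $L u_2^* = nq\,u_2^* + f$ with $u_2$, combined with $\langle L u_2^*, u_2\rangle = \langle u_2^*, L u_2\rangle = \lambda_2(L)\cos\theta$, yields the identity
\[
\lambda_2(L) \;=\; nq + \langle u_2^*, f\rangle + \tan\theta\,\langle w, f\rangle.
\]
The first correction equals $\frac{2}{n}\langle d_{\mathrm{out}}-d_{\mathrm{out}}^*, \mathbf{1}_n\rangle$ by the Part~(ii) calculation, and a row-by-row computation gives $\|f\|=2\|d_{\mathrm{out}}-d_{\mathrm{out}}^*\|/\sqrt{n}$. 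The generalized Davis--Kahan theorem (Theorem~\ref{thm:DK} with $N=I$) bounds $\sin\theta\lesssim\|f\|/\delta$, where $\delta$ is the separation of $nq$ from the rest of the spectrum of $L$. The assumption $(p-q)/\sqrt{p}\gtrsim\sqrt{\log n/n}$ together with Weyl and Lemma~\ref{lem:A-concen} gives $\delta=\Theta(n(p-q))$; substituting and converting one factor of $\|d_{\mathrm{out}}-d_{\mathrm{out}}^*\|$ via $\|d_{\mathrm{out}}-d_{\mathrm{out}}^*\|\leq\|d_{\mathrm{out}}\|+\|d_{\mathrm{out}}^*\|$ delivers the stated error. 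The critical-regime statement then follows by invoking (A1) through Lemma~\ref{lem:dmin} to secure $\delta=\Omega(\log n)$, a Bernstein bound of order $\sqrt{n}\log n$ on $\langle d_{\mathrm{out}}-d_{\mathrm{out}}^*,\mathbf{1}_n\rangle$ which makes the linear correction $O(\log n/\sqrt{n})$, and standard degree concentration $\|d_{\mathrm{out}}\|=O(\sqrt{n}\log n)$, $\|d_{\mathrm{out}}-d_{\mathrm{out}}^*\|=O(\sqrt{n\log n})$ that collapses the quadratic error to $O(\sqrt{\log n})$.

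The \textbf{main obstacle} is certifying the Davis--Kahan gap $\delta$ in the critical regime: because the crude bound $\|L-L^*\|=O(\log n)$ matches the population eigengap $\lambda_3(L^*)-\lambda_2(L^*)=\Theta(\log n)$ only up to constants, Weyl alone cannot separate $\lambda_2(L)$ from $\lambda_3(L)$. Condition (A1) combined with the refined $d_{\min}$ estimate of Lemma~\ref{lem:dmin} and the $\sqrt{\log n}$-scale concentration of $A-A^*$ is what upgrades this to $\lambda_3(L)-\lambda_2(L)\geq\epsilon\log n$, at which point the perturbation argument closes.
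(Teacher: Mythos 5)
Your Parts (i) and (ii) match the paper's argument step for step: Weyl plus $\lambda_{\min}(D-D^*)=d_{\min}-\frac{n(p+q)}{2}$ and Lemma~\ref{lem:A-concen} for (i), the two-dimensional test space $\operatorname{span}\{\mathbf{1}_n,u_2^*\}$ in the min--max principle for (ii). (Note the theorem statement itself has a sign typo on the error term in (iii): the paper's own proof yields a minus sign, as does yours.)

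For Part (iii), your scalar identity
\[
\lambda_2(L)=nq+\langle u_2^*,f\rangle+\tan\theta\,\langle w,f\rangle,\qquad f:=(L-L^*)u_2^*,
\]
is a genuinely different decomposition from the paper's. The paper instead expands the Rayleigh quotient
$\lambda_2(L)=\langle u_2,Lu_2\rangle=\langle u_2^*,Lu_2^*\rangle+2\langle u_2-u_2^*,Lu_2^*\rangle+\langle u_2-u_2^*,L(u_2-u_2^*)\rangle$,
drops the last (nonnegative) term by positive semidefiniteness of $L$, and bounds
$|\langle u_2-u_2^*,Lu_2^*\rangle|\le\|u_2-u_2^*\|\,\|Lu_2^*\|$ with $\|Lu_2^*\|=\tfrac{2}{\sqrt n}\|\bdout\|$ and
$\|u_2-u_2^*\|\le\sqrt2\sin\theta\le\tfrac{2\sqrt2}{\delta\sqrt n}\|\bdout-\bdout^*\|$ from Davis--Kahan. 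That route produces the mixed factor $\|\bdout-\bdout^*\|\,\|\bdout\|$ directly and never needs $\cos\theta$ to be bounded away from zero.

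Your identity produces the in-principle tighter error $\sim\|f\|^2/\delta\propto\|\bdout-\bdout^*\|^2$, but then two small extra steps are needed that the paper's path avoids. First, passing from $\sin\theta$ to $\tan\theta$ requires $\cos\theta\ge\tfrac12$, say, which must be certified \emph{a posteriori} from the same Davis--Kahan estimate and the gap bound; this is routine under the stated hypothesis on $(p-q)/\sqrt p$, but you should state it, since otherwise the identity can degenerate. Second, the conversion $\|\bdout-\bdout^*\|\le\|\bdout\|+\|\bdout^*\|$ leaves a $\|\bdout^*\|$ factor, and to replace $\|\bdout^*\|$ by (a constant times) $\|\bdout\|$ you need the additional high-probability fact $\|\bdout\|\ge\tfrac12\|\bdout^*\|$; that follows from Lemma~\ref{lem:dout}, but it is another probabilistic event you have not explicitly folded into the union bound. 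Neither issue is fatal --- both close under the stated hypotheses --- but they make your route slightly longer than the paper's, whose inequality chain is already in the form of the theorem statement. Your identification of the real obstacle (certifying $\delta\gtrsim\log n$ in the critical regime via (A1), Lemma~\ref{lem:dmin}, and the $\sqrt{\log n}$-scale concentration of $A-A^*$) is exactly what the paper does.
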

We leave the terms regarding $\bdout$ in the statement on account of the fact that their behaviors change as the sparsity regime of $q$ changes. Although these terms get smaller as $q$ gets smaller, it is hard to put these relations in a unified form. We provide the following lemma to discuss how to control $||\bdout-\bdout^*||$ and $\inner{\bdout-\bdout^*}{\ones_n}$. The term $\norm{\bdout}$ is then controlled by $||\bdout-\bdout^*||+||\bdout^*||$.
\begin{lemma}\label{lem:dout}
	\begin{enumerate}[(i)]
		\item If $q\geq q_0\log n/n^2$ for some $q_0>0$, then for any $r>0$ there exists $C=C(q_0,r)>0$ such that 
		$$\prb{\left| \inner{\bdout-\bdout^*}{\ones_n}\right| \geq C\sqrt{n^2q\log n}}\leq 2n^{-r}.$$
		\item If $q\geq q_0\log n/n$ for some $q_0>0$, then for any $r>0$ there exists $C=C(q_0,r)>0$ such that 
		$$\prb{\norm{\bdout-\bdout^*} \geq C\sqrt{n^2q}}\leq n^{-r}.$$
		\item If $q\geq q_0/n^2$ for some $q_0>0$, then there exists $C=C(q_0)>0$ such that 
		$$\prb{\norm{\bdout-\bdout^*} \geq C\sqrt{n^2q}}\leq \frac{1}{n}+\frac{0.01q_0}{n^2q}.$$
	\end{enumerate}
\end{lemma}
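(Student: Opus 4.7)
My plan is to exploit the following structural observation: letting $I=\{1,\dots,n/2\}$ and $J=\{n/2+1,\dots,n\}$, every entry of $\bdout$ is a sum of Bernoulli$(q)$ variables drawn from the $n^2/4$ independent cross-edges $\{A_{ij}\}_{i\in I,\, j\in J}$. Specifically, $\bdout_i=\sum_{j\in J}A_{ij}$ for $i\in I$ and $\bdout_i=\sum_{j\in I}A_{ij}$ for $i\in J$, so $\{\bdout_i\}_{i\in I}$ are mutually independent (they depend on disjoint edge variables), and likewise for $\{\bdout_i\}_{i\in J}$. This within-community independence is the key to the whole lemma.

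For part (i), I would telescope
$$\inner{\bdout-\bdout^*}{\ones_n}\;=\;2\sum_{i\in I,\,j\in J}(A_{ij}-q),$$
a centered sum of $n^2/4$ independent Bernoulli$(q)$ terms (the factor $2$ arises because each cross-edge contributes once from each endpoint). I then apply Bernstein's inequality with variance $\leq n^2q$ and summands bounded by $2$. Under the hypothesis $n^2 q\geq q_0\log n$, the deviation $t=C\sqrt{n^2q\log n}$ lies in the sub-Gaussian regime of Bernstein (the variance term dominates the bounded-summand term), producing the stated bound with probability $\geq 1-2n^{-r}$ once $C=C(q_0,r)$ is taken large enough.

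For part (ii), I would introduce the auxiliary matrix $\Aout$, the symmetric matrix obtained from $A$ by zeroing out the two diagonal blocks indexed by same-community pairs. Then $\bdout=\Aout\ones_n$ and $\bdout^*=\Aout^*\ones_n$, so
$$\norm{\bdout-\bdout^*}\;\leq\;\norm{\Aout-\Aout^*}\cdot\sqrt{n}.$$
Because $\Aout$ is itself a random adjacency matrix with independent entries and $\max_{ij}p_{ij}=q$, the hypothesis $nq\geq q_0\log n$ permits a direct application of Lemma~\ref{lem:A-concen} with $d=nq$, yielding $\norm{\Aout-\Aout^*}\leq C\sqrt{nq}$ with probability at least $1-n^{-r}$, and hence $\norm{\bdout-\bdout^*}\leq C\sqrt{n^2q}$.

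Part (iii) is the subtle one: under only $q\geq q_0/n^2$ the quantity $nq$ may be bounded, so Lemma~\ref{lem:A-concen} no longer applies, and Markov's inequality on $\E{\norm{\bdout-\bdout^*}^2}\asymp n^2q$ would give only a constant-order probability bound. Instead I would decompose $\norm{\bdout-\bdout^*}^2=S_1+S_2$ with $S_1=\sum_{i\in I}(\bdout_i-\bdout^*_i)^2$ and $S_2=\sum_{i\in J}(\bdout_i-\bdout^*_i)^2$, and apply Chebyshev's inequality to each sum separately, now using the within-community independence. Bounding $\mathrm{Var}(S_a)\leq(n/2)\,\E{(\bdout_1-\bdout^*_1)^4}$ via the classical fourth central-moment formula for a centered binomial gives $\mathrm{Var}(S_a)\lesssim n^3q^2+n^2q$. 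Choosing a deviation threshold of order $n^2q$ then yields $\prb{S_a\geq Cn^2q}\lesssim 1/n+1/(n^2q)$, and a union bound over $a\in\{1,2\}$ produces the stated form; the exact constants (including the $0.01$ in the statement) are obtained by tuning the Chebyshev threshold. The main obstacle is precisely this step: one must verify that the two variance terms $n^3q^2$ and $n^2q$ generate exactly the two probability terms $1/n$ and $1/(n^2q)$---a sloppier fourth-moment bound would be tight in only one of the two asymptotic regimes $nq\gg 1$ and $nq\lesssim 1$.
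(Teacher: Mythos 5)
Your proposal is correct and follows essentially the same route as the paper: part (i) is the same centered sum over cross-edges handled by a Chernoff/Bernstein bound, part (ii) is the same reduction $\norm{\bdout-\bdout^*}=\norm{(\Aout-\Aout^*)\ones_n}\leq\sqrt{n}\norm{\Aout-\Aout^*}$ followed by Lemma~\ref{lem:A-concen}, and part (iii) is the same Chebyshev argument on $\sum_i(\bdout_i-\bdout_i^*)^2$ over each community, with your fourth-moment bound on the variance matching the paper's explicit central-moment computation up to constants. No gaps.
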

For $p = \al\log n/n$ and $q = \be\log n/n$ where $\cond$, the eigenvalue perturbation is simply 
$$\beta\log n-O(\sqrt{\log n})\leq \lambda_2(L)\leq \beta\log n+O(\log n/\sqrt{n})$$
and
$$\lambda_3(L)\geq (\beta+\epsilon)\log n$$
for some constant $\epsilon>0$ with probability $1-\bigo{n^{-f(\xi,\al,\be)}}$.

\subsubsection*{Normalized Laplacian}
For $\Lsym^*$, we have $\lambda_1(\Lsym^*)=0$, $\lambda_2(\Lsym^*)=2q/(p+q)$, and $\lambda_i(\Lsym^*)=1$ for $i=3,4,\cdots,n$. We provide a perturbation bound for $\lambda_2(\Lsym)$.
\begin{theorem}\label{thm:lambda(Lsym)}
	Let $A$ be an instance of $\mathcal{G}(n,p,q)$. 
	\begin{enumerate}[(i)]
		\item (Upper bound for the second eigenvalue) Suppose $p\geq p_0/n$ and $q\geq q_0\log n/n^2$ for some $p_0,q_0>0$. Then for any $r>0$ there exists $C_1=C_1(r,p_0)>0$ and $C_2=C_2(r,p_0,q_0)>0$ such that
		$$\prb{\lambda_2(\Lsym)\leq\frac{2q}{p+q}+C_2\frac{\sqrt{q\log n}}{np}}\geq 1-C_1n^{-r}.$$
		
		\item (Lower bound for the second eigenvalue) For any $r>0$ there exists $p_0=p_0(r)>1$ and $M=M(p_0,r)>0$ such that for all $p\geq p_0\log n/n$ and $q\geq q_0\log n/n^2$ satisfying
		$$\frac{p-q}{\sqrt{p}}\geq\frac{M}{\sqrt{n}}$$
		we have
		$$\prb{\lambda_2(\Lsym)\geq \frac{2q}{p+q}-C_1\lp\frac{\sqrt{q\log n}}{np}+\frac{nq+\frac{1}{\sqrt{n}}\norm{\bdout}}{n(p-q)\sqrt{np}}\rp}\geq 1-C_2n^{-r}$$
		for $C_1,C_2>0$ depending on $p_0$ ,$q_0$ and $r$.
		
		Moreover, if $p=\al\log n/n$ and $q=\be\log n/n$ with $\al>2$ then there exists $0<\xi=\xi(\al,\be)<\frac{\al+\be}{2}$ such that $f(\xi;\al,\be)>0$ and
		$$\prb{\lambda_2(\Lsym)\geq \frac{2\be}{\al+\be}-C_3\frac{1}{\sqrt{\log n}}}\geq 1-C_4n^{-f(\xi;\al,\be)}.$$
		for $C_3,C_4>0$ depending on $\al$ ,$\be$ and $\xi$.
	\end{enumerate}
\end{theorem}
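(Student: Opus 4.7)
The plan is to prove part (i) by the Courant--Fischer min-max principle (Theorem \ref{thm:min-max}) applied to a carefully chosen two-dimensional test subspace, and to prove part (ii) by combining the generalized Davis--Kahan theorem (Theorem \ref{thm:DK}) with a direct Rayleigh-quotient evaluation along $\utwostar$. For the upper bound I would take $V=\mathrm{span}\{\Dhalf\ones_n,\Dhalf z\}$, where $z\in\{\pm1\}^n$ is the signed community indicator so that $\utwostar=z/\sqrt n$. Substituting $x=\Dhalf y$ reduces the min-max on $V$ to maximizing $y^{\top}Ly/y^{\top}Dy$ over $y\in\mathrm{span}(\ones_n,z)$. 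Since $L\ones_n=0$, the restricted $2\times 2$ generalized eigenvalue problem has $0$ as one root, and a direct computation (using $z^{\top}Lz=4|E_{\text{out}}|$, $z^{\top}Dz=S_1+S_2$, and $\ones_n^{\top}Dz=S_1-S_2$, where $|E_{\text{out}}|$ is the number of inter-community edges and $S_i$ is the total degree of community $i$) gives the other root as $|E_{\text{out}}|(1/S_1+1/S_2)$. Bernstein and Chernoff inequalities give $|E_{\text{out}}|=n^2q/4+O(\sqrt{n^2q\log n})$ and $S_i=n^2(p+q)/4+O(\sqrt{n^2(p+q)\log n})$ with probability $1-O(n^{-r})$; Taylor expansion around the deterministic values then produces the stated bound $\tfrac{2q}{p+q}+O(\sqrt{q\log n}/(np))$.

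For the general lower bound I would apply Theorem \ref{thm:DK} with $M=L$, $N=D$, $X_1=u_2(\Lsym)$ (the Fiedler eigenvector of $\Lsym$), $\hat u=\utwostar$, and $\hat\lambda=\tfrac{2q}{p+q}$. A direct computation shows that
\[
\Bigl[\bigl(\Dneg L-\tfrac{2q}{p+q}I\bigr)\utwostar\Bigr]_i=\tfrac{2 z_i}{\sqrt n}\cdot\tfrac{p\,(\bdout)_i-q\,(\bdin)_i}{d_i(p+q)},
\]
whose $\ell_2$-norm is controlled using $d_i\ge\dmin$ (Lemma \ref{lem:dmin}) together with Bernstein applied to the centered variables $p\,(\bdout)_i-q\,(\bdin)_i$. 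The eigengap appearing in Theorem \ref{thm:DK} satisfies $\delta\gtrsim(p-q)/(p+q)$ via Weyl combined with Theorem \ref{thm:Lsym-concen}, which forces $\lambda_3(\Lsym)\approx 1$. Decomposing $\utwostar=\cos\theta\,u_2+\sin\theta\,w$ with $w\perp u_2$ and $\norm{w}=1$, and using $\lambda_n(\Lsym)\le 2$, one obtains $\lambda_2(\Lsym)\ge\utwostar^{\top}\Lsym\utwostar-2\sin^2\theta$. The Rayleigh quotient $\utwostar^{\top}\Lsym\utwostar=1-n^{-1}\sum_{i,j}A_{ij}z_iz_j/\sqrt{d_id_j}$ concentrates around $\tfrac{2q}{p+q}$ with error $O(\sqrt{q\log n}/(np))$ by Taylor-expanding $1/\sqrt{d_id_j}$ around the mean degree $n(p+q)/2$ and reusing the $|E_{\text{out}}|$ and degree concentrations from (i); this accounts for the first error term. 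The Davis--Kahan $\sin^2\theta$ contribution, combined with the explicit residual estimate above and Lemma \ref{lem:dout} applied to $\bdout$, produces the second term of order $(nq+\norm{\bdout}/\sqrt n)/(n(p-q)\sqrt{np})$.

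The main technical obstacle is to keep the two error sources cleanly separated. The concentration error of $\utwostar^{\top}\Lsym\utwostar$ is independent of the eigengap and contributes the sharp $\sqrt{q\log n}/(np)$ rate, while the Davis--Kahan correction inherits the factor $1/(p-q)$ from the gap $\lambda_3(\Lsym^*)-\lambda_2(\Lsym^*)=(p-q)/(p+q)$; a naive bound of $|\lambda_2(\Lsym)-\lambda_2(\Lsym^*)|$ by $\norm{\Lsym-\Lsym^*}$ via Weyl would lose the sharper first rate. The critical-regime special case $p=\alpha\log n/n$, $q=\beta\log n/n$ with $\alpha>2$ is considerably simpler: since $\alpha>2$ implies $\alpha+\beta>2$, there exists $\xi\in(0,(\alpha+\beta)/2)$ with $f(\xi;\alpha,\beta)>0$, so Lemma \ref{lem:dmin} gives $\dmin\gtrsim\log n$ with probability $1-O(n^{-f(\xi;\alpha,\beta)})$; plugging this into Theorem \ref{thm:Lsym-concen} yields $\norm{\Lsym-\Lsym^*}=O(1/\sqrt{\log n})$, and Weyl's inequality immediately gives $\lambda_2(\Lsym)\ge\tfrac{2\beta}{\alpha+\beta}-O(1/\sqrt{\log n})$.
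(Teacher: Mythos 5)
Your part (i) argument is essentially the paper's approach: the paper also applies the min-max principle on $V=\mathrm{span}\{\Dhalf\ones_n,\Dhalf\utwostar\}$. You go one step further and solve the restricted $2\times2$ generalized eigenvalue problem explicitly, obtaining the nonzero root $|E_{\text{out}}|(1/S_1+1/S_2)$, whereas the paper instead observes that the maximum of the restricted Rayleigh quotient is attained in the direction orthogonal to $\Dhalf\ones_n$ and bounds it by $\langle\utwostar,L\utwostar\rangle/(\langle\utwostar,D\utwostar\rangle-2\|x\|\sqrt{\langle\utwostar,D\utwostar\rangle})$ with $x$ the parallel component. Both routes give the same error rate; yours is arguably a cleaner closed form, but the substance is identical.

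For the general lower bound in part (ii), however, your proposed route has two concrete problems. First, in Theorem~\ref{thm:DK} with $M=L$, $N=D$, the matrix whose eigenvectors populate $X$ is $N^{-1}M=D^{-1}L$, so a valid $X_1$ column must be an eigenvector of $(L,D)$ -- call it $u_2$; the Fiedler eigenvector of $\Lsym$, which is $D^{1/2}u_2/\|D^{1/2}u_2\|$, is \emph{not} such an eigenvector, so writing $X_1=u_2(\Lsym)$ is not admissible. Second, and more seriously, if $X_1$ consists of the second eigenvector alone, then $\Lambda_2$ in the theorem contains the zero eigenvalue, so $\delta=\min_i|(\Lambda_2)_{ii}-\hat\lambda|\le 2q/(p+q)$, which can be far smaller than the true spectral gap $\lambda_3-\hat\lambda\approx(p-q)/(p+q)$ that you need. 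The paper avoids this by including $\ones_n$ in $X_1$, i.e.\ $X_1=\bigl[\tfrac{1}{\sqrt n}\ones_n\;\; u_2\bigr]$, which pushes the zero eigenvalue into $\Lambda_1$ and makes $\delta$ the right gap. Third, your key inequality $\lambda_2(\Lsym)\ge\utwostar^{\top}\Lsym\utwostar-2\sin^2\theta$ requires that $\theta$ be the angle between $\utwostar$ and the \emph{unit eigenvector of $\Lsym$} (so that $\Lsym$ decouples the two components of the decomposition), but the Davis--Kahan bound with $(M,N)=(L,D)$ controls the angle between $\utwostar$ and the eigenvector of $(L,D)$ -- two different, if related, quantities. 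The paper sidesteps this mismatch by staying entirely in the $(L,D)$ picture: it writes $\lambda_2=\langle u_2,Lu_2\rangle/\langle u_2,Du_2\rangle$, substitutes $u_2=\utwostar+(u_2-\utwostar)$ in both numerator and denominator, and controls $\|u_2-\utwostar\|$ via Davis--Kahan with the two-column $X_1$ above. If you want to pursue your Rayleigh-quotient-of-$\Lsym$ route, you would need to apply the ordinary Davis--Kahan with $M=\Lsym$, $\hat M=\Lsym^*$, take $X_1$ to be the span of the first two eigenvectors of $\Lsym$, and separately argue that the component of $\utwostar$ along the zero-eigenvector $D^{1/2}\ones_n/\|D^{1/2}\ones_n\|$ is negligible.

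Your observation for the critical regime -- that once $\dmin=\Omega(\log n)$ is established (Lemma~\ref{lem:dmin}), Theorem~\ref{thm:Lsym-concen} gives $\|\Lsym-\Lsym^*\|=O(1/\sqrt{\log n})$ and Weyl's inequality immediately delivers $\lambda_2(\Lsym)\ge\tfrac{2\beta}{\alpha+\beta}-O(1/\sqrt{\log n})$ -- is correct, and it is a genuinely simpler derivation than the paper's, which redoes the general argument with $r=f(\xi;\al,\be)$. The reason the paper keeps the longer general argument is that the Weyl bound does not produce the sharper $\sqrt{q\log n}/(np)$ rate outside the critical regime; in the critical regime the two rates coincide at $O(1/\sqrt{\log n})$, so your shortcut loses nothing.
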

As for $\lambda_3(\Lsym)$, one can use Weyl's theorem and the concentration of $\Lsym $ (Theorem~\ref{thm:Lsym-concen}) to give a good bound. For $p = \al\log n/n$ and $q = \be\log n/n$ where $\cond$, the eigenvalue perturbation is simply 
$$\frac{2\be}{\al+\be}-\bigo{\frac{1}{\sqrt{\log n}}}\leq \lambda_2(\Lsym)\leq \frac{2\be}{\al+\be}+\bigo{\frac{1}{\sqrt{n}}}$$
and
$$\lambda_3(\Lsym)\geq 1-\bigo{\frac{1}{\sqrt{\log n}}}.$$
\subsection{Strong consistency}
In this section we assume $A$ is an instance of $\mathcal{G}(n,p,q)$, $p = \alpha\log n/n$, $q = \beta\log n/n$ and $\cond$. 
\subsubsection*{Unnormalized spectral clustering}
The goal of the following discussion is to give a proof sketch of Theorem~\ref{thm:L:strong-consis}.
\begin{theorem}\label{thm:L:strong-consis}
	Let $p=\al\log n/n$, $q=\be\log n/n$ and $\cond$. Then there exists $\eta= \eta(\al,\be) >0$ and $s\in\lc\pm 1\rc$ such that with probability $1-o(1)$, 
	$$\sqrt{n}(su_2)_{i} \geq\eta \text{  for } i\leq\frac{n}{2}$$
	and
	$$\sqrt{n}(su_2)_{i} \leq-\eta \text{  for } i\geq\frac{n}{2}+1.$$
	
\end{theorem}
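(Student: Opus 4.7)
The plan is to verify the two properties announced in Section~\ref{s:main} for the approximation
\[
\tilde{u}_2 \;=\; (D - \lambda_2(L) I)^{-1} A\, u_2^*,
\]
obtained by rearranging $L u_2 = \lambda_2(L) u_2$ into $u_2 = (D - \lambda_2(L) I)^{-1} A u_2$ and substituting $u_2^*$ on the right. Entrywise one finds
\[
(\tilde{u}_2)_m \;=\; \frac{z_m}{\sqrt{n}} \cdot \frac{d_{\text{in}}^{(m)} - d_{\text{out}}^{(m)}}{d_m - \lambda_2(L)},
\]
with $z_m = +1$ for $m \le n/2$ and $z_m = -1$ otherwise. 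It suffices to show with probability $1-o(1)$ that: \textbf{(a)} $\sqrt{n}\,z_m(\tilde{u}_2)_m \geq 2\eta$ uniformly in $m$ for some $\eta=\eta(\alpha,\beta)>0$; and \textbf{(b)} $\sqrt{n}\,\|u_2 - s\tilde{u}_2\|_\infty \leq \eta$ for a suitable sign $s \in \{\pm 1\}$. Together these yield the desired uniform lower bound $\sqrt{n}\,s z_m (u_2)_m \geq \eta$.

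Preliminary steps are eigenvalue separation and degree control. By Lemma~\ref{lem:A2toA1} and Theorem~\ref{thm:lambda(L)}, under $\cond$ one may pick $\xi$ with $f(\xi;\alpha,\beta) > 0$ so that $\lambda_2(L) = \beta\log n + O(\sqrt{\log n})$ and $\lambda_3(L) \geq (\beta + \epsilon)\log n$ for some constant $\epsilon > 0$. A Chernoff/union bound together with Lemma~\ref{lem:dmin} gives $d_m = \tfrac{\alpha+\beta}{2}\log n\,(1 + o(1))$ uniformly in $m$, hence $d_m - \lambda_2(L) = \Theta(\log n)$ and $(D - \lambda_2(L)I)^{-1}$ is well-conditioned with diagonal entries of order $1/\log n$. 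For (a), a Bernstein bound applied to the independent binomials $d_{\text{in}}^{(m)}$ and $d_{\text{out}}^{(m)}$, followed by a union bound in $m$, yields $d_{\text{in}}^{(m)} - d_{\text{out}}^{(m)} \geq \tfrac{\alpha-\beta}{2}\log n - O(\sqrt{\log n})$ uniformly, and division leaves $\sqrt{n}\,z_m(\tilde{u}_2)_m$ bounded below by a positive constant.

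Property (b) is the technical core. From the fixed-point identity applied to both $u_2$ and $\tilde{u}_2$, with $s$ chosen so $\langle u_2, su_2^*\rangle \geq 0$,
\[
(u_2 - s\tilde{u}_2)_m \;=\; \frac{\langle A_{m,\cdot},\, u_2 - su_2^*\rangle}{d_m - \lambda_2(L)},
\]
so it suffices to bound the numerator by $o(\log n/\sqrt{n})$ uniformly in $m$. A direct Davis--Kahan bound gives $\|u_2 - su_2^*\|_2 = O(1/\sqrt{\log n})$, which combined with $\|A_{m,\cdot}\|_2 = O(\sqrt{\log n})$ yields only $O(1)$, short by a factor $\sqrt{n/\log n}$. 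To gain the extra factor I would employ the leave-one-out construction of~\cite{AFWZ17}: for each $m$, define $A^{(m)}$ by replacing row $m$ and column $m$ of $A$ with their expectations, let $L^{(m)} = D^{(m)} - A^{(m)}$, and let $u_2^{(m)}$ be its Fiedler eigenvector with sign aligned to $u_2$. Split
\[
\langle A_{m,\cdot},\, u_2 - su_2^*\rangle \;=\; \langle A_{m,\cdot},\, u_2 - u_2^{(m)}\rangle + \langle A_{m,\cdot},\, u_2^{(m)} - su_2^*\rangle.
\]
The first term is estimated via Cauchy--Schwarz and Davis--Kahan for the pair $(L, L^{(m)})$, exploiting that $L - L^{(m)}$ is supported on the $m$th row and column and that the eigengap of $L$ is of order $\log n$; the second term exploits the crucial independence between $A_{m,\cdot}$ and $u_2^{(m)}$ to yield, by conditional Bernstein, a bound of order $\sqrt{\log n}\,\|u_2^{(m)} - su_2^*\|_2 + \log n\,\|u_2^{(m)} - su_2^*\|_\infty$.

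The main obstacle is precisely the recursion now lurking: the $\ell_\infty$ bound on $u_2^{(m)} - su_2^*$ feeds into its own estimate. Following the bootstrapping strategy of~\cite{AFWZ17}, one writes a fixed-point identity for each $u_2^{(m)}$ mirroring that of $u_2$ and derives an implicit inequality of the form $X \leq C_1 + C_2\, X/\sqrt{\log n}$ for $X = \max_m \sqrt{n}\,\|u_2^{(m)} - su_2^*\|_\infty$, which can be absorbed for $n$ large enough. The genuinely new ingredient compared to~\cite{AFWZ17} is the Laplacian setting: the row independence of $A$ is destroyed by the diagonal degree matrix $D$, and the specific choice $\tilde{u}_2 = (D - \lambda_2(L) I)^{-1} A u_2^*$ is tailored so that the row-wise identity $(d_m - \lambda_2(L))(u_2)_m = (A u_2)_m$ restores the decoupling necessary for Bernstein. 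Union-bounding over $m \in \{1,\ldots,n\}$ and combining with (a) finishes the proof.
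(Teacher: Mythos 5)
Your overall architecture is the same as the paper's: the approximation $\tilde{u}_2=(D-\lambda_2(L)I)^{-1}Au_2^*$, the two claims (a) and (b), the fixed-point identity $(u_2-\tilde u_2)_m=\langle A_{m\cdot},u_2-u_2^*\rangle/(d_m-\lambda_2(L))$, the leave-one-out vectors $u_2^{(m)}$, the row-concentration/conditional Bernstein step, and the self-absorbing bootstrap for $\|u_2\|_\infty$. Part (b) of your plan is essentially the paper's Lemmas~\ref{lem:L:u2-um} and~\ref{lem:L:A(u-u*)}.

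However, your step (a) has a genuine gap. You claim that a Bernstein bound plus a union bound over $m$ gives $\dini-\douti\geq\frac{\al-\be}{2}\log n-O(\sqrt{\log n})$ uniformly in $m$. This is false in the critical regime: each $\dini-\douti$ is a difference of binomials with mean $\frac{\al-\be}{2}\log n$ and standard deviation $\Theta(\sqrt{\log n})$, so to survive a union bound over $n$ nodes you must allow deviations at the scale $t$ where $\exp(-ct^2/\log n)\approx n^{-1}$, i.e.\ $t=\Theta(\log n)$ — the same order as the mean. The minimum over $m$ of $\dini-\douti$ really does drop by $\Theta(\log n)$ below its mean, and whether it stays positive (indeed, bounded below by $\epsilon\log n$) is exactly the content of the sharp threshold. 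Moreover, a sub-Gaussian/Bernstein exponent is not tight enough to reach $\cond$: Bernstein would require roughly $(\al-\be)^2>4(\al+\be)$, which is strictly stronger than $(\sqrt{\al}-\sqrt{\be})^2>2$ (e.g.\ $\al=4$, $\be=0.1$ satisfies the theorem's hypothesis but fails the Bernstein condition). The paper instead invokes the sharp binomial-difference tail bound of Lemma~\ref{lem:BinomialDiff}, whose exponent $-(\sqrt{\al}-\sqrt{\be})^2/2+\epsilon\log(\al/\be)/2$ is $<-1$ for $\epsilon$ small precisely when $\cond$, so the union bound closes with the entries of $Au_2^*$ bounded below only by $\epsilon\log n/\sqrt{n}$ for a small constant $\epsilon$, not by $\frac{\al-\be}{2}\cdot\frac{\log n}{\sqrt n}(1-o(1))$. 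You need to replace your Bernstein argument in (a) by this large-deviation estimate (or an equivalent Chernoff computation at the correct rate); the rest of your plan then goes through.
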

One can see Theorem~\ref{thm:L:strong-consis} implies that the unnormalized spectral clustering achieves strong consistency down to the information theoretical limits. Let the vector $(D-\lambda_2(L)I)^{-1}Au_2^*$ be the approximation to $u_2$, the second eigenvector of $L$. Theorem~\ref{thm:L:strong-consis} follows after the following two claims. With probability $1-o(1)$,
\begin{enumerate}[(i)]
	\item $\norminf{u_2-(D-\lambda_2I)^{-1}Au_2^*}=o(1/\sqrt{n})$;
	\item $\sgn{(D-\lambda_2(L)I)^{-1}Au_2^*}$ exactly recovers the planted communities and $$\left|\lp(D-\lambda_2(L)I)^{-1}Au_2^*\rp_i\right|\geq\frac{\eta}{\sqrt{n}}$$
	for all $i$ and some $\eta>0$.
\end{enumerate}
The two claims are up to sign of $u_2$, meaning we write $su_2$ ($s\in\lc1,-1\rc$) simply as $u_2$. We first look at claim (ii). Note that $\dmax-\lambda_2(L)=\bigo{\log n}$, it boils down to showing that the entries of $Au^*_2$ are well bounded away from zero by an order of $\log n/\sqrt{n}$. Since each entry of $Au^*_2$ can be expressed as the difference of two independent binomial variables, an inequality that was introduced in~\cite{Abbe17,ABH16} gives the desired tail bound.
\begin{lemma}\label{lem:BinomialDiff}
	Suppose $\al\geq\be$, $\lc W_i\rc_{i=1}^{n/2}$ are $i.i.d$ Bernoulli($\al\log n/n$), and $\lc Z_i\rc_{i=1}^{n/2}$ are $i.i.d$ Bernoulli($\be\log n/n$), independent of $\lc W_i\rc_{i=1}^{n/2}$. For any $\epsilon\in\mathbb{R}$, we have
	$$\prb{\sum_{i=1}^{n/2}W_i-\sum_{i=1}^{n/2}Z_i\leq \epsilon\log n}\leq n^{-(\sqrt{\al}-\sqrt{\be})^2/2+\epsilon\log(\al/\be)/2}.$$
\end{lemma}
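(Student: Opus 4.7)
The plan is a standard Chernoff-type calculation applied to $S := \sum_{i=1}^{n/2} W_i - \sum_{i=1}^{n/2} Z_i$. First I would invoke Markov's inequality on $e^{-tS}$: for any $t \geq 0$,
$$\mathbb{P}(S \leq \epsilon \log n) = \mathbb{P}\bigl(e^{-tS} \geq e^{-t\epsilon \log n}\bigr) \leq e^{t\epsilon \log n}\, \mathbb{E}\bigl[e^{-tS}\bigr].$$
By independence this moment generating function factors as $\bigl(\mathbb{E}[e^{-tW_1}]\bigr)^{n/2}\bigl(\mathbb{E}[e^{tZ_1}]\bigr)^{n/2}$. Using $1+x \leq e^x$ on the Bernoulli MGFs gives
$\mathbb{E}[e^{-tW_1}] = 1 + p(e^{-t}-1) \leq \exp(p(e^{-t}-1))$ and $\mathbb{E}[e^{tZ_1}] \leq \exp(q(e^{t}-1))$, where $p = \alpha \log n/n$ and $q = \beta \log n/n$. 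Plugging in yields
$$\mathbb{P}(S \leq \epsilon \log n) \leq \exp\!\Bigl(\tfrac{\log n}{2}\bigl[\alpha(e^{-t}-1) + \beta(e^{t}-1) + 2\epsilon t\bigr]\Bigr).$$

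The key step is to choose $t$ explicitly rather than optimize over $\epsilon$. Setting $t = \tfrac{1}{2}\log(\alpha/\beta)$, which is nonnegative by the hypothesis $\alpha \geq \beta$, gives $e^{t} = \sqrt{\alpha/\beta}$ and $e^{-t} = \sqrt{\beta/\alpha}$, so
$$\alpha(e^{-t}-1) + \beta(e^{t}-1) = 2\sqrt{\alpha\beta} - (\alpha+\beta) = -(\sqrt{\alpha}-\sqrt{\beta})^2,$$
while $2\epsilon t = \epsilon \log(\alpha/\beta)$. Substituting back yields exactly $n^{-(\sqrt{\alpha}-\sqrt{\beta})^2/2 + \epsilon \log(\alpha/\beta)/2}$, which is the claimed bound.

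There is no real obstacle here: this is a textbook application of the Chernoff method, and the only non-obvious move is the particular choice of $t$, which is what makes the Hellinger-type combination $2\sqrt{\alpha\beta} - (\alpha+\beta)$ appear and matches the information-theoretic threshold $\sqrt{\alpha} - \sqrt{\beta} > \sqrt{2}$ from \cite{ABH16,mossel_consistency_2016}. Minor bookkeeping points are that $\alpha \geq \beta$ is needed for $t \geq 0$ (given by assumption), and that this fixed-$t$ bound is not the tight Chernoff bound for $\epsilon \neq 0$ but is adequate and clean; the boundary case $\beta = 0$ is excluded in the applications since $\beta$ represents the inter-community edge density.
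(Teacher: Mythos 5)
Your proof is correct and complete. The paper itself does not prove this lemma but cites it from \cite{Abbe17,ABH16}; the argument in those references is the same exponential tilting you use. Your calculation is sound: Markov on $e^{-tS}$ requires $t\geq 0$ (so that $\{S\leq\epsilon\log n\}=\{e^{-tS}\geq e^{-t\epsilon\log n}\}$), the MGF factors by independence, $1+x\leq e^x$ linearizes the Bernoulli MGFs, and the choice $t=\tfrac{1}{2}\log(\alpha/\beta)$ produces $\alpha(e^{-t}-1)+\beta(e^t-1)=2\sqrt{\alpha\beta}-(\alpha+\beta)=-(\sqrt{\alpha}-\sqrt{\beta})^2$, giving exactly the stated exponent. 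Your remark that this $t$ is the optimizer only when $\epsilon=0$, but still valid for all $\epsilon\in\mathbb{R}$ because Markov holds for every $t\geq 0$, is the right observation — and it is precisely this non-optimized but $\epsilon$-uniform bound that makes the Hellinger-type quantity $(\sqrt{\alpha}-\sqrt{\beta})^2$ appear, which is what the downstream application needs. The only caveat, which you already flag, is that $\beta=0$ must be excluded (it is, throughout the paper) since $\log(\alpha/\beta)$ would be undefined; one could handle $\beta=0$ directly since then $S=\sum W_i\geq 0$ a.s., but it is not needed here.
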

To prove claim (i), note $(D-\lambda_2I)u_2=Au_2$ and expand
$$u_2-(D-\lambda_2I)^{-1}Au_2^*=(D-\lambda_2I)^{-1}A(u_2-u_2^*).$$
We have established that $\dmin\geq(\be+\epsilon)\log n$ and $\lambda_2\leq\be\log n+\bigo{\log n/\sqrt{n}}$, therefore $\norminf{(D-\lambda_2I)^{-1}}=\bigo{1/\log n}$. It remains to show that
\[
\norminf{A(u_2-u_2^*)}=\smallo{\frac{\log n}{\sqrt{n}}}.
\]
This quantity is at the center of both unnormalized and normalized spectral clustering. The technique that we use to control $\norminf{A(u_2-u_2^*)}$ is originated from~\cite{AFWZ17}, in which a row-concentration property of $A$ is the key. We cite the row-concentration in the following lemma.
\begin{lemma}[{\bf Row-concentration property of the adjacency matrix}]\label{lem:row-con}
	Let $w\in\mathbb{R}^n$ be a fixed vector, $\lc X_i\rc_{i=1}^{n}$ be independent random variable where $X_i\sim\text{Bernoulli}(p_i)$. Suppose $p\geq\max_i p_i$ and $a>0$. Then
	$$\prb{\left| \sum_{i=1}^nw_i(X_i-\E{X_i})\right|\geq\frac{(2+a)pn}{1\vee\log\lp\frac{\sqrt{n}\norminf{w}}{\norm{w}}\rp}\norminf{w} }\leq 2e^{-a np}.$$
\end{lemma}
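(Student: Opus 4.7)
The plan is to bound the tail via the classical Chernoff--Cram\'er moment-generating-function approach, with the MGF parameter chosen to match the logarithmic factor $L := 1 \vee \log\lp\sqrt{n}\,\norminf{w}/\norm{w}\rp$ (note $L \geq 1$ always, since $\norm{w} \leq \sqrt{n}\norminf{w}$). I would first establish the one-sided bound
\[
\prb{\sum_i w_i(X_i - \E{X_i}) \geq \frac{(2+a)pn\norminf{w}}{L}} \leq e^{-anp},
\]
then apply the same argument to $-w$ (which leaves $\norm{w}$, $\norminf{w}$ and hence $L$ unchanged) to get the matching lower tail, and take a union bound to produce the factor $2$ in the lemma.

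For the MGF estimate, the inequalities $\log(1+y) \leq y$, $e^x - 1 - x \geq 0$, and $p_i \leq p$ together give
\[
\log \E{e^{\lambda w_i(X_i - p_i)}} \leq p_i(e^{\lambda w_i} - 1 - \lambda w_i) \leq p(e^{\lambda w_i} - 1 - \lambda w_i).
\]
The Bennett-type inequality $e^x - 1 - x \leq (e^c - 1 - c)x^2/c^2$ for $x \leq c$ (which holds because $(e^x-1-x)/x^2$ is nondecreasing on $\mathbb{R}$), applied with $c = L$ and $\lambda = L/\norminf{w}$ so that $\lambda w_i \leq L$ for every $i$, turns the Chernoff bound into
\[
\prb{\sum_i w_i(X_i - p_i) \geq t} \leq \exp\lp -\frac{Lt}{\norminf{w}} + \frac{(e^L - 1 - L)\,p\,\norm{w}^2}{\norminf{w}^2}\rp.
\]
Setting $t = (2+a)pn\norminf{w}/L$ makes the linear term exactly $-(2+a)np$.

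The only nontrivial step is to confirm that the variance-type term is at most $2np$, i.e.,
\[
\frac{(e^L - 1 - L)\,\norm{w}^2}{\norminf{w}^2} \leq 2n. \qquad (\star)
\]
I would verify $(\star)$ by splitting on the value of $L$. If $L = 1$, then $e^L - 1 - L = e - 2 < 1$ and $\norm{w}^2/\norminf{w}^2 \leq n$, so $(\star)$ is immediate. If instead $L = \log\theta$ with $\theta := \sqrt{n}\norminf{w}/\norm{w} > e$, then $\norm{w}^2/\norminf{w}^2 = n/\theta^2$ while $e^L - 1 - L \leq \theta$, and the product is at most $n/\theta < n$. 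Combining, the total exponent is at most $-(2+a)np + 2np = -anp$, finishing the one-sided bound. The main conceptual obstacle is really just spotting the correct scaling $\lambda = L/\norminf{w}$; once that choice is in hand, the proof collapses to the two-case verification of $(\star)$.
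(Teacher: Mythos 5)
Your argument is correct, and every step checks out: the MGF bound $\log\E{e^{\lambda w_i(X_i-p_i)}}\leq p\lp e^{\lambda w_i}-1-\lambda w_i\rp$ is valid (the $\log(1+y)\leq y$ step applies even when $w_i<0$ since $p_i(e^{\lambda w_i}-1)>-1$), the monotonicity of $(e^x-1-x)/x^2$ justifies the Bennett-type comparison with $c=L$, the choice $\lambda=L/\norminf{w}$ gives $\lambda w_i\leq L$ for all $i$, and your two-case verification of $(\star)$ is airtight (in the case $L=\log\theta>1$ one gets $(\theta-1-\log\theta)\cdot n/\theta^2<n$, and in the case $L=1$ one gets $(e-2)\cdot n<n$), so the exponent is at most $-(2+a)np+2np=-anp$ and the symmetric application to $-w$ yields the factor $2$. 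Note that the paper does not prove this lemma at all --- it is quoted verbatim from Abbe, Fan, Wang, and Zhong \cite{AFWZ17} --- so your contribution is a self-contained proof; it follows essentially the same Chernoff--Bennett route as the original reference, with the key insight being exactly the scaling $\lambda=L/\norminf{w}$ that you identified.
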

The row-concentration property of $A$ is probabilistic, meaning $w$ and $A$ must be independent. But $(u_2-u_2^*)$ and $A$ are not independent. To overcome this, we use the recently developed and popularized leave-one-out technique. Specifically we consider an auxiliary vector $\um$ defined as the second eigenvector of $\Lm$, the unnormalized Laplacian matrix of $\Am$, where $\Am$ is constructed in a way that $\Am=A$ everywhere except for the $m$-th row and $m$-th column which are replaced by those of $A^*$. The purpose of this auxiliary vector is that the $m$-th row of $A$, denoted by $\mrow{A}$, is now independent of $(\um-u^*)$. Thus the $m$-th entry of $A(u_2-u_2^*)$ is bounded by
\[
\abs{\mrow{A}\lp u_2-u_2^*\rp} \leq\abs{\mrow{A}\lp u_2-\um\rp}+\abs{\mrow{A}\lp \um-u_2^*\rp}.
\]
The first term in the right hand side is well bounded by the small $\ell_2$-norm of $\lp u_2-\um\rp$. In fact, by exploiting the structural difference of $L$ and $\Lm$, the Davis-Kahan theorem eventually gives the bound
\[
\norm{u_2-\um}=\bigo{\norminf{u_2}}.
\]
Using this in conjunction with the fact that
\[
\normtwotoinf{A}\leq\normtwotoinf{\E{A}}+\norm{A-\E{A}}=\bigo{\sqrt{\log n}},
\]
we are able to bound the first term
\[
\abs{\mrow{A}\lp u_2-\um\rp}\leq\normtwotoinf{A}\norm{u_2-\um}=\bigo{\sqrt{\log n}\norminf{u_2}}.
\]
For the second term, we can now use the row-concentration property which yields
\[
\abs{\mrow{A}\lp \um-u_2^*\rp}=\bigo{\frac{\log n\norminf{u_2}}{\log\log n}}.
\]
Thus
\[
\norminf{A(u_2-u_2^*)}=\smallo{\log n\norminf{u_2}}.
\]
Finally we prove $\norminf{u_2}=\bigo{1/\sqrt{n}}$. Indeed, 
$$\norminf{u_2}=\norminf{(D-\lambda_2 I)^{-1}Au_2}\leq\norminf{(D-\lambda_2 I)^{-1}Au_2^*}+\norminf{(D-\lambda_2 I)^{-1}A(u_2-u_2^*)}.$$
Noting that $\norminf{(D-\lambda_2 I)^{-1}A(u_2-u_2^*)}=o(\norminf{u_2})$, the second term on the right hand side is thus absorbed into the left hand side. Therefore
$$\norminf{u_2}=\bigo{\norminf{(D-\lambda_2 I)^{-1}Au_2^*}}=\bigo{\frac{1}{\sqrt{n}}}.$$
Claim (i) then follows.

\subsubsection*{Normalized spectral clustering}
The proof for the normalized spectral clustering is similar to its unnormalized counterpart, albeit more technically involved. Let $u_2$ be the eigenvector of $(L,D)$ that corresponds to the second smallest eigenvalue $\lambda_2(\Lsym)$. We use the vector $(1-\lambda_2(\Lsym))^{-1}\Dneg A\utwostar$ as an approximation to $u_2$. Then we prove with probability $1-o(1)$,
\begin{enumerate}[(i)]
	\item $\norminf{u_2-(1-\lambda_2(\Lsym))^{-1}\Dneg Au_2^*}=o(1/\sqrt{n})$;
	\item $\sgn{(1-\lambda_2(\Lsym))^{-1}\Dneg Au_2^*}$ exactly recovers the planted communities and $$\left|\lp(1-\lambda_2(\Lsym))^{-1}\Dneg Au_2^*\rp_i\right|\geq\frac{\eta}{\sqrt{n}}$$
	for all $i$ and some $\eta>0$.
\end{enumerate}

\begin{theorem}\label{thm:Lsym:strong-consis}
	Let $p=\al\log n/n$, $q=\be\log n/n$ and $\cond$. Then there exists $\eta= \eta(\al,\be) >0$ and $s\in\lc\pm 1\rc$ such that with probability $1-o(1)$, 
	$$\sqrt{n}(su_2)_{i} \geq\eta \text{  for } i\leq\frac{n}{2}$$
	and
	$$\sqrt{n}(su_2)_{i} \leq-\eta \text{  for } i\geq\frac{n}{2}+1.$$
	
\end{theorem}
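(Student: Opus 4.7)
The plan mirrors the proof of Theorem~\ref{thm:L:strong-consis} but with the degree matrix $D$ reappearing both as the right-hand side in the generalized eigenvalue problem and through the identity $(1-\lambda_2(\Lsym))\,Du_2 = Au_2$, which rewrites to $u_2 = (1-\lambda_2(\Lsym))^{-1}D^{-1}Au_2$. Writing $\tilde u_2 := (1-\lambda_2(\Lsym))^{-1}D^{-1}Au_2^*$, it suffices to establish, with probability $1-o(1)$,
\begin{enumerate}[(i)]
\item $\norminf{u_2-\tilde u_2}=\smallo{1/\sqrt{n}}$, and
\item every entry of $\tilde u_2$ has the correct sign, with $|(\tilde u_2)_i|\ge\eta/\sqrt{n}$ for some $\eta=\eta(\al,\be)>0$.
\end{enumerate}
Under $\cond$, Theorem~\ref{thm:lambda(Lsym)} gives $\lambda_2(\Lsym)=\tfrac{2\be}{\al+\be}+\smallo{1}$, so $1-\lambda_2(\Lsym)=\tfrac{\al-\be}{\al+\be}+\smallo{1}=\Theta(1)$, and by Theorem~\ref{thm:lambda(Lsym)} combined with Theorem~\ref{thm:Lsym-concen} and Weyl's inequality, $\lambda_3(\Lsym)\ge 1-\bigo{1/\sqrt{\log n}}$, so the relevant eigenvalues are well separated.

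I would first dispose of (ii). By Lemma~\ref{lem:dmin} and a Chernoff upper tail, $d_i=\Theta(\log n)$ uniformly. Each entry $(Au_2^*)_i$ is (up to a factor $1/\sqrt{n}$ and a community-dependent sign) a difference of two independent binomials of the type studied in Lemma~\ref{lem:BinomialDiff}. Since $\cond$ gives $(\sqrt{\al}-\sqrt{\be})^2/2>1$, there exists $\epsilon=\epsilon(\al,\be)>0$ so that Lemma~\ref{lem:BinomialDiff} and a union bound over the $n$ vertices ensure $|(Au_2^*)_i|\ge \epsilon\log n/\sqrt{n}$ with the correct sign, simultaneously for all $i$, with probability $1-\smallo{1}$. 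Dividing by $d_i=\Theta(\log n)$ and multiplying by $(1-\lambda_2(\Lsym))^{-1}=\Theta(1)$ yields (ii).

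For (i), expanding the self-consistency identity gives
\[
u_2 - \tilde u_2 \;=\; (1-\lambda_2(\Lsym))^{-1}D^{-1}A(u_2-u_2^*),
\]
so $\norminf{u_2-\tilde u_2}\le \bigo{1/\log n}\cdot\norminf{A(u_2-u_2^*)}$, and it is enough to show $\norminf{A(u_2-u_2^*)}=\smallo{\log n/\sqrt{n}}$. This is exactly the quantity that drives the unnormalized proof, and I would attack it with the leave-one-out device: for each $m$, define $\Am$ by replacing row and column $m$ of $A$ by those of $A^*$, form the corresponding degree matrix $\Dm$ and Laplacian $\Lm=\Dm-\Am$, and let $\um$ be the Fiedler vector of $(\Lm,\Dm)$, which is independent of $\mrow{A}$. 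Splitting
\[
|\mrow{A}(u_2-u_2^*)|\le |\mrow{A}(u_2-\um)|+|\mrow{A}(\um-u_2^*)|,
\]
the first summand is controlled by $\normtwotoinf{A}\cdot\norm{u_2-\um}$, with $\normtwotoinf{A}=\bigo{\sqrt{\log n}}$ from $\norm{A-A^*}=\bigo{\sqrt{\log n}}$ (Lemma~\ref{lem:A-concen}) and $\normtwotoinf{A^*}=\bigo{\sqrt{\log n}}$, and the second by the row-concentration Lemma~\ref{lem:row-con} applied to the fixed vector $\um-u_2^*$, yielding a $\bigo{\log n/\log\log n}$ factor times $\norminf{\um-u_2^*}\lesssim\norminf{u_2}+\norm{u_2-\um}$.

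The main obstacle, and the only genuine deviation from the unnormalized argument, is bounding $\norm{u_2-\um}$ for the generalized eigenproblem $(L,D)$ versus $(\Lm,\Dm)$. Here I would invoke the generalized Davis-Kahan Theorem~\ref{thm:DK} applied to $\hat M=D^{-1}L$ and $M=(\Dm)^{-1}\Lm$, with $\hat\lambda=\lambda_2(\Lsym)$ and $\hat u=\um$. The absolute separation $\delta$ of $\hat\lambda$ from the remaining eigenvalues of $(\Dm)^{-1}\Lm$ is $\Theta(1)$ (by the same eigenvalue analysis applied to the rank-two perturbation $\Am-A$), and $\kappa(D)=\bigo{1}$ since $d_{\max}/d_{\min}=\bigo{1}$. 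The residual $\norm{((\Dm)^{-1}\Lm-\lambda_2(\Lsym)I)\um}$ must then be shown to be $\bigo{\norminf{u_2}}$; because $\Am-A$ and $\Dm-D$ differ from zero only in row and column $m$, the residual is supported on the $m$-th coordinate direction and its $n/2$-block partners, so its norm reduces to a sum of $|(u_2)_m|$ and mean-zero noise terms that are handled by standard concentration, all of size $\bigo{\norminf{u_2}}$. With this in hand, summarizing over $m$ yields $\norminf{A(u_2-u_2^*)}=\smallo{\log n}\,\norminf{u_2}$, so substituting back into $u_2=(1-\lambda_2(\Lsym))^{-1}D^{-1}Au_2$ and absorbing the $\smallo{\norminf{u_2}}$ contribution into the left-hand side gives $\norminf{u_2}=\bigo{\norminf{\tilde u_2}}=\bigo{1/\sqrt{n}}$; iterating once more delivers (i) and closes the proof.
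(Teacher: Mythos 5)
Your overall architecture is exactly the paper's: the same approximation $\tilde u_2=(1-\lambda_2(\Lsym))^{-1}\Dneg A\utwostar$, claim (ii) via Lemma~\ref{lem:BinomialDiff} and degree control, claim (i) via $\norminf{A(u_2-\utwostar)}=\smallo{\log n/\sqrt{n}}$ using the leave-one-out vectors, the row-concentration Lemma~\ref{lem:row-con}, and the final bootstrap absorbing the $\smallo{\norminf{u_2}}$ term. However, the one step you yourself single out as the genuine new difficulty --- bounding $\norm{u_2-\um}$ for the generalized problem --- is where your sketch has a real gap, in two respects.

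First, your Davis--Kahan setup is inverted: you take $\hat u=\um$ with $M=(\Dm)^{-1}\Lm$, but then $\um$ is itself an eigenvector of $M$, so the projection $P\hat u$ onto the complement of the span of the selected eigenvectors of $M$ is zero and the residual $\norm{((\Dm)^{-1}\Lm-\lambda_2(\Lsym)I)\um}$ only measures the eigenvalue gap $|\lambda_2(\Lsymm)-\lambda_2(\Lsym)|$; you learn nothing about $u_2$. The correct assignment (as in Lemma~\ref{lem:Lsym:u2-u*}) is $\hat u=u_2$, $X_1=\bigl[\tfrac{1}{\sqrt n}\ones_n\;\;\um\bigr]$, with residual $\norm{((\Dm)^{-1}\Am-\Dneg A)u_2}$. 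Second, and more substantively, even with the correct setup Theorem~\ref{thm:DK} only controls the Euclidean-orthogonal component $c$ of $u_2$ off $\mathrm{span}\{\ones_n,\um\}$. Because the eigenvectors of $(\Lm,\Dm)$ are $\Dm$-orthogonal rather than orthogonal, and $u_2$ satisfies $\inner{\ones_n}{Du_2}=0$ rather than $\inner{\ones_n}{u_2}=0$, a small $|c|$ does not by itself give $\norm{u_2-\um}$ small: you must separately show the $\ones_n$-component $a$ of $u_2$ is negligible. The paper does this through the identity $\inner{\ones_n}{\Dm u_2}=\inner{\ones_n}{(\Dm-D)u_2}=\bigo{\tfrac{\log n}{\sqrt n}\norminf{u_2}}$ divided by $\inner{\ones_n}{\Dm\ones_n}/n=\Omega(\log n)$, yielding $|a|=\bigo{\norminf{u_2}/\sqrt n+|c|}$; nothing in your sketch plays this role. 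A minor further point: the residual $((\Dm)^{-1}\Am-\Dneg A)u_2$ is not supported only on coordinate $m$ --- every row $i$ with $A_{im}\neq A^*_{im}$ contributes, since $\dm_i\neq d_i$ perturbs the whole $i$-th row of $\Dneg A$ --- but each such entry is $\bigo{\norminf{u_2}|A_{im}-A^*_{im}|/\log n}$ and the $\ell_2$-sum is still $\bigo{\norminf{u_2}}$, so this part of your claim survives after the correct computation.
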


\section{Numerical explorations}\label{s:numerics}
We illustrate the strong consistency of both spectral clustering methods in Figure~\ref{phase}. It can be clearly seen that both methods achieve strong consistency down to the theoretical threshold $\cond$. The major behavioral difference between the two methods is when we are below this threshold, namely when $\al>\be$ but $\notcond$. In this region, strong consistency is impossible but weak consistency is possible. In Figure~\ref{agreement} we plot the empirical average agreement for each method. Here the~\emph{agreement} is defined as the proportion of the correctly classified nodes. We see that the normalized spectral clustering performs much better in the region between the red line and the green line. The unnormalized spectral clustering does not work as well as the normalized counterpart does since the unnormalized Laplacian is unable to preserve the ``order'' of the eigenvalues (in the sense discussed at the beginning of Section~\ref{s:main}). This shows that the bad concentration of $L$ indeed causes trouble in this sparsity regime. In fact we are able to find an eigenvector of $L$ that has a high agreement, but often this eigenvector is not the Fiedler eigenvector.  
\begin{figure}[h!]
	
	\begin{subfigure}[h]{0.5\linewidth}
		\includegraphics[width=\linewidth]{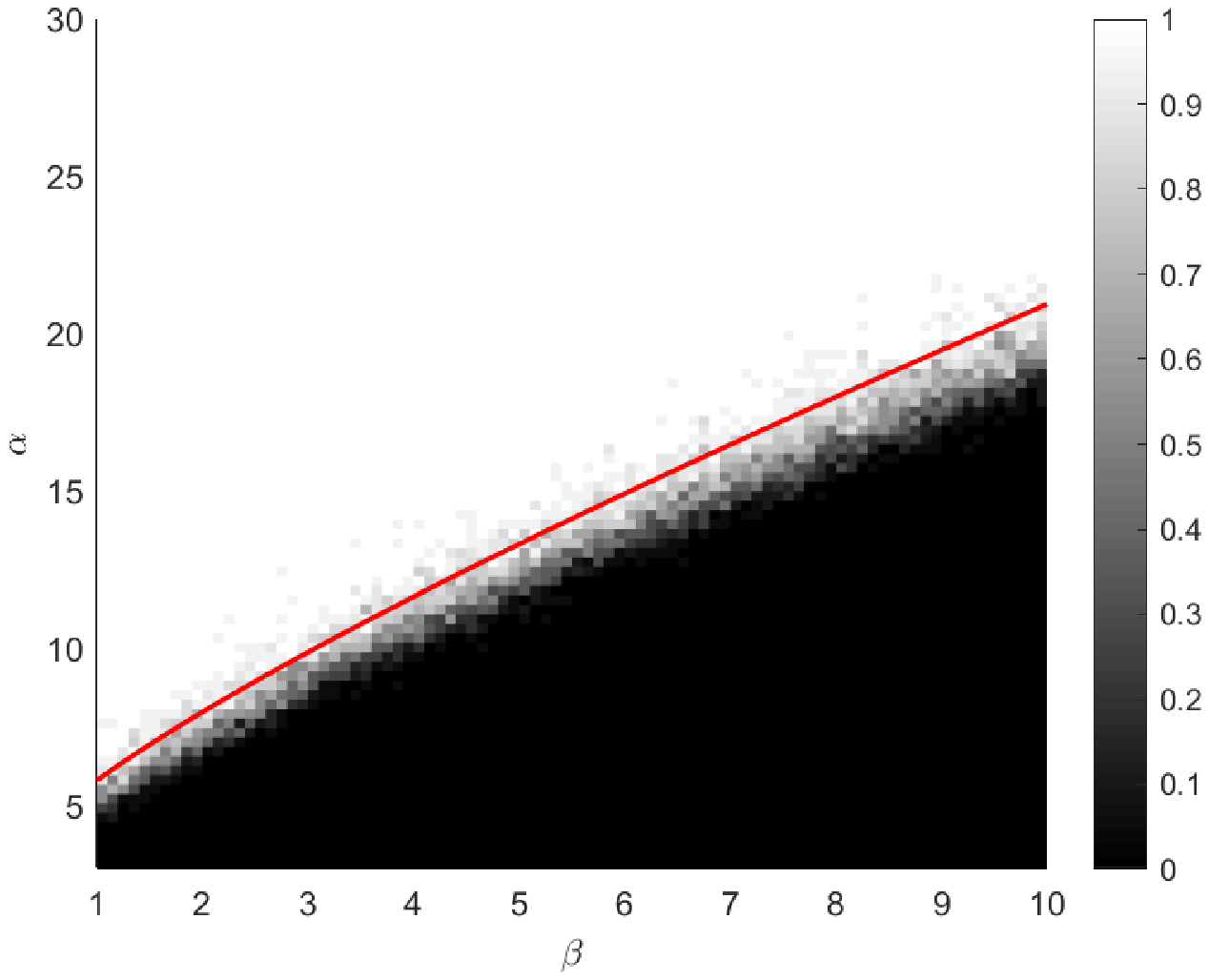}
		\caption{Unnormalized spectral clustering}
	\end{subfigure}
	\hfill
	\begin{subfigure}[h]{0.5\linewidth}
		\includegraphics[width=\linewidth]{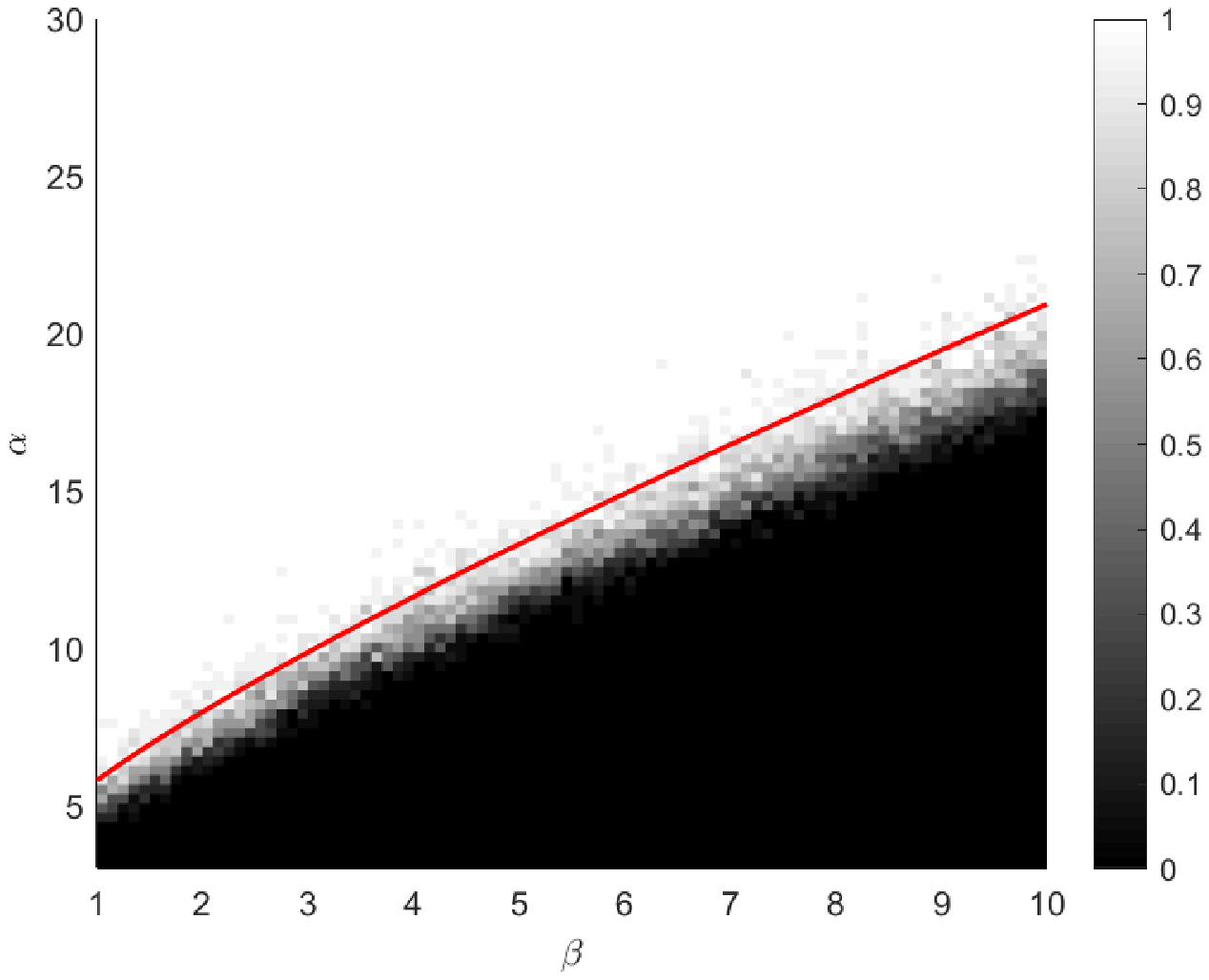}
		\caption{Normalized spectral clustering}
	\end{subfigure}%

	\caption{Empirical success rate of exact recovery for both spectral clustering methods. We fix $n=600$ and the number of trials to be 20. For each pair of $\al$ and $\be$, we run both methods and count how many times each method succeeds. Dividing by the number of trials, we obtain the empirical probability of success. The red line indicates the theoretical threshold $\sqrt{\al}-\sqrt{\be}=\sqrt{2}$ for strong consistency.} 
	\label{phase}
\end{figure}

\begin{figure}[h!]
	
	\begin{subfigure}[h]{0.5\linewidth}
		\includegraphics[width=\linewidth]{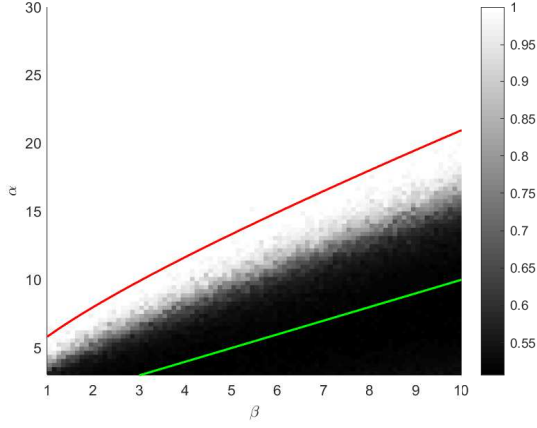}
		\caption{Unnormalized spectral clustering}
	\end{subfigure}
	\hfill
	\begin{subfigure}[h]{0.5\linewidth}
		\includegraphics[width=\linewidth]{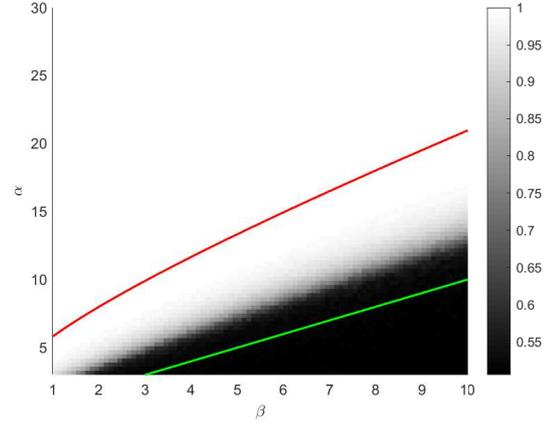}
		\caption{Normalized spectral clustering}
	\end{subfigure}%
	
	\caption{Empirical expectation of agreement for both spectral clustering methods. We fix $n=600$ and the number of trials to be 20. For each pair of $\al$ and $\be$ and each trial, we run both methods and calculate their agreements. Averaging over all trials, we obtain the empirical expectation of agreement. The red line indicates the theoretical threshold $\sqrt{\al}-\sqrt{\be}=\sqrt{2}$ for strong consistency. The green line is $\al=\be$, which serves as the theoretical boundary for weak consistency in this sparsity regime.} 
	\label{agreement}
\end{figure}

We further explore other possible choices of approximation $\tilde{u}_2$ to the second eigenvector of $L$ or $(L,D)$. Figure~\ref{approx} shows $\sqrt{n}\norminf{u_2-\tilde{u}_2}$ for different choices of $\tilde{u}_2$. These approximations can be interpreted from an iterative perspective. For example, our choice of $\tilde{u}_2=(D-\lambda_2(L)I)^{-1}A\utwostar$ for the unnormalized spectral clustering can be seen as the output of one-step fixed point iteration for solving the system $(D-\lambda_2(L)I)u=Au$ with initial guess $\utwostar$. The vector $\tilde{u}_2=(1-\lambda_2(\Lsym))^{-1}\Dneg A\utwostar=\Dneg A\utwostar/\lambda_2(\Dneg A)$ for the normalized spectral clustering can be seen as the output of an one-step power iteration on the matrix $\Dneg A$ with initial guess $\utwostar$, which is similar to the original idea in the paper of Abbe et al.\cite{AFWZ17}. We attempt to adopt the power iteration idea on the shifted Laplacian $\frac{n(p+q)}{2}P-L$, where $P=I-\frac{1}{n}J_{n\times n}$ is the projection onto the orthogonal complement space of span$\lc\ones_n\rc$. The purpose of introducing this shift is to make the Fiedler eigenvector correspond to the leading eigenvalue, and thus we can apply the idea of power iteration. However this idea does not seem to produce a satisfactory result. We also point out that the $\lambda_2(L)$ and $\lambda_2(\Lsym)$ in our approximations can be replaced with $\lambda_2(L^*)$ and $\lambda_2(\Lsym^*)$ respectively. Doing so will only introduce a higher order error in our analysis, which is confirmed by the results in Figure~\ref{approx}.
\begin{figure}[h!]
	
	\begin{subfigure}[h]{0.5\linewidth}
		\includegraphics[width=\linewidth]{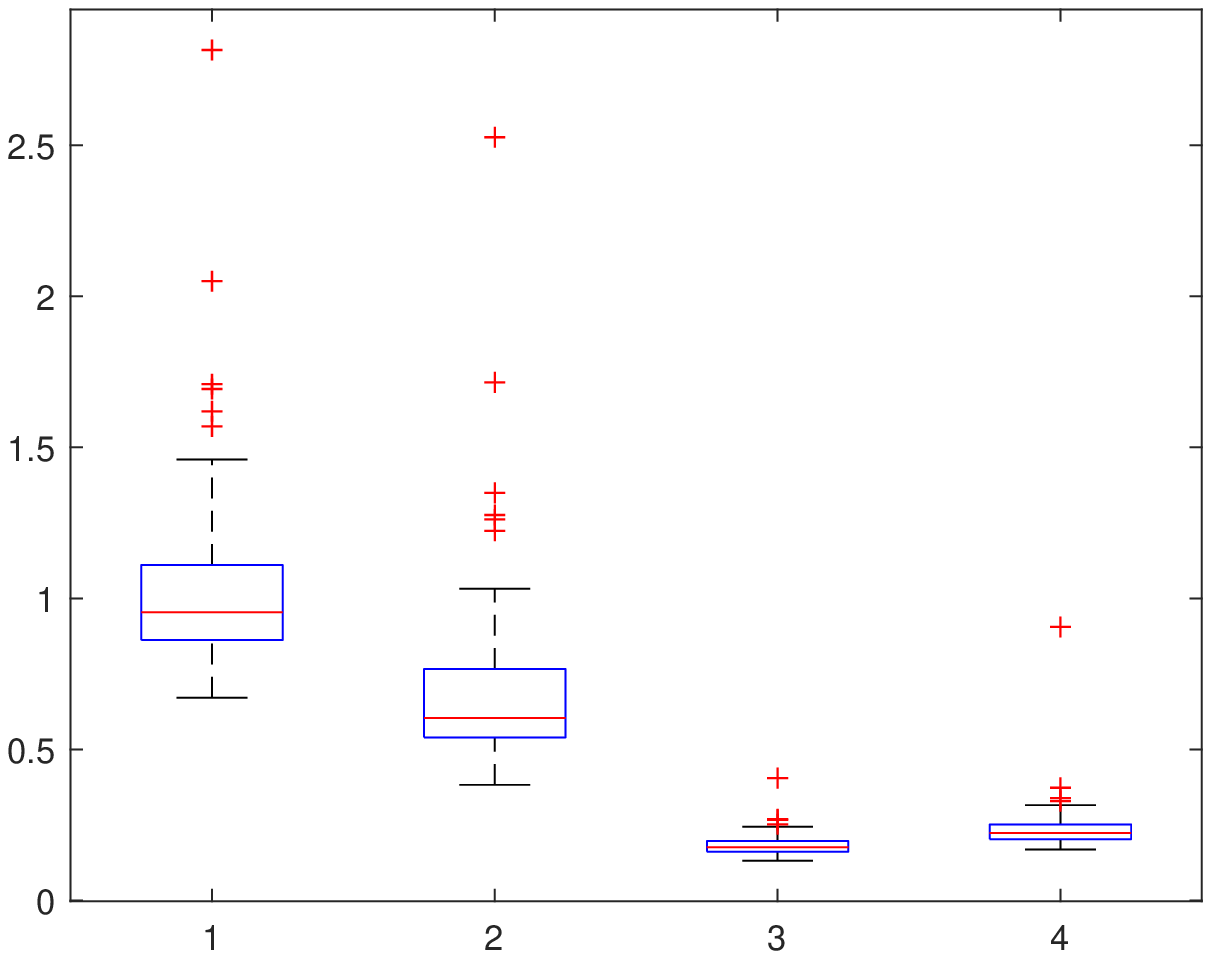}
		\caption{Unnormalized spectral clustering}
	\end{subfigure}
	\hfill
	\begin{subfigure}[h]{0.5\linewidth}
		\includegraphics[width=\linewidth]{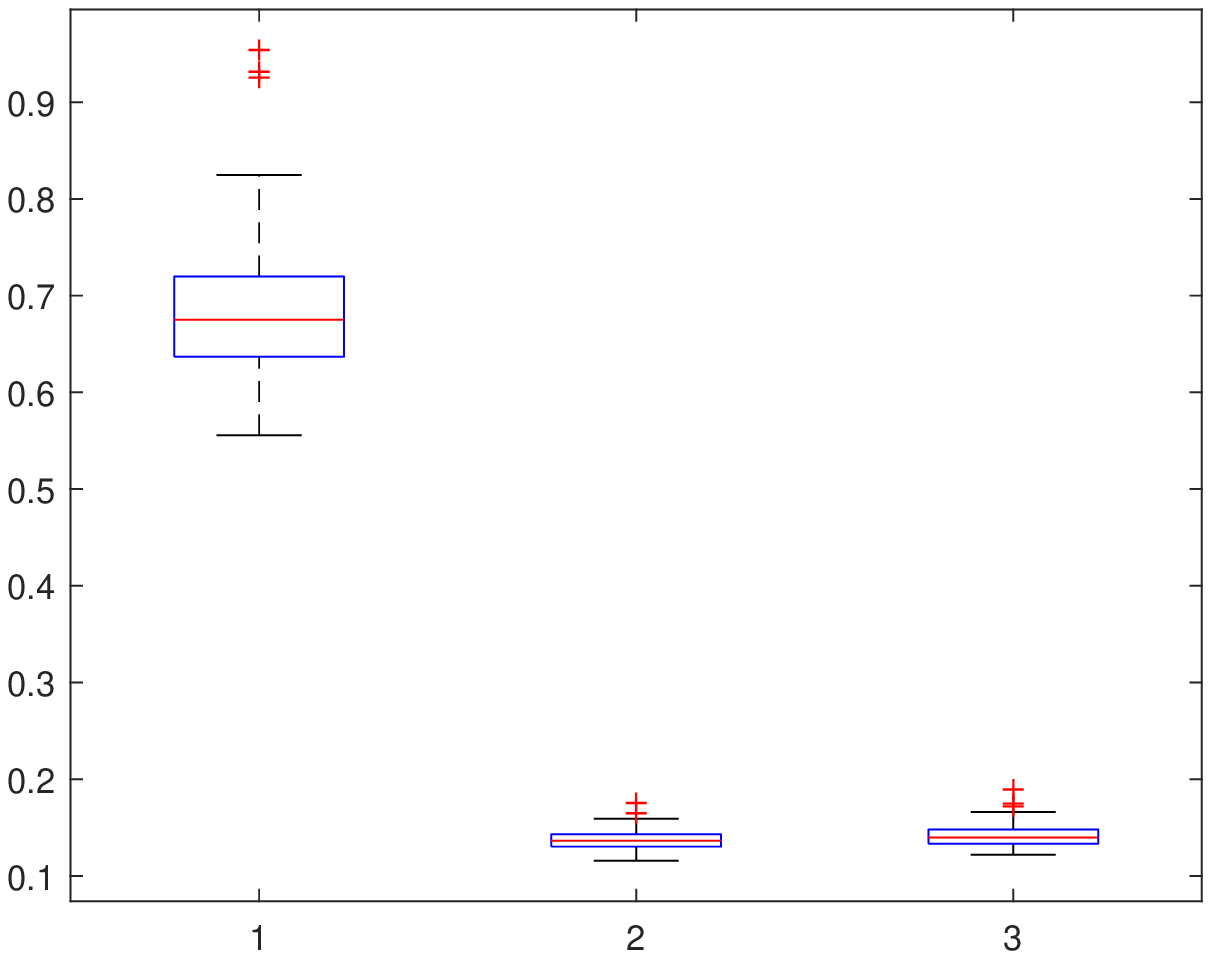}
		\caption{Normalized spectral clustering}
	\end{subfigure}%
	
	\caption{Boxplots showing $\sqrt{n}\norminf{u_2-\tilde{u}_2}$ (up to sign of $u_2$) for different choices of $\tilde{u}_2$. We fix $n=5000$, $\al=10$, $\be=2$ and the number of trials as 100. \textbf{Left: }(1) $\tilde{u}_2=\utwostar$, (2) $\tilde{u}_2= (\frac{n(p+q)}{2}P-L)\utwostar/(\frac{n(p+q)}{2}-\lambda_2(L))$ where $P=I-\frac{1}{n}J_{n\times n}$, (3) $\tilde{u}_2=(D-\lambda_2(L)I)^{-1}A\utwostar$, (4) $\tilde{u}_2=(D-\lambda_2(L^*)I)^{-1}A\utwostar$. \textbf{Right: }(1) $\tilde{u}_2=\utwostar$, (2) $\tilde{u}_2=(1-\lambda_2(\Lsym))^{-1}\Dneg A\utwostar$, (3) $\tilde{u}_2=(1-\lambda_2(\Lsym^*))^{-1}\Dneg A\utwostar$.}
	\label{approx}
\end{figure}

\section{Proofs}\label{s:proof}
\subsection{Proofs for Section 2.3}
\begin{proof}[{\bf Proof of Theorem~\ref{thm:DK}}]
	Let $N=V\Sigma V^H$ be the spectral decomposition of $N$. Define $\Nhalf=\Sigma^{\frac{1}{2}}V^H$ and $\Nneghalf=V\Sigma^{-\frac{1}{2}}$. Then $\lp\Nneghalf\rp^H M\Nneghalf$ is Hermitian and admits the spectral decomposition 
	\begin{equation}\label{eq:spectraldecomp}
	\lp\Nneghalf\rp^H M\Nneghalf=U\Lambda U^H
	\end{equation} 
	where $U$ is a unitary matrix and $\Lambda$ is a real diagonal matrix consisting of the eigenvalues. Left multiplying by $\Nneghalf$ and right multiplying by $\Nhalf$ on both sides in equation~\eqref{eq:spectraldecomp} gives
	$$N^{-1}M=X\Lambda X^{-1},$$
	where $X=\Nneghalf U$. We write
	\[
	r = (N^{-1}M-\hat{\lambda}I)\hat{u}=X\begin{bmatrix}
	\Lambda_1-\hat{\lambda}I&\\
	&\Lambda_2-\hat{\lambda}I
	\end{bmatrix} X^{-1}\hat{u}=X\begin{bmatrix}
	\Lambda_1-\hat{\lambda}I&\\
	&\Lambda_2-\hat{\lambda}I
	\end{bmatrix}\begin{bmatrix}
	\hat{c}\\
	\hat{s}
	\end{bmatrix},
	\]
	where $\hat{c}=Y_1^H\hat{u}$ and $\hat{s}=Y_2^H\hat{u}$. Multiplying both sides from the left by $\lp\Lambda_2-\hat{\lambda}I\rp^{-1}Y_2^H$ gives
	$$\hat{s}=\lp\Lambda_2-\hat{\lambda}I\rp^{-1}Y_2^Hr.$$
	Then 
	$$P\hat{u}=(Y_2^+)^HY_2^H\hat{u}=(Y_2^+)^H\hat{s}=(Y_2^+)^H\lp\Lambda_2-\hat{\lambda}I\rp^{-1}Y_2^Hr.$$
	Finally, note that
	\[
	\begin{bmatrix}
	X_1&X_1
	\end{bmatrix}^{-1}=U^H\Nhalf=\begin{bmatrix}
	U_1^H\\
	U_2^H
	\end{bmatrix}\Nhalf=\begin{bmatrix}
	Y_1^H\\
	Y_2^H
	\end{bmatrix},
	\]
	hence we  have $Y_2^H=U_2^H\Nhalf$. So
	\begin{align*}
	\norm{P\hat{u}}&\leq\norm{\Nneghalf}\norm{\lp U_2^H\rp^+}\norm{\lp\Lambda_2-\hat{\lambda}I\rp^{-1}}\norm{U_2^H}\norm{\Nhalf}\norm{r}\\
	&\leq\frac{\sqrt{\kappa(N)}\norm{(N^{-1}M-\hat{\lambda}I)\hat{u}}}{\delta}.
	\end{align*}

\end{proof}

\subsection{Proofs for Section 3.1}
\begin{proof}[{\bf Proof of Theorem~\ref{thm:Lsym-concen}}]
	We have
	\begin{align*}
	\norm{\Lsym-\Lsym^*}&=\norm{\Dneghalf A\Dneghalf-\Dstarneghalf A^*\Dstarneghalf}\\
	&\leq\norm{\Dneghalf(A-A^*)\Dneghalf}+\norm{\Dneghalf A^*\Dneghalf-\Dstarneghalf A^*\Dstarneghalf}.
	\end{align*}
	The first term on the right hand side is easily bounded by using Lemma~\ref{lem:A-concen} 
	$$\norm{\Dneghalf(A-A^*)\Dneghalf}\leq\frac{\norm{A-A^*}}{\dmin}\leq C_1(c_0,r)\frac{\sqrt{n\max_{ij}p_{ij}}}{\dmin}$$
	with probability at least $1-n^{-r}$.
	Denote $d=A\ones_n$ and $d^*=A^*\ones_n$, then the second term is bounded by
	\begin{align*}
	&\norm{\Dneghalf A^*\Dneghalf-\Dstarneghalf A^*\Dstarneghalf}\\
	\leq&\normfro{\Dneghalf A^*\Dneghalf-\Dstarneghalf A^*\Dstarneghalf}\\
	=&\sqrt{\sum_{i,j=1}^n p_{ij}^2\lp\frac{1}{\sqrt{d_id_j}}-\frac{1}{\sqrt{d^*_id^*_j}}\rp^2}\\
	\leq&\max_{ij}p_{ij}\sqrt{\sum_{i,j=1}^n\lp\frac{\sqrt{d_id_j}-\sqrt{d_i^*d_j^*}}{\sqrt{d_id_jd_i^*d_j^*}}\rp^2}\\
	\leq&\frac{\max_{ij}p_{ij}}{\dmin\dmin^*}\sqrt{\sum_{i,j=1}^n\lp\frac{d_id_j-d_i^*d_j^*}{\sqrt{d_id_j}+\sqrt{d_i^*d_j^*}}\rp^2}\\
	\leq&\frac{\max_{ij}p_{ij}}{2\min\{d_{\min},d^*_{\min}\}^3}\sqrt{\sum_{i,j=1}^n\lp d_id_j-d_i^*d_j^*\rp^2}\\
	=&\frac{\max_{ij}p_{ij}}{2\min\{d_{\min},d^*_{\min}\}^3}\normfro{\bd\bd^T-\bd^*\lp\bd^*\rp^T}\\
	\leq&\frac{\max_{ij}p_{ij}}{2\min\{d_{\min},d^*_{\min}\}^3}\lp\normfro{\bd(\bd-\bd^*)^T}+\normfro{(\bd-\bd^*)\lp\bd^*\rp^T}\rp\\
	=&\frac{\max_{ij}p_{ij}}{2\min\{d_{\min},d^*_{\min}\}^3}\lp\norm{\bd}\norm{\bd-\bd^*}+\norm{\bd-\bd^*}\norm{\bd^*}\rp\\
	\leq&\frac{\max_{ij}p_{ij}}{2\min\{d_{\min},d^*_{\min}\}^3}\lp\norm{A}\norm{A-A^*}\norm{\ones_n}^2+\norm{A^*}\norm{A-A^*}\norm{\ones_n}^2\rp\\
	\leq&\frac{\max_{ij}p_{ij}}{2\min\{d_{\min},d^*_{\min}\}^3}\lp\norm{A-A^*}+2\norm{A^*}\rp\norm{A-A^*}\norm{\ones_n}^2,
	\end{align*}
	where we have used the fact that $\normfro{uv^T}=\norm{uv^T}=\norm{u}\norm{v}$ for any $u$ and $v$. Again by using the bound for $(A-A^*)$, we get
	\begin{align*}
	&\norm{\Dneghalf A^*\Dneghalf-\Dstarneghalf A^*\Dstarneghalf}\\
	\leq&C_2(c_0,r)\frac{\max_{ij}p_{ij}}{\min\{d_{\min},d^*_{\min}\}^3}\lp \sqrt{n\max\nolimits_{ij}p_{ij}}+n\max\nolimits_{ij}p_{ij}\rp \sqrt{n\max\nolimits_{ij}p_{ij}}\cdot n\\
	\leq&C_3(c_0,r)\frac{\lp n\max_{ij}p_{ij}\rp^{5/2}}{\min\{d_{\min},d^*_{\min}\}^3}
	\end{align*} 
	with probability at least $1-n^{-r}$. Therefore combining the two terms we get
	\begin{align*}
	\norm{\Lsym-\Lsym^*}&\leq C_1(c_0,r)\frac{\sqrt{n\max_{ij}p_{ij}}}{\dmin}+C_3(c_0,r)\frac{\lp n\max_{ij}p_{ij}\rp^{5/2}}{\min\{d_{\min},d^*_{\min}\}^3}\\
	&=C_1(c_0,r)\frac{(\dmin^*)^2\sqrt{n\max_{ij}p_{ij}}}{\dmin(\dmin^*)^2}+C_3(c_0,r)\frac{\lp n\max_{ij}p_{ij}\rp^{5/2}}{\min\{d_{\min},d^*_{\min}\}^3}\\
	&\leq C_4(c_0,r)\frac{\lp n\max_{ij}p_{ij}\rp^{5/2}}{\min\{d_{\min},d^*_{\min}\}^3}
	\end{align*} 
	with probability at least $1-n^{-r}$.
\end{proof}

\subsection{Proofs for Section 3.2}
We start with some basic concentration inequalities.
\begin{lemma}\label{lem:concentration}
	\begin{enumerate}[(i)]
		\item (Chernoff) Let $\lc X_i\rc_{i=1}^n$ be independent variables. Assume $0\leq X_i\leq 1$ for each $i$. Let $X=X_1+\cdots+X_n$ and $\mu=\E{X}$. Then for any $t>0$,
		$$\mathbb{P}\left( |X-\mu|\geq t\right)\leq 2\exp\left( -\frac{t^2}{2\mu+t}\right). $$
		As a result, for any $r>0$, there exists $C=C(r)>0$ such that 
		$$\mathbb{P}\left( |X-\mu|\geq C\lp\log n+\sqrt{\mu\log n}\rp\right)\leq 2n^{-r}. $$
		\item (Bennett) Let $X\sim\text{Poisson}(\lambda)$. Then for any $0<x<\lambda$,
		$$\mathbb{P}\left( X\leq \lambda-x\right)\leq \exp\left( -\frac{x^2}{2\lambda}h\left(-\frac{x}{\lambda}\right)\right), $$
		where $h(u)=2u^{-2}((1+u)\log(1+u)-u)$.
		\item (Chebyshev) Let $X$ be a random variable with finite expected value $\mu$ and finite non-zero variance $\sigma^2$. Then for any real number $t > 0$,
		$$\prb{|X-\mu|\geq t}\leq\frac{\sigma^2}{t^2}.$$
		
	\end{enumerate}
\end{lemma}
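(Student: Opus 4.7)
The plan is to handle the three inequalities separately, since each is a classical application of Markov's inequality to a different auxiliary function of the random variable, and there is essentially no deep obstacle here—the work is to organize three standard arguments and verify that the exponents match the precise forms stated. I will proceed in the order (iii), (ii), (i), since later parts reuse ideas from earlier ones.

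Part (iii) is immediate: apply Markov to the nonnegative variable $(X-\mu)^2$ whose expectation is $\sigma^2$, and observe that $\{|X-\mu|\geq t\}=\{(X-\mu)^2\geq t^2\}$; this gives $\mathbb{P}(|X-\mu|\geq t)\leq \sigma^2/t^2$. For part (ii), the key input is the moment generating function $\mathbb{E}[e^{sX}]=\exp(\lambda(e^s-1))$ of a Poisson variable. For the lower tail, I would apply Markov to $e^{-sX}$ with $s>0$, giving $\mathbb{P}(X\leq \lambda-x)\leq \exp\bigl(s(\lambda-x)+\lambda(e^{-s}-1)\bigr)$. Minimizing the exponent in $s>0$ yields the optimizer $s^{\ast}=-\log(1-x/\lambda)$, which is positive since $0<x<\lambda$. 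Substituting $s^{\ast}$ and setting $u=-x/\lambda\in(-1,0)$ simplifies the exponent to $-\lambda\bigl[(1+u)\log(1+u)-u\bigr]$. Recognizing the bracketed quantity as $u^{2}h(u)/2$ by definition of $h$, and then unwinding $u=-x/\lambda$, gives exactly $-x^{2}h(-x/\lambda)/(2\lambda)$ as claimed.

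For part (i) (Chernoff), I would again apply Markov to $e^{sX}$ and $e^{-sX}$. The bound $X_i\in[0,1]$ gives $e^{sX_i}\leq 1+X_i(e^s-1)$ by convexity, hence $\mathbb{E}[e^{sX_i}]\leq 1+\mathbb{E}[X_i](e^s-1)\leq \exp(\mathbb{E}[X_i](e^s-1))$ using $1+y\leq e^{y}$. By independence, $\mathbb{E}[e^{sX}]\leq \exp(\mu(e^s-1))$, and Markov then yields $\mathbb{P}(X-\mu\geq t)\leq \exp(\mu(e^s-1)-s(\mu+t))$ for each $s>0$. Optimizing at $s^{\ast}=\log(1+t/\mu)$ produces the Bennett-Cram\'er exponent $-\mu\psi(t/\mu)$ with $\psi(u)=(1+u)\log(1+u)-u$. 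To arrive at the simpler Bernstein-type form in the statement, I would invoke the elementary calculus inequality $\psi(u)\geq u^{2}/(2+u)$ for $u\geq 0$ (direct to verify by comparing derivatives at $u=0$ and noting both vanish there), which turns $-\mu\psi(t/\mu)$ into $-t^{2}/(2\mu+t)$. The lower tail is handled symmetrically with $s<0$, and the factor of $2$ in $2\exp(-t^{2}/(2\mu+t))$ comes from a union bound over the two tails.

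The consequence follows by substituting $t=C(\log n+\sqrt{\mu\log n})$ into $\exp(-t^{2}/(2\mu+t))$ and splitting into two regimes. In the regime $\mu\geq \log n$, the $\sqrt{\mu\log n}$ summand dominates in $t$, so $t^{2}\gtrsim C^{2}\mu\log n$ while $2\mu+t\lesssim \mu$, yielding an exponent of order $C^{2}\log n$. In the regime $\mu<\log n$, the $\log n$ summand dominates, so $t^{2}\gtrsim C^{2}\log^{2}n$ while $2\mu+t\lesssim \log n$, again giving an exponent of order $C^{2}\log n$. Choosing $C=C(r)$ large enough makes the exponent exceed $r\log n$ in both regimes, and the $2$ prefactor is absorbed into the constant. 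The only mildly delicate step in the whole lemma is this two-regime calibration; the rest is bookkeeping around standard MGF optimizations.
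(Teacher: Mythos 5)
Your proposal is correct and follows essentially the same route as the paper: part (ii) is proved there by exactly the same Markov/MGF argument with the same optimizer $\theta=-\log(1-x/\lambda)$, while the paper simply cites the first Chernoff inequality and Chebyshev as standard (you supply the standard proofs) and derives the consequence in (i) by solving $t^2/(2\mu+t)=r\log n$ for $t$ rather than by your equivalent two-regime substitution. No gaps; the differences are purely bookkeeping.
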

\begin{proof}
	\begin{enumerate}[(i)]
		\item We omit the proof of the first inequality as it is a common form of the Chernoff bound. To prove the second inequality, we set $$\frac{t^2}{2\mu+t}=r\log n,$$
		which is $t=\frac{1}{2}\left( r\log n+\sqrt{r^2\log^2 n+8r\mu\log n}\right)\leq C(r)\lp\log n+\sqrt{\mu\log n}\rp$.
		\item The moment generating function of $X$ is
		$$\E{e^{\theta X}}=e^{\lambda(e^\theta-1)}$$
		for $\theta\in\mathbb{R}$. Fix $0<x<\lambda$, then for any $\theta>0$,
		\begin{align*}
		\prb{X\leq\lambda-x} & =\prb{e^{\theta X}\leq e^{\theta(\lambda-x)}}=\prb{e^{\theta(\lambda-x-X)}\geq 1 } \\
		& \leq e^{\theta(\lambda-x)}\E{e^{-\theta X}} =e^{\lp\lambda(e^{-\theta}-1)+\theta(\lambda-x)\rp}.
		\end{align*}
		The penultimate step is due to Markov's inequality. Finally, by setting $\theta=-\log\lp 1-\frac{x}{\lambda}\rp>0$ we get 
		$$\mathbb{P}\left( X\leq \lambda-x\right)\leq \exp\left( -\frac{x^2}{2\lambda}h\left(-\frac{x}{\lambda}\right)\right)$$
		as claimed.
	\end{enumerate}
\end{proof}
\subsubsection*{Unnormalized Laplacian}
\begin{proof}[{\bf Proof of Lemma~\ref{lem:A2toA1}}]
	$$\frac{\partial f}{\partial \xi}=-\log\left( 1-\frac{2\xi}{\al+\be}\right)>0 $$
	for $0<\xi<\frac{\al+\be}{2}$. So it suffices to prove $f(\frac{\al-\be}{2};\al,\be)>0$ when $\sqrt{\al}-\sqrt{\be}>\sqrt{2}$. Since $\al+\be>2\sqrt{\al\be}+2$, we have 
	\begin{align*}
	f\lp \frac{\al-\be}{2};\al,\be\rp =&\be\log\left( \frac{2\be}{\al+\be}\right)+\frac{\al-\be}{2}-1 \\
	>&\be\log\left( \frac{2\be}{\al+\be}\right)+\sqrt{\al\be}-\beta\\
	=&\be\left[ \sqrt{\frac{\al}{\be}}-\log\left( \frac{1}{2}+\frac{\al}{2\be}\right)-1 \right]. &
	\end{align*}
	It is straightforward to show by differentiation that $\sqrt{x}-\log\left(\frac{1}{2}+\frac{x}{2}\right)-1>0$ when $x>1$.
\end{proof}
The crucial step in controlling the minimum degree in the critical regime is the following Poisson approximation to binomials.
\begin{lemma}\label{lem:poisson}
	Let $X \sim \text{Binomial}(n/2, p)$ and $Y \sim \text{Binomial}(n/2, q)$ for $n$ even. Suppose $p=\al\log n/n$ and $q=\be\log n/n$ for constants $\al$ and $\be$. Let $\gamma=(\al+\be)/2$, then there exists $c_n\rightarrow 0$ depending on $\gamma$ such that for every $k\leq \gamma\log n$,
	\[
	\prb{X+Y=k}\leq\lp 1+c_n\rp n^{-\gamma}\frac{\lp \gamma\log n\rp^k}{k!}.
	\]
\end{lemma}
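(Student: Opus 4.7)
The plan is to view $X+Y$ as a Poisson-binomial sum, compute its PMF by direct convolution, and show that each binomial factor is dominated (up to a relative error that is $o(1)$) by the corresponding Poisson mass with the same mean. After that, the binomial theorem applied to the convolution sum collapses everything to a single Poisson mass at parameter $m(p+q) = (n/2)((\al+\be)\log n/n) = \gamma\log n$, which is exactly the desired expression.

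More concretely, first write
\[
\prb{X+Y=k} = \sum_{j=0}^{k}\binom{m}{j}p^j(1-p)^{m-j}\binom{m}{k-j}q^{k-j}(1-q)^{m-(k-j)},
\]
where $m=n/2$. Using $\binom{m}{j}\leq m^j/j!$ and $(1-p)^{m-j}\leq e^{-(m-j)p}$, each binomial factor admits the elementary bound
\[
\binom{m}{j}p^j(1-p)^{m-j}\;\leq\;\frac{(mp)^j}{j!}\,e^{-mp}\,e^{jp}.
\]
Because $k\leq\gamma\log n$ and $p=\al\log n/n$, for every $j\in\{0,\ldots,k\}$ we have $jp\leq \al\gamma(\log n)^2/n=o(1)$, so $e^{jp}=1+o(1)$; the same remark applies to the $Y$ factor with $q=\be\log n/n$. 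Multiplying the two estimates therefore gives
\[
\prb{X=j}\prb{Y=k-j}\;\leq\;(1+c_n)\,e^{-m(p+q)}\,\frac{(mp)^j(mq)^{k-j}}{j!\,(k-j)!},
\]
with a single $c_n=c_n(\gamma)\to 0$ that absorbs both small exponential corrections uniformly over $j\leq k\leq\gamma\log n$.

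Summing over $j$ and recognizing the binomial theorem,
\[
\sum_{j=0}^{k}\frac{(mp)^j(mq)^{k-j}}{j!\,(k-j)!} = \frac{(mp+mq)^k}{k!} = \frac{(\gamma\log n)^k}{k!},
\]
while $e^{-m(p+q)}=e^{-\gamma\log n}=n^{-\gamma}$. Combining these two observations yields
\[
\prb{X+Y=k}\;\leq\;(1+c_n)\,n^{-\gamma}\,\frac{(\gamma\log n)^k}{k!},
\]
as required.

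There is no real obstacle here: the argument is a careful but elementary Poisson bound on each binomial factor followed by the binomial identity. The only point that requires attention is verifying that the relative error $c_n$ can be chosen uniformly in $j,k$ on the range $0\leq j\leq k\leq\gamma\log n$; this is where the assumption $k\leq\gamma\log n$ is used, together with the fact that $(\log n)^2/n\to 0$ so that $e^{jp}$ and $e^{(k-j)q}$ are both $1+o(1)$. One could alternatively have kept the factor $\prod_{i=0}^{j-1}(1-i/m)\leq 1$ from the exact identity $\binom{m}{j}=(m^j/j!)\prod_{i=0}^{j-1}(1-i/m)$, which only strengthens the upper bound and does not affect the final form.
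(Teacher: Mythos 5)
Your proof is correct and follows essentially the same route as the paper: convolve the two binomial masses, dominate each factor by the corresponding Poisson mass, and collapse the sum via the binomial theorem. The only (harmless) difference is that you handle $(1-p)^{m-j}$ with the elementary bound $1-p\leq e^{-p}$, giving $e^{-mp}e^{jp}$ with $e^{jp}=1+o(1)$ uniformly for $j\leq k\leq\gamma\log n$, whereas the paper reaches the same $(1+a_n)n^{-\al/2}$ factor by a slightly more laborious limit computation.
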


\begin{proof}
	For $k\leq \gamma\log n$,
	\begin{align*}
	\prb{X=k}&=\begin{bmatrix}
	n/2\\
	k
	\end{bmatrix}\lp\frac{\al\log n}{n}\rp^k\lp1-\frac{\al\log n}{n}\rp^{\frac{n}{2}-k}\\
	&=\frac{\frac{n}{2}\lp\frac{n}{2}-1\rp\cdots\lp\frac{n}{2}-k+1\rp}{k!}\cdot\frac{1}{\lp n/2\rp^k}\lp\frac{\al}{2}\log n\rp^k\lp1-\frac{\al\log n}{n}\rp^{\frac{n}{2}-k}\\
	&\leq \frac{1}{k!}\lp\frac{\al}{2}\log n\rp^k\lp1-\frac{\al\log n}{n}\rp^{\frac{n}{\al\log n}\lp\frac{\al}{2}\log n\rp\lp1-\frac{2k}{n}\rp}\\
	&\leq\lp 1+a_n\rp n^{-\frac{\al}{2}}\frac{\lp \frac{\al}{2}\log n\rp^k}{k!},
	\end{align*}
	where $a_n\rightarrow 0$ and is independent of $k$. The last inequality is due to
	$$\lim_{n\rightarrow\infty}\frac{\lp\frac{\al}{2}\log n\rp\lp1-\frac{2\gamma\log n}{n}\rp\log\lp1-\frac{\al\log n}{n}\rp^{\frac{n}{\al\log n}}}{-\frac{\al}{2}\log n}=1.$$
	Similarly there exists $b_n\rightarrow 0$ independent of $k$ such that\
	$$\prb{Y=k}\leq\lp 1+b_n\rp n^{-\frac{\be}{2}}\frac{\lp \frac{\be}{2}\log n\rp^k}{k!}.$$ 
	Finally note that
	\begin{align*}
	\prb{X+Y=k}&=\sum_{l=0}^{k}\prb{X=l}\prb{Y=k-l}\\
	&\leq (1+a_n)(1+b_n)n^{-\gamma}\frac{\lp \gamma\log n\rp^k}{k!}\\
	&:=\lp 1+c_n\rp n^{-\gamma}\frac{\lp \gamma\log n\rp^k}{k!}.
	\end{align*}
\end{proof}
With the help of the Poisson approximation we can now prove Lemma~\ref{lem:dmin}.
\begin{proof}[{\bf Proof of Lemma~\ref{lem:dmin}}]
	Let $d_i$ be the degree of the $i$th node. Let $X$ be a Poisson variable with mean $\frac{\al+\be}{2}\log n$. Then by Lemma~\ref{lem:concentration} and Lemma~\ref{lem:poisson}, for $n$ large enough
	\begin{align*}
	\prb{d_i\leq\frac{\al+\be}{2}\log n-\xi\log n}&\leq 2\prb{X\leq\frac{\al+\be}{2}\log n-\xi\log n}\\
	&\leq 2n^{-f(\xi;\al,\be)-1}.
	\end{align*}
	Taking union bound yields
	$$\prb{\dmin\geq \frac{\al+\be}{2}\log n-\xi\log n}\geq 1-2n^{-f(\xi;\al,\be)}.$$
\end{proof}

We prove Lemma~\ref{lem:dout} before we prove Theorem~\ref{thm:lambda(L)}.
\begin{proof}[\bf Proof of Lemma~\ref{lem:dout}]
	
	(i) Note that 
	$$\inner{\bdout-\bdout^*}{\ones_n}=2\sum_{i=1}^{\frac{n}{2}}\sum_{j=\frac{n}{2}+1}^{n}(A_{ij}-q).$$
	The result follows from the Chernoff bound.
	
	\noindent
	(ii)
	Let $\Aout$ denote the matrix after removing all edges within the same community in $A$. By Lemma~\ref{lem:A-concen},
	\begin{align*}
	\prb{\norm{\bdout-\bdout^*}\geq C(q_0,r)\sqrt{n^2q}}&=\prb{\norm{(\Aout-\Aout^*)\ones_n}\geq C(q_0,r)\sqrt{n^2q}}\\
	&\leq \prb{\norm{\Aout-\Aout^*}\geq C(q_0,r)\sqrt{nq}}\\
	&\leq n^{-r}.
	\end{align*}
	
	\noindent
	(iii)
	One can calculate the following two central moments of $X\sim\text{binomial}(n/2,q)$ by using the formula provided in~\cite{binom}:
	\begin{align*}
	&\E{\ls\lp X-\frac{nq}{2}\rp^2\rs}=\frac{1}{2}nq(1-q)\leq \frac{1}{2}nq\\
	&\text{var}\ls\lp X-\frac{nq}{2}\rp^2\rs=\frac{1}{2}nq(1-q)(nq - 6q - nq^2 + 6q^2 + 1)\leq \frac{1}{2}nq(nq+7).
	\end{align*}
	Let $X_i\stackrel{i.i.d}{\sim}\text{binomial}(n/2,q)$ and $Y_i\stackrel{i.i.d}{\sim}\text{binomial}(n/2,q)$. Then by letting $t=C_1(q_0)n^2q$ in Chebyshev's inequality,
	$$\prb{\sum_{i=1}^{n/2}\lp X_i-\frac{nq}{2}\rp^2\leq \lp\frac{1}{2}+C_1(q_0)\rp n^2q}\geq 1-\frac{\frac{1}{2}n^2q(nq+7)}{C_1(q_0)^2n^4q^2}\geq 1-\frac{1}{2}\lp\frac{1}{n}+\frac{0.01q_0}{n^2q}\rp.$$
	Same inequality holds for $Y_i$. By the union bound
	\begin{align*}
	&\prb{\norm{\bdout-\bdout^*}\leq C_2(q_0)\sqrt{n^2q}}\\
	=&\prb{\sqrt{\sum_{i=1}^{n/2}\lp X_i-\frac{nq}{2}\rp^2+\sum_{i=n/2+1}^{n}\lp Y_i-\frac{nq}{2}\rp^2}\leq C_2(q_0)\sqrt{n^2q}}\\
	\geq& 1-\lp\frac{1}{n}+\frac{0.01q_0}{n^2q}\rp.
	\end{align*}
	
\end{proof}
\begin{proof}[\bf Proof of Theorem~\ref{thm:lambda(L)}] 
	(i) Weyl's theorem shows
	\begin{align*}
	\lambda_3(L)&\geq \lambda_3(L^*)+\lambda_{\min}(D-D^*)-\norm{A-A^*}\\
	&=\frac{\al+\be}{2}\log n+\lp d_{\min}-\frac{\al+\be}{2}\log n\rp-\norm{A-A^*}\\
	&=d_{\min}-\norm{A-A^*}
	\end{align*}
	By Lemma~\ref{lem:dmin}, for $n$ large enough
	$$\prb{\dmin\geq \frac{\al+\be}{2}\log n-\xi\log n}\geq 1-2n^{-f(\xi;\al,\be)}.$$
	Then by Lemma~\ref{lem:A-concen},
	$$\prb{\norm{A-A^*}\leq C_1(\xi,\al,\be)\sqrt{\log n}}\geq 1-n^{-f(\xi;\al,\be)}.$$ 
	Therefore for $n\geq N=N(\xi,\al,\be,\epsilon)$,
	$$\prb{\lambda_3(L)\geq \frac{\al+\be}{2}\log n-(\xi+\epsilon)\log n}\geq 1-3n^{-f(\xi;\al,\be)}.$$
	Or equivalently for all $n$,
	$$\prb{\lambda_3(L)\geq \frac{\al+\be}{2}\log n-(\xi+\epsilon)\log n}\geq 1-C_2(\xi,\al,\be,\epsilon)n^{-f(\xi;\al,\be)}.$$
	(ii) By the min-max principle
	\begin{align*}
	\lambda_2(L)&=\min_{V\in\mathcal{V}_t}\max_{x\in V\backslash\{0\}}\frac{\inner{x}{Lx}}{\inner{x}{x}}\\
	&\leq \max_{x\in \text{span}\{\mathbbm{1_n},u_2^*\},||x||=1}\inner{x}{Lx}\\
	&=\inner{u_2^*}{Lu_2^*}\\
	&=\frac{2}{n}\inner{\bdout}{\ones_n}=nq+\frac{2}{n}\inner{\bdout-\bdout^*}{\ones_n}.
	\end{align*}
	The third step is due to $L\mathbbm{1}_n=0$ and $\ones_n\perp u_2^*$.
	
	\noindent
	(iii) Let $u_2$ be the eigenvector of $L$ that corresponds to $\lambda_2(L)$, We have
	\begin{align*}
	\lambda_2(L)=\inner{u_2}{Lu_2}&=\inner{(u_2-\utwostar)+\utwostar}{L((u_2-\utwostar)+\utwostar)}\\
	&=\inner{\utwostar}{L\utwostar}+2\inner{u_2-\utwostar}{L\utwostar}+\inner{u_2-\utwostar}{L(u_2-\utwostar)}\\
	&\geq\inner{\utwostar}{L\utwostar}+2\inner{u_2-\utwostar}{L\utwostar}\\
	&\geq nq+\frac{2}{n}\inner{\bdout-\bdout^*}{\ones_n}-2\norm{u_2-\utwostar}\norm{L\utwostar}\\
	&=nq+\frac{2}{n}\inner{\bdout-\bdout^*}{\ones_n}-\frac{4}{\sqrt{n}}\norm{u_2-\utwostar}\norm{\bdout}.
	\end{align*}
	Let $\theta$ be the angle between $u_2$ and $u_2^*$. Assume $\theta\in[0,\pi/2]$, because otherwise just let $u_2:=-u_2$. Then by letting $N=I$, $M=L$, $\hat{u}=u_2^*$,  $\hat{\lambda}=\lambda_2(L^*)$, $X_1=\begin{bmatrix}
	\frac{1}{\sqrt{n}}\ones_n & u_2
	\end{bmatrix}$ and $P$ be the projection matrix onto the orthogonal complement of $X_1$ in Theorem~\ref{thm:DK} we get
	$$\norm{P\utwostar}=\sin(\theta)\leq\frac{\norm{(L-L^*)\utwostar}}{\delta}=\frac{2\norm{\bdout-\bdout^*}}{\delta\sqrt{n}},$$
	where $\delta=\lambda_3(L)-\lambda_2(L^*)$ which we for now assume to be positive. Therefore
	\begin{equation}
	\norm{u_2-\utwostar}=\sqrt{2-2\cos(\theta)}\leq\sqrt{2}\sin(\theta)\leq\frac{2\sqrt{2}\norm{\bdout-\bdout^*}}{\delta\sqrt{n}}. \label{eq:u2-u2*}
	\end{equation}
	Thus
	\begin{equation}
	\lambda_2(L)\geq nq+\frac{2}{n}\inner{\bdout-\bdout^*}{\ones_n}-\frac{8\sqrt{2}}{\delta n}\norm{\bdout-\bdout^*}\norm{\bdout}. \label{eq:1}
	\end{equation}
	It remains to find a lower bound for $\delta$. If $p\geq p_0\log n/n$ then for any $r>0$, the Chernoff bound and Lemma~\ref{lem:A-concen} give
	$$\prb{\norm{D-D^*}\geq C_1(p_0,r)\sqrt{np\log n}}\leq 2n^{-r}$$
	and
	$$\prb{\norm{A-A^*}\geq C_2(p_0,r)\sqrt{np}}\leq n^{-r}.$$
	Therefore there exists $M(p_0,r)$ large enough, such that for $q$ satisfying
	$$n(p-q)\geq M\sqrt{np\log n},$$ we have
	\begin{align*}
	\delta=\lambda_3(L)-\lambda_2(L^*)&=\lp\lambda_3(L^*)-\lambda_2(L^*)\rp+\lp\lambda_3(L)-\lambda_3(L^*)\rp\\
	&\geq\frac{n(p-q)}{2}-\norm{D-D^*}-\norm{A-A^*}\\
	&\geq\frac{n(p-q)}{2\sqrt{2}}
	\end{align*}
	with probability at least $1-3n^{-r}$. Combining this and~\eqref{eq:1} concludes the first half of the statement. 
	
	If $p = \al\log n/n$, $q = \be\log n/n$ and $\alpha$ and $\beta$ satisfy (A1) so that there is some constant $0<\xi<(\al-\be)/2$ satisfying $f(\xi;\al,\be)>0$, then by part (i),
	\begin{equation}
	\prb{\lambda_3(L)\geq \beta\log n+\epsilon(\al,\be)\log n}\geq 1-C_3(\xi,\al,\be)n^{-f(\xi,\al,\be)}.\label{eq:lambda3L}
	\end{equation}
	Therefore
	\begin{equation}
	\prb{\delta\geq \epsilon(\al,\be)\log n}\geq 1-C_3(\xi,\al,\be)n^{-f(\xi,\al,\be)}.\label{eq:deltaL}
	\end{equation}
	and
	\begin{equation}
	\prb{\norm{u_2-\utwostar}\leq C_4(\al,\be,\xi)\frac{1}{\sqrt{\log n}}}\geq 1-C_5(\al,\be,\xi)n^{-\f}.\label{eq:L:u-u*}
	\end{equation}
	Finally combining~\eqref{eq:1},~\eqref{eq:deltaL} and Lemma~\ref{lem:dout} gives
	$$\prb{\lambda_2(L)\geq\be\log n-C_6(\al,\be,\xi)\sqrt{\log n}}\geq 1-C_7(\al,\be,\xi)n^{-f(\xi;\al,\be)}.$$
\end{proof}

\subsubsection*{Normalized Laplacian}
\begin{proof}[\bf Proof of Theorem~\ref{thm:lambda(Lsym)}]
	\begin{enumerate}[(i)]
		\item Let $u_2$ be the eigenvector of $(L,D)$ that corresponds to $\lambda_2(\Lsym)$. Using the min-max principle we get
		\begin{align*}
		\lambda_2(\Lsym)&=\min_{V\in\mathcal{V}_t}\max_{x\in V\backslash\{0\}}\frac{\inner{x}{\Lsym x}}{\inner{x}{x}}\\
		&\leq \max_{x\in \text{span}\lc\Dhalf\ones_n,\Dhalf \utwostar\rc}\frac{\inner{x}{\Lsym x}}{\inner{x}{x}} \qquad \\
		&=\frac{\left\langle \Dhalf \utwostar-x,\Lsym(\Dhalf \utwostar-x)\right\rangle }{||\Dhalf \utwostar-x||^2}\\
		&\leq\frac{\left\langle  \utwostar,L\utwostar\right\rangle }{\inner{\utwostar}{D\utwostar}-2||x||\sqrt{\inner{\utwostar}{D\utwostar}}+||x||^2}\\
		&\leq\frac{\left\langle  \utwostar,L\utwostar\right\rangle }{\inner{\utwostar}{D\utwostar}-2||x||\sqrt{\inner{\utwostar}{D\utwostar}}},
		\end{align*}
		where
		$$x=\frac{\inner{\utwostar}{D\ones_n}}{\inner{\ones_n}{D\ones_n}}\Dhalf\ones_n$$
		is the part of $\Dhalf \utwostar$ that is parallel to $\Dhalf \ones_n$. The third equality is because $\Dhalf\ones_n$ is in the null space of $\Lsym$. Therefore the Rayleigh quotient takes maximum in the direction orthogonal to $\Dhalf\ones_n$. The last inequality is valid because later we will see $\norm{x}\leq\frac{1}{2}\sqrt{\inner{\utwostar}{D\utwostar}}$. Next we aim to give an upper and lower bound for $\inner{\utwostar}{L\utwostar}$, an upper bound for $\left|\inner{\utwostar}{D\ones_n} \right| $ and a lower bound for $\inner{\utwostar}{D\utwostar}=\frac{1}{n}\inner{\ones_n}{D\ones_n}$. First by Lemma~\ref{lem:dout},
		\begin{equation}
		\inner{\utwostar}{L\utwostar}=nq+\frac{2}{n}\inner{\bdout-\bdout^*}{\ones_n}\leq nq+ C_1(q_0,r)\sqrt{q\log n}\label{eq:2}
		\end{equation} 
		with probability at least $1-n^{-r}$. By Chernoff,
		\begin{align*}
		\left| \inner{\utwostar}{D\utwostar}-\frac{n(p+q)}{2}\right| &=\left| \frac{1}{n}\inner{\ones_n}{D\ones_n}-\frac{n(p+q)}{2}\right|\\
		&=\left| \frac{1}{n}\sum_{i=1}^{n}d_i-\frac{n(p+q)}{2}\right|\\
		&=\left| \frac{1}{n}\lp\sum_{i=j}A_{ij}+2\sum_{i>j}A_{ij}\rp-\frac{n(p+q)}{2}\right|\\
		&\leq C_2(r)\lp\sqrt{\frac{p\log n}{n}}+\frac{\log n}{n}+\sqrt{p\log n}+\sqrt{q\log n}\rp\\
		&\leq C_3(r,p_0)\sqrt{p\log n}  \numberthis \label{eq:3}
		\end{align*}
		with probability at least $1-n^{-r}$. Finally by Chernoff, 
		\begin{align*}
		\left|\inner{\utwostar}{D\ones_n} \right|&=\frac{1}{\sqrt{n}}\left| \sum_{i=1}^{n/2}d_i-\sum_{i=n/2+1}^{n}d_i\right| \\
		&=\frac{1}{\sqrt{n}}\left| \sum_{i=1}^{n/2}\sum_{j=1}^{n/2}A_{ij}-\sum_{i=n/2+1}^{n}\sum_{j=n/2+1}^{n}A_{ij}\right|\\
		&\leq\frac{1}{\sqrt{n}}\lp\left| \sum_{i=1}^{n/2}\sum_{j=1}^{n/2}A_{ij}-\frac{n^2p}{4}\right| +\left| \sum_{i=n/2+1}^{n}\sum_{j=n/2+1}^{n}A_{ij}-\frac{n^2p}{4}\right|\rp\\
		&\leq C_4(r,p_0)\sqrt{np\log n}  \numberthis \label{eq:4}
		\end{align*}
		with probability at least $1-n^{-r}$.			
		Therefore by combining~\eqref{eq:3} and~\eqref{eq:4},
		$$\norm{x}=\frac{\left|\inner{\utwostar}{D\ones_n} \right|}{\sqrt{\inner{\ones_n}{D\ones_n}}}\leq\frac{C_4(r,p_0)\sqrt{p\log n}}{\sqrt{\frac{n(p+q)}{2}-C_3(r,p_0)\sqrt{p\log n}}}\leq C_5(r,p_0)\sqrt{\frac{\log n}{n}}$$
		for $N\geq N(r,p_0)$. This justifies the claim that $\norm{x}\leq\frac{1}{2}\sqrt{\inner{\utwostar}{D\utwostar}}$. Combining ~\eqref{eq:2},~\eqref{eq:3} and~\eqref{eq:4} yields
		\begin{align*}
		\lambda_2(\Lsym)&\leq\frac{\left\langle  \utwostar,L\utwostar\right\rangle }{\inner{\utwostar}{D\utwostar}-2||x||\sqrt{\inner{\utwostar}{D\utwostar}}}\\
		&\leq\frac{nq+ C_1(q_0,r)\sqrt{q\log n}}{\frac{n(p+q)}{2}-C_3(r,p_0)\sqrt{p\log n}-C_6(r,p_0)\sqrt{\frac{\log n}{n}}\cdot\sqrt{np}}\\
		&\leq\frac{2q}{(p+q)}+C_7(r,p_0,q_0)\frac{\sqrt{q\log n}}{np}
		\end{align*}
		with probability at least $1-3n^{-r}$ for $n>N(r,p_0)$. Or equivalently
		$$\prb{\lambda_2(\Lsym)\leq\frac{2q}{p+q}+C_7(r,p_0,q_0)\frac{\sqrt{q\log n}}{np}}\geq 1-C_8(r,p_0)n^{-r}$$
		for all $n$.
		
		\item By the Chernoff bound  and the union bound, for any $r>0$, there exists $p_0=p_0(r)$ large enough such that for $p\geq p_0\log n/n$,
		\begin{equation}
		\prb{\dmax\leq C_1(p_0,r)np}\geq 1-n^{-r}. \label{eq:5}
		\end{equation}
		and
		\begin{equation}
		\prb{\dmin\geq C_2(p_0,r)np}\geq 1-n^{-r}. \label{eq:6}
		\end{equation}
		We have
		\begin{align*}
		\lambda_2&=\frac{\inner{u_2}{Lu_2}}{\inner{u_2}{Du_2}}\\
		&=\frac{\inner{\utwostar}{L\utwostar}+2\inner{u_2-\utwostar}{L\utwostar}+\inner{u_2-\utwostar}{L(u_2-\utwostar)}}{\inner{\utwostar}{D\utwostar}+2\inner{u_2-\utwostar}{D\utwostar}+\inner{u_2-\utwostar}{D(u_2-\utwostar)}}\\
		&\geq\frac{\inner{\utwostar}{L\utwostar}-2||u_2-\utwostar||||L\utwostar||}{\inner{\utwostar}{D\utwostar}+2||u_2-\utwostar||||D\utwostar||+||u_2-\utwostar||^2||D||}\\
		&=\frac{\inner{\utwostar}{L\utwostar}-\frac{4}{\sqrt{n}}||u_2-\utwostar||\norm{\bdout}}{\inner{\utwostar}{D\utwostar}+\frac{2||u_2-\utwostar||\dmax}{\sqrt{n}}+||u_2-\utwostar||^2\dmax}.
		\end{align*}
		Combining~\eqref{eq:2},~\eqref{eq:3} and~\eqref{eq:5} gives
		$$\lambda_2(\Lsym)\geq\frac{nq+ C_3(q_0,r)\sqrt{q\log n}-\frac{4}{\sqrt{n}}||u_2-\utwostar||\norm{\bdout}}{\frac{n(p+q)}{2}+C_4(r,p_0)\sqrt{p\log n}+C_1(p_0,r)np\lp\frac{||u_2-\utwostar||}{\sqrt{n}}+||u_2-\utwostar||^2\rp}$$
		with probability at least $1-3n^{-r}$. It remains to find an upper bound for $\norm{u_2-\utwostar}$ through Davis-Kahan. In Theorem~\ref{thm:DK}, we let $M=L$, $N=D$, $\hat{\lambda}=\frac{2q}{p+q}$, $\hat{u}=\utwostar$, $X_1=\begin{bmatrix}
		\frac{1}{\sqrt{n}}\ones_n & u_2
		\end{bmatrix}$ and $P$ be the projection matrix onto the orthogonal complement of $X_1$. Since $\utwostar$ is orthogonal to $\ones_n$, we have $\norm{P\utwostar}=\sin(\theta)$ where $\theta\in[0,\pi/2]$ is the angle between $u_2$ and $\utwostar$. Therefore
		\begin{equation}
		\norm{u_2-\utwostar}=\sqrt{2-2\cos(\theta)}\leq\sqrt{2}\sin(\theta)\leq\frac{\sqrt{2}\norm{\lp\Dneg L-\hat{\lambda}I\rp\utwostar}}{\delta},\label{eq:7}
		\end{equation}
		where $\delta=\lambda_3(\Lsym)-\lambda_2(\Lsym^*)\geq\lambda_3(\Lsym^*)-\lambda_2(\Lsym^*)-\norm{\Lsym-\Lsym^*}=\frac{p-q}{p+q}-\norm{\Lsym-\Lsym^*}.$ Using Theorem~\ref{thm:Lsym-concen} in conjunction with~\eqref{eq:6} we get
		$$\prb{\norm{\Lsym-\Lsym^*}\leq \frac{C_4(p_0,r)}{\sqrt{np}}}\geq1-n^{-r}.$$
		Therefore there exists $M(p_0,r)>0$ such that
		$$\frac{p-q}{\sqrt{p}}\geq\frac{M}{\sqrt{n}}$$
		implies 
		$$\prb{\delta\geq\frac{p-q}{4p}}\geq 1-C_5(p_0,r)n^{-r}.$$
		To control the numerator in~\eqref{eq:7}, note that
		\begin{align*}
		||(\Dneg L-\hat{\lambda})\utwostar||&=2\sqrt{\frac{1}{n}\sum_{i=1}^{n}\left( \frac{\douti}{d_i}-\frac{nq}{n(p+q)}\right)^2 }\\
		&\leq\frac{2}{n(p+q)\dmin\sqrt{n}}\sqrt{\sum_{i=1}^{n}\left( np\douti-nq\dini\right)^2 }\\
		&\leq\frac{2}{np\dmin\sqrt{n}}\sqrt{\sum_{i=1}^{n}\left( np\lp \douti-\frac{nq}{2}\rp-nq\lp \dini-\frac{np}{2}\rp\right)^2 } \\
		&=\frac{2}{np\dmin\sqrt{n}}\norm{np\lp\bdout-\bdout^*\rp-nq\lp\bdin-\bdin^*\rp}\\
		&\leq\frac{2}{\dmin\sqrt{n}}\lp\norm{\bdout-\bdout^*}+\norm{\bdin-\bdin^*}\rp\\
		&=\frac{2}{\dmin\sqrt{n}}\lp\norm{(\Aout-\Aout^*)\ones_n}+\norm{(\Ain-\Ain^*)\ones_n}\rp\\
		&\leq\frac{2}{\dmin}\lp\norm{\Aout-\Aout^*}+\norm{\Ain-\Ain^*}\rp,
		\end{align*}
		where the second line follows from $\Ain=A-\Aout$ and $\bdin=\Ain\ones_n$.  Combining Lemma~\ref{lem:A-concen} and~\eqref{eq:6} we get
		$$\prb{||(\Dneg L-\hat{\lambda})\utwostar||\leq C_6(p_0,r)\frac{1}{\sqrt{np}}}\geq 1-2n^{-r}.$$
		Therefore 
		$$\prb{\norm{u_2-\utwostar}\leq C_7(p_0,r)\frac{\sqrt{np}}{n(p-q)}}\geq 1-C_8(p_0,r)n^{-r}.$$
		Finally,
		\begin{align*}
		\lambda_2(\Lsym)&\geq\frac{nq+ C_3(q_0,r)\sqrt{q\log n}-\frac{4}{\sqrt{n}}||u_2-\utwostar||\norm{\bdout}}{\frac{n(p+q)}{2}+C_4(r,p_0)\sqrt{p\log n}+C_1(p_0,r)np\lp\frac{||u_2-\utwostar||}{\sqrt{n}}+||u_2-\utwostar||^2\rp}\\
		&\geq \frac{nq+ C_3(q_0,r)\sqrt{q\log n}-4C_7(p_0,r)\frac{\sqrt{p}}{n(p-q)}\norm{\bdout}}{\frac{n(p+q)}{2}+C_4(r,p_0)\sqrt{p\log n}+C_1(p_0,r)np\lp C_7(p_0,r)\frac{\sqrt{p}}{n(p-q)}+C_7(p_0,r)^2\frac{np}{n^2(p-q)^2}\rp}\\
		&\geq \frac{2q}{p+q}-C_8(p_0,q_0,r)\lp\frac{\frac{q}{p}\sqrt{p\log n}+\frac{q\sqrt{np}}{p-q}+\sqrt{q\log n}+\frac{\sqrt{p}\norm{\bdout}}{n(p-q)}}{np}\rp\\
		&\geq \frac{2q}{p+q}-C_9(p_0,q_0,r)\lp\frac{\sqrt{q\log n}}{np}+\frac{nq+\frac{1}{\sqrt{n}}\norm{\bdout}}{n(p-q)\sqrt{np}}\rp\\		
		\end{align*}
		with probability at least $1-C_{10}(p_0,r)n^{-r}$.
		
		Now suppose $p=\al\log n/n$ and $q=\be\log n/n$ with $\al>2$. It is easy to see that there exists $\xi(\al,\be)\leq\frac{\al+\be}{2}$ such that $f(\xi;\al,\be)>0$. Then by Lemma~\ref{lem:dmin},
		\begin{equation}
		\prb{\dmin\geq C_{11}(\al,\xi)np}\geq 1-n^{-f(\xi;\al,\be)}. \label{eq:8}
		\end{equation}
		In this case the proof above still holds but  with $r=f(\xi;\al,\be)$. Therefore
		\begin{equation}
		\prb{\norm{u_2-\utwostar}\leq C_{12}(\al,\be,\xi)\frac{1}{\sqrt{\log n}}}\geq 1-C_{13}(\al,\be,\xi)n^{-f(\xi;\al,\be)} \label{eq:Lsym:u-u*}
		\end{equation}
		and
		$$\prb{\lambda_2(\Lsym)\geq \frac{2\be}{\al+\be}-C_{14}(\al,\be,\xi)\frac{1}{\sqrt{\log n}}}\geq 1-C_{15}(\al,\be,\xi)n^{-f(\xi;\al,\be)},$$
		where we have used Lemma~\ref{lem:dout} to bound $\norm{\bdout}$.
	\end{enumerate}
\end{proof}

\subsection{Proofs for Section 3.3}
Any statement involving eigenvectors are up to sign, meaning that for any eigenvector $u$, either $u$ or $-u$ will suit the statement. For example,
the expression  $\|u - v\|$ should be understood as  $\min_{s \in \{\pm 1\}} \|s u- v\|$.

\subsubsection*{Unnormalized spectral clustering}
Let $\Am$ be the matrix that $\Am_{ij}=A_{ij}$ when neither $i$ nor $j$ equals $m$ and $\Am_{ij}=A^*_{ij}$ when $i$ or $j$ equals $m$. Let $\Lm$ be the corresponding unnormalized Laplacian matrix of $\Am$. Let $u_2$ be the eigenvector of $L$ that corresponds to the second smallest eigenvalue $\lambda_2(L)$. Let $\um$ be the eigenvector of $\Lm$ that corresponds to the second smallest eigenvalue $\lambda_2(\Lm)$. The lemma below bounds $\norm{u_2-\um}$.
\begin{lemma}{\label{lem:L:u2-um}}
	There exists $\xi=\xi(\al,\be)>0$, $C_1,C_2>0$ depending on $\al$, $\be$ and $\xi$, such that $\f>0$ and
	$$\prb{\max_{1\leq m\leq n}\norm{u_2-\um}\leq C_1\norminf{u_2}}\geq 1-C_2n^{-\f}.$$
\end{lemma}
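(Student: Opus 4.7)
The strategy is to invoke the Davis--Kahan theorem (Theorem~\ref{thm:DK}) in a form that compares $\um$ against the spectrum of $L$ (rather than of $\Lm$), so as not to be penalized by the fact that $\norm{L-\Lm}=O(\log n)$ can be on the same order as the target eigengap. Taking $M=L$, $N=I$, $\hat u=\um$ and $\hat\lambda=\lambda_2(\Lm)$ in Theorem~\ref{thm:DK}, the identity $\Lm\um=\lambda_2(\Lm)\um$ rewrites the residual as $(L-\lambda_2(\Lm) I)\um=(L-\Lm)\um$, and using $u_2,\um\perp\ones_n$ the Davis--Kahan projection reduces to projection onto $u_2^{\perp}$, producing
\[
\norm{u_2-\um}\leq \frac{\sqrt{2}\,\norm{(L-\Lm)\um}}{\delta^{(m)}},\qquad \delta^{(m)}:=\min\bigl(\lambda_2(\Lm),\ \lambda_3(L)-\lambda_2(\Lm)\bigr).
\]
To lower-bound $\delta^{(m)}$, adapt the proofs of Theorem~\ref{thm:lambda(L)}(i) and (iii) to $\Lm$: the variational bound with test vector $u_2^*$ and the Davis--Kahan argument against $L^*$ transfer with only minor changes, yielding $\lambda_2(\Lm)\in[\beta\log n-O(\sqrt{\log n}),\,\beta\log n+O(\log n/\sqrt{n})]$ and $\lambda_3(L)\geq(\beta+\varepsilon)\log n$ for some $\varepsilon=\varepsilon(\alpha,\beta)>0$. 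Hence $\delta^{(m)}\geq c(\alpha,\beta)\log n$ uniformly in $m$ on an event of probability $1-O(n^{-\f})$.

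For the numerator, $L-\Lm=(D-\Dm)-(A-\Am)$ is supported on row and column $m$. A direct entrywise calculation yields
\[
\bigl((L-\Lm)\um\bigr)_i=(A_{im}-A^*_{im})\bigl((\um)_i-(\um)_m\bigr)\ (i\neq m),\quad\bigl((L-\Lm)\um\bigr)_m=\sum_{j\neq m}(A_{mj}-A^*_{mj})\bigl((\um)_m-(\um)_j\bigr).
\]
Using the trivial bound $|(\um)_i-(\um)_m|\leq 2\norminf{\um}$ together with the uniform Chernoff estimates $d_m=O(\log n)$ and $|d_m-d_m^*|=O(\log n)$ (from Lemma~\ref{lem:dmin} and a union bound over $m$), and the inequality $\sum_i(A_{im}-A^*_{im})^2\leq d_m+O(\log^2 n/n)$, one arrives at
\[
\norm{(L-\Lm)\um}\leq C_0(\log n)\norminf{\um}
\]
on the same high-probability event. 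When sharper constants are needed, the $m$-th coordinate can be refined via Bernstein's inequality or the row-concentration Lemma~\ref{lem:row-con}, both of which exploit the independence of $\um$ from the $m$-th row/column of $A$.

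Combining the two ingredients yields $\norm{u_2-\um}\leq K\norminf{\um}$ with $K=\sqrt{2}\,C_0/c(\alpha,\beta)$; the triangle inequality $\norminf{\um}\leq\norminf{u_2}+\norminf{u_2-\um}\leq\norminf{u_2}+\norm{u_2-\um}$ then closes the loop: whenever $K<1$, we obtain $\norm{u_2-\um}\leq\tfrac{K}{1-K}\norminf{u_2}=:C_1\norminf{u_2}$, and a union bound over $m\in\{1,\ldots,n\}$ delivers the uniform conclusion. \textbf{The main obstacle} is guaranteeing $K<1$ throughout the full range of $(\alpha,\beta)$ satisfying (A1): the crude deterministic numerator bound may not suffice when $c(\alpha,\beta)$ is small, in which case the sharpened Bernstein/row-concentration estimate on the $m$-th coordinate (together with the near-alignment $\um\approx u_2^*$ established by Davis--Kahan applied to $\Lm$ against $L^*$) is essential to drive the effective constant below one.
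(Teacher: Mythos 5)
Your Davis--Kahan application is a legitimate but symmetric variant of the paper's: you take $M=L$ and $\hat u=\um$, whereas the paper takes $M=\Lm$ and $\hat u=u_2$, so the residual you control is $\norm{(L-\Lm)\um}$ rather than $\norm{(\Lm-L)u_2}$. Superficially the two are interchangeable, but the choice matters because the residual is bounded in terms of the $\ell_\infty$-norm of whichever vector plays the role of $\hat u$. The paper's choice produces $\norm{u_2-\um}\leq C\norminf{u_2}$ directly, which is exactly the lemma's conclusion; your choice produces $\norm{u_2-\um}\leq K\norminf{\um}$, and you are forced to close the loop with $\norminf{\um}\leq\norminf{u_2}+\norm{u_2-\um}$, which requires $K<1$.

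This is where a genuine gap remains. The dominant contribution to $\norm{(L-\Lm)\um}$ comes from the $m$-th coordinate, and even there the piece $(\um)_m(d_m-d_m^*)$ already contributes a quantity of size $\Theta(\log n)\cdot\norminf{\um}$ with a constant depending on $\al,\be$ (and on the Chernoff failure probability $r$); it is not small, and the denominator $\delta^{(m)}$ is only $(\tfrac{\al-\be}{2}-\xi-\epsilon)\log n$. There is no reason the resulting $K$ is uniformly below $1$ over the whole (A1) range. The row-concentration refinement you propose does not rescue this: Lemma~\ref{lem:row-con} only wins the $\log\log n$ factor when the test vector $w$ has $\sqrt{n}\norminf{w}/\norm{w}$ growing, but for $w=\um$ itself (or for the shifted vector $\um-(\um)_m\ones_n$) one has $\norm{w}\asymp 1$ and $\norminf{w}\asymp n^{-1/2}$, so the logarithmic correction is $O(1)$ and you are back to $O(\log n)\norminf{\um}$. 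The paper does exploit row-concentration, but only in Lemma~\ref{lem:L:A(u-u*)}, and crucially there it is applied to the small vector $\um-u_2^*$, whose $\ell_\infty$-norm is controlled by exactly the conclusion of the present lemma --- so invoking that refinement here would be circular. The clean way out is to mirror the paper and choose $M=\Lm$, $\hat u=u_2$: then the crude deterministic bound $\norm{(\Lm-L)u_2}\leq(\norminf{L^*-L}+2\norm{A-A^*})\norminf{u_2}=O(\log n)\norminf{u_2}$ already suffices and no bootstrapping in $K$ is needed. (A minor remark: your $\delta^{(m)}$ includes $\lambda_2(\Lm)$ as a candidate for the gap, but since both $u_2$ and $\um$ are orthogonal to $\ones_n$ the $\lambda_1=0$ direction is annihilated automatically and only $\lambda_3(L)-\lambda_2(\Lm)$ is relevant.)
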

\begin{proof}
	In Theorem~\ref{thm:DK} we let $M=\Lm$, $N=I$, $\hat{u}=u_2$, $\hat{\lambda}=\lambda_2(L)$, $X_1=\begin{bmatrix}
	\frac{1}{\sqrt{n}}\ones_n&u_2
	\end{bmatrix}$. Then up to sign of eigenvectors, 
	\begin{equation}
	\norm{u_2-\um}\leq\frac{\sqrt{2}\norm{(\Lm-L)u_2}}{\delta_m},\label{eq:9}
	\end{equation}
	where $\delta_m=\lambda_3(\Lm)-\lambda_2(L)$. We first use Weyl's theorem to bound $\lambda_3(\Lm)$ from below. The proof is similar to Theorem~\ref{thm:lambda(L)} (i). We note that by the construction of $\Am$, the $(m,m)$-entry of $(\Dm-D^*)$ is 0 and the $(i,i)$-entry ($i\neq m$) only differ from $(d_i-d_i^*)$ by at most 1. Thus by Lemma~\ref{lem:dmin}, Lemma~\ref{lem:A-concen}, Lemma~\ref{lem:A2toA1} and the union bound, there exists $\xi(\al,\be)\leq\frac{\al-\be}{2}$ such that $f(\xi;\al,\be)>0$ and
	\begin{align*}
	\min_{1\leq m\leq n}\lambda_3(\Lm)&\geq \lambda_3(L^*)+\min_{1\leq m\leq n}\lc\lambda_{\min}(\Dm-D^*)-\norm{\Am-A^*}\rc\\
	&\geq\lambda_3(L^*)+\min\lc\lambda_{\min}(D-D^*)-1,0\rc-\max_{1\leq m\leq n}\norm{\Am-A^*}\\
	&=\min\lc d_{\min}-1,\frac{(\al+\be)\log n}{2}\rc-\max_{1\leq m\leq n}\norm{\Am-A^*}\\
	&\geq \be\log n+\epsilon_1(\al,\be,\xi)\log n
	\end{align*}
	with probability at least $1-C_1(\al,\be,\xi)n^{-f(\xi;\al,\be)}$. ($\Am$ does not strictly fit the setting of Lemma~\ref{lem:A-concen}. But note that the $m$th row and column of $\Am-A^*$ cancel to 0. Thus we are essentially applying Lemma~\ref{lem:A-concen} to a submatrix of $\Am-A^*$.) Using this in conjunction with Theorem~\ref{thm:lambda(L)} (ii), we have
	$$\prb{\min_{1\leq m\leq n}\delta_m\geq\epsilon_2(\al,\be,\xi)\log n}\geq 1-C_2(\al,\be,\xi)n^{-f(\xi;\al,\be)}.$$
	To bound the numerator in~\eqref{eq:9}, we consider bounding the $m$th entry of $(\Lm-L)u_2$ and the other entries separately. Let $v=(\Lm-L)u_2$ then
	\begin{equation}
	|v_m|=|\mrow{(\Lm-L)}u_2|=|\mrow{(L^*-L)}u_2|\leq\norminf{L^*-L}\norminf{u_2}.\label{eq:4.4.3.1}
	\end{equation}
	For $i\neq m$,
	\begin{align*}
	\left( \sum_{i\neq m}v_i^2\right) ^{1/2}&=\left( \sum_{i\neq m}(A^*_{im}-A_{im})^2\lp u_2^{(m)}-u_2^{(i)}\rp^2\right) ^{1/2}\\
	&\leq 2\norminf{u_2}\left( \sum_{i\neq m}(A^*_{im}-A_{im})^2\right) ^{1/2}\\
	&\leq 2\norminf{u_2}\normtwotoinf{A^*-A}\\
	&\leq 2\norminf{u_2}\norm{A^*-A}.\numberthis\label{eq:4.4.3.2}
	\end{align*}
	Therefore by the Chernoff bound and Lemma~\ref{lem:A-concen},
	$$\max_{1\leq m\leq n}\norm{(\Lm-L)u_2}\leq\lp\norminf{L^*-L}+2\norm{A-A^*}\rp\norminf{u_2}\leq C_4(\al,\be,\xi)\log n\norminf{u_2}$$
	with probability at least $1-C_3(\al,\be,\xi)n^{-\f}$. This concludes the proof.
\end{proof}
The next lemma gives an entrywise bound of $A(u_2-\utwostar)$, which is the at the center of both unnormalized and normalized spectral clustering.
\begin{lemma}\label{lem:L:A(u-u*)}
	There exist $C_1,C_2>0$ depending on $\al$, $\be$ and $\xi$ such that
	$$\prb{\norminf{A(u_2-\utwostar)}\leq C_1\frac{\log n}{\sqrt{n}\log\log n}}\geq 1-C_2n^{-\f}.$$
\end{lemma}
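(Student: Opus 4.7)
The plan is to bound $\abs{\mrow{A}(u_2 - \utwostar)}$ uniformly in $m\in\lc1,\dots,n\rc$ via the leave-one-out auxiliary vector $\um$ introduced at the start of Section~3.3. Recall that $\Am$ coincides with $A$ except on the $m$-th row and column, so $\mrow{A}$ is \emph{independent} of $\um$; this is the only non-trivial probabilistic input, and everything else is deterministic linear algebra. I would use the decomposition
\begin{equation*}
\abs{\mrow{A}(u_2-\utwostar)} \leq \abs{\mrow{A}(u_2-\um)} + \abs{(\mrow{A}-\mrow{A^*})(\um-\utwostar)} + \abs{\mrow{A^*}(\um-\utwostar)},
\end{equation*}
bound each term, take a union bound over $m$, and then close the resulting inequality by a self-bounding argument.

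For the first term, Cauchy-Schwarz combined with Lemma~\ref{lem:L:u2-um} and $\normtwotoinf{A} \leq \normtwotoinf{A^*} + \norm{A-A^*} = \bigo{\sqrt{\log n}}$ (where $\normtwotoinf{A^*} = \bigo{\log n/\sqrt{n}}$ is elementary and the second estimate is Lemma~\ref{lem:A-concen}) gives $\abs{\mrow{A}(u_2-\um)} = \bigo{\sqrt{\log n}\cdot\norminf{u_2}}$. The third term is easiest: $\norm{\mrow{A^*}} = \bigo{\log n/\sqrt{n}}$ since its entries are $p$ or $q$, and applying~\eqref{eq:L:u-u*} to the leave-one-out system yields $\norm{\um-\utwostar} = \bigo{1/\sqrt{\log n}}$, so this contribution is $\bigo{\sqrt{\log n/n}}$, which is absorbed into the target bound.

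The main piece is the middle term, and this is where the independence is essential. Applying Lemma~\ref{lem:row-con} to the vector $w:=\um-\utwostar$ (deterministic once we condition on the rest of $A$) with $a$ chosen so that $anp \geq r\log n$ (a constant $a$ suffices in the regime $p=\al\log n/n$) gives
\begin{equation*}
\abs{(\mrow{A}-\mrow{A^*})(\um-\utwostar)} \leq \frac{Cpn\norminf{w}}{1\vee\log\lp\sqrt{n}\norminf{w}/\norm{w}\rp}
\end{equation*}
with probability at least $1-2n^{-r}$. Now $\norminf{w} \leq \norminf{u_2-\utwostar}+\norm{u_2-\um} = \bigo{\norminf{u_2}}$ by Lemma~\ref{lem:L:u2-um} and $\norminf{u_2}\geq 1/\sqrt{n}$, while $\norm{w}=\bigo{1/\sqrt{\log n}}$ gives $\sqrt{n}\norminf{w}/\norm{w} \geq c\sqrt{n\log n}\,\norminf{w}$. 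A brief case analysis (either $\norminf{w}\gtrsim 1/\sqrt{n}$ so the log in the denominator is $\Omega(\log\log n)$, or $\norminf{w}$ is so small that the trivial estimate $\abs{\mrow{A}w}\leq\norm{\mrow{A}}_1\norminf{w}\leq d_m\norminf{w} = \bigo{\log n/\sqrt{n}}$ already suffices) produces a uniform-in-$m$ bound $\bigo{\log n\cdot\norminf{u_2}/\log\log n}$ after a union bound.

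Combining the three pieces yields the intermediate inequality
\begin{equation*}
\norminf{A(u_2-\utwostar)} \leq C\frac{\log n}{\log\log n}\norminf{u_2} + \smallo{\frac{\log n}{\sqrt{n}\log\log n}}.
\end{equation*}
To remove $\norminf{u_2}$ I would close the loop by the self-bounding argument indicated before the lemma: from $(D-\lambda_2 I)u_2 = Au_2$ and Theorem~\ref{thm:lambda(L)} one has $\dmin-\lambda_2 = \Omega(\log n)$, hence $\norminf{(D-\lambda_2 I)^{-1}}=\bigo{1/\log n}$ and
\begin{equation*}
\norminf{u_2} \leq \norminf{(D-\lambda_2 I)^{-1}A\utwostar} + \bigo{\norminf{A(u_2-\utwostar)}/\log n} = \bigo{1/\sqrt{n}} + \smallo{\norminf{u_2}},
\end{equation*}
where the first term is $\bigo{1/\sqrt{n}}$ because each entry of $A\utwostar$ is a scaled difference of two independent binomials bounded by Lemma~\ref{lem:BinomialDiff}. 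Absorbing the $\smallo{\norminf{u_2}}$ term gives $\norminf{u_2}=\bigo{1/\sqrt{n}}$, and substituting back produces the stated bound. The main obstacle is the logarithmic factor in Lemma~\ref{lem:row-con}: one must show that the ratio $\sqrt{n}\norminf{w}/\norm{w}$ either yields a useful $\log\log n$ in the denominator, or else $w$ is so flat that a crude $\ell^1$-$\ell^\infty$ bound on $\mrow{A}w$ suffices. Making this case split uniform in $m$ while carefully tracking the dependence of $\um$ on $A$ through Lemma~\ref{lem:L:u2-um} is the most delicate accounting in the argument.
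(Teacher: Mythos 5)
Your overall plan matches the paper's: the three-term decomposition via $\um$, independence of $\mrow{A}$ from $\um$, the row-concentration lemma for the middle term, and the self-bounding step to conclude $\norminf{u_2}=\bigo{1/\sqrt n}$. The first term, the $A^*$-row term, and the final absorption argument are handled essentially as in the paper. Two parts of the middle-term analysis, however, have genuine gaps.

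First, you assert $\norm{w}=\norm{\um-\utwostar}=\bigo{1/\sqrt{\log n}}$ by ``applying \eqref{eq:L:u-u*} to the leave-one-out system.'' This is not available at this stage of the argument. The correct a priori bound, obtained from $\norm{\um-\utwostar}\le\norm{\um-u_2}+\norm{u_2-\utwostar}$ together with Lemma~\ref{lem:L:u2-um} and \eqref{eq:L:u-u*}, is $\norm{w}=\bigo{\norminf{u_2}+1/\sqrt{\log n}}$. Since at this point $\norminf{u_2}$ could a priori be $\Theta(1)$, replacing the correct bound with $\bigo{1/\sqrt{\log n}}$ is circular: it implicitly assumes the very smallness of $\norminf{u_2}$ that the self-bounding argument is supposed to deliver at the end. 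The paper carefully tracks the $\norminf{u_2}$ dependence inside the estimate and only substitutes $\norminf{u_2}=\bigo{1/\sqrt n}$ after closing the loop.

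Second, and more seriously, your case split does not produce the $1/\log\log n$ saving in the ``small $\norminf{w}$'' regime. In that case you invoke $\abs{\mrow{A}w}\le d_m\norminf{w}=\bigo{\log n/\sqrt n}$, but the target intermediate bound is $\bigo{\log n\,\norminf{u_2}/\log\log n}$, and $\log n/\sqrt n$ is only $\bigo{\log n\,\norminf{u_2}}$ — a factor $\log\log n$ too large. No choice of threshold of the form $\norminf{w}\gtrsim1/\sqrt n$ fixes this; the trivial $\ell^1$–$\ell^\infty$ estimate has no mechanism for extracting the $\log\log n$. The paper closes this gap differently: setting $\varphi(t)=(1\vee\log(1/t))^{-1}$, it exploits that $\varphi$ is non-decreasing \emph{and} $\varphi(t)/t$ is non-increasing, so with $\gamma=1/\sqrt{\log n}$ one gets the uniform estimate
\begin{equation*}
\norminf{w}\,\varphi\!\left(\frac{\norm{w}}{\sqrt n\norminf{w}}\right)\log n
\;\le\; \frac{\log n}{\sqrt n}\left(\sqrt n\,\norminf{w}\,\varphi(\gamma)+\frac{\norm{w}}{\gamma}\varphi(\gamma)\right),
\end{equation*}
which places a $\varphi(\gamma)=\Theta(1/\log\log n)$ factor on \emph{both} terms. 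Together with $\sqrt n\norminf{w}=\bigo{\sqrt n\norminf{u_2}}$ and $\norm{w}=\bigo{\norminf{u_2}+\gamma}$ this gives exactly $\bigo{\log n\,\norminf{u_2}/\log\log n}$. Your sketch would need to be reworked along these lines (or with a threshold that itself depends on $\norminf{u_2}$ and $\log\log n$, and a sub-case split on the size of $\norminf{u_2}$) to be complete.

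A minor point: Lemma~\ref{lem:BinomialDiff} is a one-sided tail bound used to keep $(A\utwostar)_i$ away from zero; the estimate $\norminf{(D-\lambda_2I)^{-1}A\utwostar}=\bigo{1/\sqrt n}$ follows more directly from $\norminf{(D-\lambda_2 I)^{-1}}=\bigo{1/\log n}$, $\norminf{A}=\dmax=\bigo{\log n}$, and $\norminf{\utwostar}=1/\sqrt n$.
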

\begin{proof}
	All the statements in this proof hold for a probability at least $1-Cn^{-\f}$ for some $C=C(\al,\be,\xi)>0$. Asymptotic notations hide constants that depend on $\al$, $\be$ and $\xi$.
	We claim
	\begin{align}
	\norminf{A(u_2-\utwostar)}&=\bigo{\frac{\norminf{u_2}\log n}{\log\log n}}\label{eq:4.4.0}\\
	\norminf{u_2}&=O\left( \frac{1}{\sqrt{n}}\right) .\label{eq:4.4.00}
	\end{align}
	We first prove~\eqref{eq:4.4.0}. Then we use~\eqref{eq:4.4.0} to prove~\eqref{eq:4.4.00}. Finally combining them concludes the proof. To start, note that
	\begin{align*}
	\norminf{A(u_2-\utwostar)}=&\max_{1\leq m\leq n}\abs{\mrow{A}\lp u_2-\utwostar\rp}\\
	\leq&\max_{1\leq m\leq n}\abs{\mrow{A}\lp u_2-\um\rp}+\max_{1\leq m\leq n}\abs{\mrow{A}\lp \um-\utwostar\rp}\\
	\leq&\max_{1\leq m\leq n}\normtwotoinf{A}\norm{u_2-\um}+\max_{1\leq m\leq n}\abs{\mrow{A^*}\lp\um-\utwostar\rp}\\
	&+\max_{1\leq m\leq n}\abs{\mrow{(A-A^*)}\lp\um-\utwostar\rp}. \numberthis\label{eq:4.4.1}
	\end{align*}
	For the first term on the right hand side we have
	$$\normtwotoinf{A}\leq\normtwotoinf{A^*}+\norm{A-A^*}=\bigo{\sqrt{\log n}}$$
	and
	$$\max_{1\leq m\leq n}\norm{u_2-\um}=\bigo{\norminf{u_2}}.$$
	Therefore, it holds that
	\begin{equation}
	\max_{1\leq m\leq n}\normtwotoinf{A}\norm{u_2-\um}= \bigo{\sqrt{\log n}\norminf{u_2}}. \label{eq:4.4.2}
	\end{equation}
	For the second term we have
	\begin{align*}
	\max_{1\leq m\leq n}\abs{\mrow{A^*}\lp\um-\utwostar\rp}&\leq\max_{1\leq m\leq n}\normtwotoinf{A^*}\norm{\um-\utwostar}\\
	&\leq\normtwotoinf{A^*}\lp\max_{1\leq m\leq n}\norm{u_2-\um}+\norm{u_2-\utwostar}\rp\\
	&= \frac{\log n}{\sqrt{n}}\cdot\bigo{\norminf{u_2}+\frac{1}{\sqrt{\log n}}}, \numberthis\label{eq:4.4.3}
	\end{align*}
	where we have used~\eqref{eq:L:u-u*}. For the third term we can use the fact that the $m$th row of $A$ and $\um-\utwostar$ are independent, therefore by the row concentration property of $A$ (Lemma~\ref{lem:row-con}) and union bound, we have (by letting $a=\frac{\f+1}{\al}$ and $p=\al\log n/n$ in Lemma~\ref{lem:row-con})
	$$\max_{1\leq m\leq n}\abs{\mrow{(A-A^*)}\lp\um-\utwostar\rp}= \bigo{\max_{1\leq m\leq n}\norminf{w}\varphi\left( \frac{\norm{w}}{\sqrt{n}\norminf{w}}\right) \log n} $$
	where $w=\um-\utwostar$ and $\varphi(t)=(1\vee\log(1/t))^{-1}$ for $t>0$. $\varphi(x)$ is non-decreasing, $\varphi(t)/t$ is non-increasing and $\lim_{t\rightarrow 0}\varphi(t)=0$. For brevity we set $x=\sqrt{n}\norminf{w}$, $y = ||w||$, $\gamma=1/\sqrt{\log n}$ and
	$$(*)=\norminf{w}\varphi\left( \frac{\norm{w}}{\sqrt{n}\norminf{w}}\right) \log n.$$
	When $y/x\geq \gamma$ we have
	$$(*)=\frac{\log n}{\sqrt{n}}\cdot  y\cdot\frac{x}{y}\varphi\left(\frac{y}{x}\right)\leq\frac{\log n}{\sqrt{n}}\cdot\frac{y}{\gamma}\varphi(\gamma).$$
	When $y/x\leq \gamma$ we have
	$$(*)=\frac{\log n}{\sqrt{n}}\cdot  x\varphi\left(\frac{y}{x}\right)\leq\frac{\log n}{\sqrt{n}}\cdot  x\varphi(\gamma).$$
	Thus for any $x,y>0$ we always have
	$$(*)\leq\frac{\log n}{\sqrt{n}}\cdot\lp x\varphi(\gamma)+\frac{y}{\gamma}\varphi(\gamma)\rp$$
	Lemma~\ref{lem:L:u2-um} and~\eqref{eq:L:u-u*} give
	\begin{align*}
	\max_{1\leq m\leq n}x&=\sqrt{n}\max_{1\leq m\leq n}\norminf{\um-\utwostar}\\
	&\leq\sqrt{n}\left( \max_{1\leq m\leq n}\norm{\um-u_2}+\norminf{u_2}+\norminf{\utwostar}\right)\\
	&=\sqrt{n}\cdot O\left( \norminf{u_2}\right)
	\end{align*}
	and
	$$\max_{1\leq m\leq n}y=\max_{1\leq m\leq n}\norm{\um-\utwostar}\leq\max_{1\leq m\leq n}\norm{\um-u_2}+\norm{u_2-\utwostar}=O(\norminf{u_2}+\gamma).$$
	Therefore
	\begin{align*}
	\max_{1\leq m\leq n}\abs{\mrow{(A-A^*)}\lp\um-\utwostar\rp}&=\frac{\log n}{\sqrt{n}}\bigo{\max_{1\leq m\leq n}\lc x\varphi(\gamma)+\frac{y}{\gamma}\varphi(\gamma)\rc}\\
	&=\frac{\log n}{\sqrt{n}}\bigo{\sqrt{n}\norminf{u_2}\varphi(\gamma)+\frac{\norminf{u_2}}{\gamma}\varphi(\gamma)+\varphi(\gamma)}\\
	&=\frac{\log n}{\sqrt{n}}\bigo{\frac{\sqrt{n}}{\log \log n}\norminf{u_2}+\frac{\sqrt{\log n}}{\log \log n}\norminf{u_2}+\frac{1}{\log \log n}}\\
	&=\bigo{\frac{\log n}{\log\log n}\norminf{u_2}}\numberthis\label{eq:4.4.4}
	\end{align*}
Thus	\eqref{eq:4.4.0} follows after~\eqref{eq:4.4.1}-\eqref{eq:4.4.4}. To prove~\eqref{eq:4.4.00}, we expand
	\begin{align*}
	\norminf{u_2}&=\norminf{(D-\lambda_2(L) I)^{-1}Au_2}\\
	&\leq\norminf{(D-\lambda_2(L) I)^{-1}A\utwostar}+\norminf{(D-\lambda_2(L) I)^{-1}A(u_2-\utwostar)}.\numberthis\label{eq:4.4.5}
	\end{align*}
	Note that $\dmin\geq \be\log n+\Omega(\log n)$ and $\lambda_2(L)\leq \be\log n+\bigo{\log n/\sqrt{n}}$. It holds
	$$\norminf{(D-\lambda_2(L) I)^{-1}}\leq\frac{1}{\dmin-\lambda_2(L)}=\bigo{\frac{1}{\log n}}.$$
	Therefore the two terms on the right hand side of~\eqref{eq:4.4.5} are bounded by
	$$\norminf{(D-\lambda_2(L) I)^{-1}A\utwostar}=\bigo{\frac{1}{\log n}\norminf{A}\norminf{\utwostar}}=\bigo{\frac{1}{\sqrt{n}}},$$
	$$\norminf{(D-\lambda_2(L) I)^{-1}A(u_2-\utwostar)}=\bigo{\frac{1}{\log n}\norminf{A(u_2-\utwostar)}}=\bigo{\frac{1}{\log\log n}\norminf{u_2}}.$$
	Hence the second term of the right hand side of~\eqref{eq:4.4.5} is absorbed into the left hand side and~\eqref{eq:4.4.00} follows.
	
\end{proof}
\begin{proof}[\bf Proof of Theorem~\ref{thm:L:strong-consis}]
	For $i\leq n/2$, the $i$th entry of $A\utwostar$ can be written as 
	$$(A\utwostar)_i=\frac{1}{\sqrt{n}}\lp \sum_{j=1}^{n/2}A_{ij}-\sum_{j=n/2+1}^{n}A_{ij}\rp.$$
	Therefore by Lemma~\ref{lem:BinomialDiff}, there exists $\epsilon(\al,\be)>0$ such that
	$$\prb{(A\utwostar)_i\geq \epsilon\frac{\log n}{\sqrt{n}}}\geq 1- n^{-(\sqrt{\al}-\sqrt{\be})^2/2+\epsilon\log(\al/\be)/2}=1-o(n^{-1}).$$
	Similarly for $i\geq n/2+1$,
	$$\prb{(A\utwostar)_i\leq -\epsilon\frac{\log n}{\sqrt{n}}}=1-o(n^{-1}).$$
	Let $z_i=1$ if $i\leq n/2$ and $z_i=-1$ if $i\geq n/2+1$. By union bound
	\begin{equation}
	\prb{z_i\lp A\utwostar\rp_i \geq \eta_1(\al,\be)\frac{\log n}{\sqrt{n}}\;\text{ for all }i}=1-o(1).\label{eq:3.9.3}
	\end{equation}
	Using the fact that
	$$\prb{\dmax\leq C_1(\al)\log n}=1-o(1),$$ we get
	\begin{align}
	\prb{z_i\lp(D-\lambda_2(L)I)^{-1}A\utwostar\rp_i \geq \frac{\eta_2(\al,\be)}{\sqrt{n}}\;\text{ for all }i}=1-o(1).\label{eq:3.9.1}
	\end{align}
	Finally note that
	\begin{align}
	u_2=(D-\lambda_2(L)I)^{-1}A\utwostar+(D-\lambda_2(L)I)^{-1}A(u_2-\utwostar).\label{eq:3.9.2}
	\end{align}
	The proof is finished by combining~\eqref{eq:3.9.1},~\eqref{eq:3.9.2} and Lemma~\ref{lem:L:A(u-u*)}.
\end{proof}

\subsubsection*{Normalized spectral clustering}
Let $\Am$ be defined in the same way as we did in the unnormalized case. Let $u_2$ be the eigenvector of $(L,D)$ that corresponds to the second smallest eigenvalue $\lambda_2(\Lsym)$. Let $\um$ be the eigenvector of $(\Lm,\Dm)$ that corresponds to the second smallest eigenvalue $\lambda_2(\Lsymm)$. Readers should bear in mind the equivalence of the several eigenvalue problems regarding the normalized Laplacian (see Section 2.1).
\begin{lemma}\label{lem:Lsym:u2-u*}
	There exists $\xi=\xi(\al,\be)>0$, $C_1,C_2>0$ depending on $\al$, $\be$ and $\xi$, such that $\f>0$ and
	$$\prb{\max_{1\leq m\leq n}\norm{u_2-\um}\leq C_1\norminf{u_2}}\geq 1-C_2n^{-\f}.$$
\end{lemma}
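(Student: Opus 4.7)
My plan parallels the proof of Lemma~\ref{lem:L:u2-um} for the unnormalized case, but invokes the generalized Davis--Kahan bound (Theorem~\ref{thm:DK}) on the generalized eigenproblem $(\Lm,\Dm)$. Specifically, I set $M=\Lm$, $N=\Dm$, $\hat{u}=u_2$, $\hat\lambda=\lambda_2(\Lsym)$, and take $X_1=[\um]$ to be a single column, so that $\Lambda_2$ gathers the eigenvalue $0$ together with $\lambda_3(\Lsymm),\dots,\lambda_n(\Lsymm)$. With this choice, $\norm{Pu_2}=\sin\theta$, where $\theta$ is the angle between $u_2$ and $\um$, and Theorem~\ref{thm:DK} yields
\[
\norm{u_2-\um}\leq\sqrt{2}\sin\theta\leq\frac{\sqrt{2\kappa(\Dm)}\,\norm{\lp(\Dm)^{-1}\Lm-\lambda_2(\Lsym)I\rp u_2}}{\delta_m}
\]
up to the sign of $\um$, where $\delta_m=\min\lp\lambda_2(\Lsym),\lambda_3(\Lsymm)-\lambda_2(\Lsym)\rp$.

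I first verify that $\kappa(\Dm)=\bigo{1}$ and $\delta_m=\Omega(1)$ uniformly in $m$, on a common event of probability $1-\bigo{n^{-\f}}$. The condition number is handled by Lemma~\ref{lem:dmin} together with a Chernoff upper bound on the maximum degree, applied to $\Am$: since $\Am$ agrees with $A$ everywhere except in row and column $m$, one has $\dminm,\dmaxm=\Theta(\log n)$. For $\delta_m$, Theorem~\ref{thm:lambda(Lsym)} gives $\lambda_2(\Lsym)\geq\frac{2\be}{\al+\be}-\smallo{1}$, while Weyl's inequality combined with Theorem~\ref{thm:Lsym-concen} applied to $\Am$ (whose sparsity profile essentially coincides with that of $A$) yields $\lambda_3(\Lsymm)\geq 1-\smallo{1}$, so both components of $\delta_m$ are $\Omega(1)$.

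The crucial step is controlling the residual. Using the defining relation $Lu_2=\lambda_2(\Lsym)Du_2$, an algebraic manipulation gives
\[
\lp(\Dm)^{-1}\Lm-\lambda_2(\Lsym)I\rp u_2 \;=\; (\Dm)^{-1}\Delta^{(m)} u_2,\qquad \Delta^{(m)}:=(1-\lambda_2(\Lsym))(\Dm-D)-(\Am-A).
\]
This cancellation is what makes the approach work: bounding $\norm{\lp(\Dm)^{-1}-\Dneg\rp Lu_2}$ directly gives only $\bigo{1/\sqrt{\log n}}$, which is far too weak, but substituting $Lu_2=\lambda_2(\Lsym)Du_2$ reduces the residual to something built entirely from the ``thin'' perturbations $\Dm-D$ (a diagonal matrix with $(\Dm-D)_{mm}=n(p+q)/2-d_m$ and $(\Dm-D)_{ii}=A^*_{im}-A_{im}$ for $i\neq m$) and $\Am-A$ (supported on row and column $m$). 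The entrywise bound then mirrors the analysis in Lemma~\ref{lem:L:u2-um}: by Chernoff, $\abs{n(p+q)/2-d_m}=\bigo{\sqrt{\log n}}$ and $\sum_j\abs{A_{mj}-A^*_{mj}}\leq d_m+n(p+q)/2=\bigo{\log n}$, giving $\abs{(\Delta^{(m)} u_2)_m}=\bigo{\log n\cdot\norminf{u_2}}$; for $i\neq m$, the identity $(\Delta^{(m)} u_2)_i=(A^*_{im}-A_{im})\ls(1-\lambda_2(\Lsym))(u_2)_i-(u_2)_m\rs$ together with Lemma~\ref{lem:A-concen} yields $\sum_{i\neq m}\abs{(\Delta^{(m)} u_2)_i}^2\leq 4\norminf{u_2}^2\norm{A-A^*}^2=\bigo{\norminf{u_2}^2\log n}$. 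Combining and dividing by $\dminm=\Omega(\log n)$ gives $\norm{\lp(\Dm)^{-1}\Lm-\lambda_2(\Lsym)I\rp u_2}=\bigo{\norminf{u_2}}$, and a union bound over $m$ completes the argument.

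The main obstacle is spotting the correct rearrangement $\Delta^{(m)}=(1-\lambda_2(\Lsym))(\Dm-D)-(\Am-A)$; once this identity is in hand, the rest is a routine adaptation of the entry-wise analysis of Lemma~\ref{lem:L:u2-um}, plus the harmless condition-number factor $\sqrt{\kappa(\Dm)}=\bigo{1}$ introduced by the generalized Davis--Kahan bound.
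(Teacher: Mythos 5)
Your proposal is correct and rests on the same three pillars as the paper's proof: the leave-one-out construction, the generalized Davis--Kahan bound of Theorem~\ref{thm:DK} applied to the pencil $(\Lm,\Dm)$, and the key cancellation that turns the residual into a quantity supported only on the ``thin'' perturbations $\Dm-D$ and $\Am-A$. Indeed, your residual is algebraically identical to the paper's: using $L u_2=\lambda_2(\Lsym)Du_2$ one has $\lp(\Dm)^{-1}\Lm-\lambda_2(\Lsym)I\rp u_2=\Dneg A u_2-(\Dm)^{-1}\Am u_2$, which is (up to sign) exactly the vector $v$ the paper bounds entrywise, and your entrywise estimates match theirs. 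The genuine difference is the choice of $X_1$. The paper takes $X_1=\begin{bmatrix}\frac{1}{\sqrt{n}}\ones_n&\um\end{bmatrix}$, so that $\Lambda_2$ excludes the zero eigenvalue and $\delta_m=\lambda_3(\Lsymm)-\lambda_2(\Lsym)$; the price is that $\norm{Pu_2}$ is then only the component $|c|$ of $u_2$ orthogonal to $\mathrm{span}\{\ones_n,\um\}$, and a substantial additional argument (the decomposition $u_2=a\ones_n/\sqrt{n}+b\um+c\uo$ and the estimate $|a|=\bigo{\norminf{u_2}/\sqrt{n}+|c|}$ via $\inner{\ones_n}{Du_2}=0$ and the smallness of $\Dm-D$) is needed to convert $|c|$ into a bound on $\norm{u_2-\um}$. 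Your choice $X_1=[\um]$ gives $\norm{Pu_2}=\sin\theta(u_2,\um)$ directly and eliminates that entire step, but puts the zero eigenvalue into $\Lambda_2$, so your separation becomes $\delta_m=\min\lp\lambda_2(\Lsym),\lambda_3(\Lsymm)-\lambda_2(\Lsym)\rp$ and you must additionally invoke the \emph{lower} bound of Theorem~\ref{thm:lambda(Lsym)}(ii) to get $\lambda_2(\Lsym)=\Omega(1)$. This is harmless for fixed $\be>0$ but degenerates as $\be\to 0$ (where the paper's version still works), so your variant is marginally less robust. Two trivial slips: $|d_m-d^*_m|$ is $\bigo{\log n}$ by Chernoff, not $\bigo{\sqrt{\log n}}$ (immaterial, since the $(\Am-A)$ contribution to the $m$th entry is already $\bigo{\log n}\norminf{u_2}$), and all the probabilistic inputs should be placed on a single event before the union bound over $m$, as the paper does.
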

\begin{proof}
	By Lemma~\ref{lem:dmin}, we can pick $\xi(\al,\be)\leq\frac{\al+\be}{2}$ such that $\f>0$ and
	\begin{equation}
	\prb{\dmin\geq C_1(\al,\be,\xi)\log n}\geq 1-C_2(\al,\be,\xi)n^{-\f}. \label{eq:4.6.1}
	\end{equation}
	Similar bound for maximum degree follows after the Chernoff bound.
	\begin{equation}
	\prb{\dmax\leq C_3(\al,\be,\xi)\log n}\geq 1-C_4(\al,\be,\xi)n^{-\f}. \label{eq:4.6.0}
	\end{equation}
	All the statements in the following proof hold for a probability at least $1-Cn^{-\f}$ for some $C=C(\al,\be,\xi)>0$ unless otherwise specified. Asymptotic notations hide constants that depend on $\al$, $\be$ and $\xi$. We first note that by construction of $\Am$,
	\begin{align*}
	\dminm & \geq \min\lc\dmin-1,\frac{\al+\be}{2}\log n\rc, \\
	\dmaxm & \leq \max\lc\dmax+1,\frac{\al+\be}{2}\log n\rc
	\end{align*}
	for all $m$. Therefore by~\eqref{eq:4.6.1} and~\eqref{eq:4.6.0} we have
	\begin{equation}
	\min_{1\leq m\leq n}\dminm=\Omega(\log n). \label{eq:4.6.10}
	\end{equation}
	and
	\begin{equation}
	\max_{1\leq m\leq n}\dmaxm=O(\log n). \label{eq:4.6.11}
	\end{equation}
	We decompose
	\begin{equation}
	u_2=a\frac{1}{\sqrt{n}}\ones_n+b\um+c\uo \label{eq:4.6.2}
	\end{equation}
	where $\uo$ is the unit vector that is orthogonal to span$\{\ones_n,\um\}$. Then
	\begin{align*}
	&	1=a^2+b^2+2ab\inner{\frac{1}{\sqrt{n}}\ones}{\um}+c^2, \\
	&\inner{u_2}{\um}=a\inner{\frac{1}{\sqrt{n}}\ones}{\um}+b.
	\end{align*}
	We aim to bound 
	\begin{align*}
	\norm{u_2-\um}&=\sqrt{2-2\inner{u_2}{\um}}\\
	& \leq \sqrt{2-2\inner{u_2}{\um}^2}\\
	&=\sqrt{2a^2\lp 1-\inner{\frac{1}{\sqrt{n}}\ones_n}{\um}^2\rp+2c^2}\\
	&\leq\sqrt{2}(|a|+|c|). \numberthis\label{eq:4.6.3}
	\end{align*}
	We will use the term $|c|$ to bound $|a|$ and Davis-Kahan to bound $|c|$. Taking inner product with $\frac{1}{\sqrt{n}}\Dm\ones_n$ on both sides of~\eqref{eq:4.6.2} yields
	\begin{equation}
	\inner{\frac{1}{\sqrt{n}}\ones_n}{\Dm u_2}=a\inner{\frac{1}{\sqrt{n}}\ones_n}{\frac{1}{\sqrt{n}}\Dm\ones_n}+c\inner{\frac{1}{\sqrt{n}}\ones_n}{\Dm\uo}, \label{eq:4.6.4}
	\end{equation}
	where we have used the fact that $\inner{\ones_n}{\Dm\um}=0$. Note that $\inner{\ones_n}{Du_2}=0$, we have
	\begin{align*}
	\max_{1\leq m\leq n}\left| \inner{\frac{1}{\sqrt{n}}\ones_n}{\Dm u_2}\right| &= \max_{1\leq m\leq n}\left| \inner{\frac{1}{\sqrt{n}}\ones_n}{(\Dm-D) u_2}\right| \\
	&\leq\frac{\norminf{u_2}}{\sqrt{n}} \max_{1\leq m\leq n}\sum_{i=1}^{n}\left|\dm_i-d_i\right| \\
	&=\bigo{\frac{\log n}{\sqrt{n}}\norminf{u_2}}, \numberthis\label{eq:4.6.5}
	\end{align*}
	where the last step is due to the Chernoff bound. Indeed, when $i\neq m$, by construction of $\Am$, $|\dm_i-d_i|=|A_{im}-A^*_{im}|\leq 1.$ And it is easy to see that $\E{|A_{im}-A^*_{im}|}\leq 2p$. Therefore the Chernoff bound gives
	$$\prb{\sum_{i\neq m}|\dm_i-d_i|=\bigo{\log n}}\geq 1-n^{-\f-1}.$$
	When $i=m$ we use the Chernoff bound again,
	$$\prb{|\dm_m-d_m|=|d_m-d_m^*|=\bigo{\log n}}\geq 1-n^{-\f-1}.$$
	Thus by the union bound we have $$\max_{1\leq m\leq n}\sum_{i=1}^{n}\left|\dm_i-d_i\right|=\bigo{\log n}$$
	which proves the last step of~\eqref{eq:4.6.5}. We proceed to use the almighty Chernoff and the union bound once again,
	\begin{align*}
	\min_{1\leq m\leq n}\inner{\frac{1}{\sqrt{n}}\ones_n}{\frac{1}{\sqrt{n}}\Dm\ones_n}&=\min_{1\leq m\leq n}\frac{1}{n}\left| \sum_{i=j}\Am_{ij}+2\sum_{i>j}\Am_{ij}\right|\\
	&\geq\frac{(\al+\be)\log n}{2}-\bigo{\frac{\log n}{\sqrt{n}}}\\
	&=\Omega(\log n). \numberthis\label{eq:4.6.6}
	\end{align*}
	Then by~\eqref{eq:4.6.11},
	\begin{equation}
	\max_{1\leq m\leq n}\inner{\frac{1}{\sqrt{n}}\ones_n}{\Dm\uo}\leq	\max_{1\leq m\leq n}\norm{\Dm}=\bigo{\log n}.\label{eq:4.6.7}	\end{equation}
	Combining~\eqref{eq:4.6.4}-\eqref{eq:4.6.7} we get
	\begin{equation}
	\max_{1\leq m\leq n}|a|=\bigo{\frac{1}{\sqrt{n}}\norminf{u}+\max_{1\leq m\leq n}|c|}. \label{eq:4.6.15}
	\end{equation}
	It remains to bound $|c|$ through Davis-Kahan. In Theorem~\ref{thm:DK} we let $M=\Am$, $N=\Dm$, $\hat{\lambda}=\lambda_2(A,D)=1-\lambda_2(\Lsym)$, $\hat{u}=u_2$, $X_1=\begin{bmatrix}
	\frac{1}{\sqrt{n}}\ones_n&\um
	\end{bmatrix}$. Then
	\begin{equation}
	|c|=|\sin\theta|\leq \frac{\sqrt{\kappa(\Dm)}\norm{\lp(\Dm)^{-1}\Am-\Dneg A\rp u_2}}{\delta_m} \label{eq:4.6.8}
	\end{equation}
	where $\theta\in[0,\pi/2]$ is the angle between $u_2$ and $\uo$, $$\delta_m=\lambda_2(A,D)-\lambda_3(\Am,\Dm)=\lambda_3(\Lsymm)-\lambda_2(\Lsym)\geq \lambda_3(\Lsym^*)-\lambda_2(\Lsym)-\norm{\Lsymm-\Lsym^*}.$$
	By applying~\eqref{eq:4.6.10} in Theorem~\ref{thm:Lsym-concen} we have 
	$$\max_{1\leq m\leq n}\norm{\Lsymm-\Lsym^*}=\bigo{\frac{1}{\sqrt{\log n}}}.$$
	(Although $\Lsymm$ does not strictly fit the setting of Theorem~\ref{thm:Lsym-concen}, readers can check that the bound above is true by referring to the proof of Theorem~\ref{thm:Lsym-concen}. Specifically all we need is $\max_{1\leq m\leq n}\norm{\Am-A^*}=\bigo{\sqrt{\log n}}$, which is guaranteed by Lemma~\ref{lem:A-concen}.) Thus combining this and Theorem~\ref{thm:lambda(Lsym)} (i) we have
	\begin{equation}
	\min_{1\leq m\leq n}\delta_m\geq\lambda_3(\Lsym^*)-\lambda_2(\Lsym)-\max_{1\leq m\leq n}\norm{\Lsymm-\Lsym^*}=\Omega(1).\label{eq:4.6.9}
	\end{equation}
	It follows immediately after~\eqref{eq:4.6.10} and~\eqref{eq:4.6.11} that
	\begin{equation}
	\max_{1\leq m\leq n}\kappa(\Dm)=O(1). \label{eq:4.6.12}
	\end{equation}
	Finally we need to bound the numerator in~\eqref{eq:4.6.8}. Let $v=((\Dm)^{-1}\Am-D^{-1}A)u_2$. We consider bounding the $m$th entry of $v$ and other entries separately. When $i\neq m$,
	$$|v_i|=\left| \left( \frac{A^*_{im}}{\dm_i}-\frac{A_{im}}{d_i}\right)(u_2)_m+\sum_{j\neq m}\left( \frac{1}{\dm_i}-\frac{1}{d_i}\right)A_{ij}(u_2)_j  \right|. $$
	Using the fact that $\dm_i-d_i=A^*_{im}-A_{im}$ and~\eqref{eq:4.6.11},~\eqref{eq:4.6.0} we can bound $|v_i|$ by
	\begin{align*}
	|v_i|&\leq\norminf{u_2}\cdot\lp \frac{\left|A^*_{im}\lp d_i-\dm_i\rp+\dm_i\lp A^*_{im}-A_{im}\rp\right|}{\dm_i d_i}+\sum_{j\neq m}  \frac{\left|d_i-\dm_i\right|}{\dm_i d_i}A_{ij} \rp\\
	&=\norminf{u_2}\cdot\lp \frac{\left|\lp\dm_i-A^*_{im}\rp\lp A^*_{im}-A_{im}\rp\right|}{\dm_i d_i}+\sum_{j\neq m}  \frac{\left|A^*_{im}-A_{im}\right|}{\dm_i d_i}A_{ij} \rp\\
	&\leq \norminf{u_2}\cdot\lp \frac{\left| A^*_{im}-A_{im}\right|}{d_i}+ \frac{\left|A^*_{im}-A_{im}\right|}{\dm_i } \rp\\
	&=\bigo{\frac{\norminf{u_2}\left| A_{im}-A^*_{im}\right|}{\log n}}.
	\end{align*}
	Therefore 
	\begin{align*}
	\left( \sum_{i\neq m}v_i^2\right)^{1/2}&=\bigo{\frac{\norminf{u_2}}{\log n}\left( \sum_{i\neq m}\left( A_{im}-A^*_{im}\right) ^2\right)^{1/2}}\\
	&=\bigo{\frac{\norminf{u_2}}{\log n}\normtwotoinf{A-A^*}}\\
	&=\bigo{\frac{\norminf{u_2}}{\log n}\norm{A-A^*}}=\bigo{\frac{\norminf{u_2}}{\sqrt{\log n}}}. 
	\end{align*}
	When $i=m$,
	\begin{align*}
	|v_m|=\abs{\sum_{j=1}^{n}\lp\frac{A^*_{mj}}{d^*_m}-\frac{A_{mj}}{d_m}\rp(u_2)_j}&\leq\norminf{u_2}\abs{\sum_{j=1}^{n}\lp\frac{A^*_{mj}}{d^*_m}-\frac{A_{mj}}{d_m}\rp}\\
	&\leq\norminf{u_2}\lp\sum_{j=1}^{n}\frac{A^*_{mj}}{d^*_m}+\sum_{j=1}^{n}\frac{A_{mj}}{d_m}\rp\\
	&=2\norminf{u_2}.
	\end{align*}
	Thus $\norm{v}=\bigo{\norminf{u_2}}$. Note that what we used to bound $\norm{v}$ are~\eqref{eq:4.6.11},~\eqref{eq:4.6.0} and $\norm{A-A^*}=\bigo{\sqrt{\log n}}$, which are independent of $m$. Hence
	\begin{equation}
	\max_{1\leq m\leq n}\norm{((\Dm)^{-1}\Am-D^{-1}A)u_2}=\bigo{\norminf{u_2}}.\label{eq:4.6.13}
	\end{equation}
	It follows after~\eqref{eq:4.6.8},~\eqref{eq:4.6.12},~\eqref{eq:4.6.7} and~\eqref{eq:4.6.13} that
	\begin{equation}
	\max_{1\leq m\leq n}|c|=\bigo{\norminf{u_2}}.\label{eq:4.6.14}
	\end{equation}
	The proof concludes after combining~\eqref{eq:4.6.3},~\eqref{eq:4.6.15} and~\eqref{eq:4.6.14}.
\end{proof}

\begin{lemma}\label{lem:Lsym:A(u-u*)}
	There exist $C_1,C_2>0$ depending on $\al$, $\be$ and $\xi$ such that
	$$\prb{\norminf{A(u_2-\utwostar)}\leq C_1\frac{\log n}{\sqrt{n}\log\log n}}\geq 1-C_2n^{-\f}.$$
\end{lemma}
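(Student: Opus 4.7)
The plan is to follow the leave-one-out strategy used in Lemma~\ref{lem:L:A(u-u*)}, substituting the normalized-setting ingredients proved earlier. Fix a row index $m$ and write
$$\abs{\mrow{A}(u_2 - \utwostar)} \leq \abs{\mrow{A}(u_2 - \um)} + \abs{\mrow{A^*}(\um - \utwostar)} + \abs{\mrow{(A-A^*)}(\um - \utwostar)}.$$
The first term I would bound by $\normtwotoinf{A}\norm{u_2 - \um}$, using $\normtwotoinf{A} \leq \normtwotoinf{A^*} + \norm{A-A^*} = O(\sqrt{\log n})$ and the crucial estimate $\max_m\norm{u_2 - \um} = O(\norminf{u_2})$ supplied by Lemma~\ref{lem:Lsym:u2-u*}. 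The second term I would handle via $\normtwotoinf{A^*} = O(\log n/\sqrt{n})$ together with the triangle inequality $\norm{\um - \utwostar} \leq \norm{u_2 - \um} + \norm{u_2 - \utwostar}$, where the Davis--Kahan bound $\norm{u_2 - \utwostar} = O(1/\sqrt{\log n})$ is exactly~\eqref{eq:Lsym:u-u*}.

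For the third term I would invoke the row-concentration property (Lemma~\ref{lem:row-con}) with $w = \um - \utwostar$, which is independent of $\mrow{A}$ by construction of $\Am$. Running the same maximization of $\varphi(\cdot)$ as in~\eqref{eq:4.4.4}, using $\max_m\sqrt{n}\norminf{\um - \utwostar} = O(\sqrt{n}\norminf{u_2})$ and $\max_m\norm{\um - \utwostar} = O(\norminf{u_2} + 1/\sqrt{\log n})$, yields $\max_m\abs{\mrow{(A-A^*)}(\um - \utwostar)} = O(\log n\,\norminf{u_2}/\log \log n)$. A union bound over $m$ combined with the first two estimates then gives
$$\norminf{A(u_2 - \utwostar)} = O\!\lp\frac{\log n}{\log \log n}\norminf{u_2}\rp + O\!\lp\frac{\sqrt{\log n}}{\sqrt{n}}\rp.$$

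It remains to prove $\norminf{u_2} = O(1/\sqrt{n})$. Here I would use the normalized-setting eigenvector identity $u_2 = (1-\lambda_2(\Lsym))^{-1}\Dneg A u_2$ and split $u_2 = \utwostar + (u_2 - \utwostar)$ to obtain
$$\norminf{u_2} \leq \frac{1}{(1-\lambda_2(\Lsym))\dmin}\lp\norminf{A\utwostar} + \norminf{A(u_2-\utwostar)}\rp.$$
Theorem~\ref{thm:lambda(Lsym)} supplies $1-\lambda_2(\Lsym) = \Omega(1)$, Lemma~\ref{lem:dmin} gives $\dmin = \Omega(\log n)$, and a direct Chernoff bound (or Lemma~\ref{lem:BinomialDiff}) yields $\norminf{A\utwostar} = O(\log n/\sqrt{n})$. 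The contribution from $\norminf{A(u_2-\utwostar)}$ is at most $O(\norminf{u_2}/\log \log n)$, a lower-order factor that is absorbed into the left-hand side, giving $\norminf{u_2} = O(1/\sqrt{n})$. Substituting this back into the display above completes the proof.

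The main obstacle, compared with the unnormalized case, is that the approximation operator $(1-\lambda_2(\Lsym))^{-1}\Dneg$ decomposes the desired $\Theta(1/\log n)$ scaling into a constant factor $(1-\lambda_2(\Lsym))^{-1} = \Theta(1)$ multiplied by $\Dneg = \Theta(1/\log n)$; the absorption trick therefore depends critically on Theorem~\ref{thm:lambda(Lsym)} providing the constant lower bound $1-\lambda_2(\Lsym) \geq (\al-\be)/(\al+\be) - o(1)$, which has no analogue in the unnormalized proof where a full $\Theta(\log n)$ spectral gap was directly available. Beyond this, the remaining work is bookkeeping: ensuring that the probabilistic statements used above (for $\dmin$, $\norm{A-A^*}$, Lemma~\ref{lem:Lsym:u2-u*}, and Theorem~\ref{thm:lambda(Lsym)}) all hold simultaneously on one event of probability at least $1-Cn^{-\f}$ for a common choice of $\xi(\al,\be)$.
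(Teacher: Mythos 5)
Your proposal is correct and follows essentially the same route as the paper: the same three-term leave-one-out decomposition with Lemma~\ref{lem:Lsym:u2-u*}, \eqref{eq:Lsym:u-u*}, and the row-concentration property handling the respective pieces, followed by the same absorption argument for $\norminf{u_2}=O(1/\sqrt{n})$ via the identity $u_2=(1-\lambda_2(\Lsym))^{-1}\Dneg Au_2$. The paper simply delegates the first half to the proof of \eqref{eq:4.4.0} in Lemma~\ref{lem:L:A(u-u*)}, whereas you spell those steps out; the substance is identical.
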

\begin{proof}
	Similar to the proof of Lemma~\ref{lem:L:A(u-u*)}, we will prove the following two claims:
	\begin{align}
	\norminf{A(u_2-\utwostar)}&=\bigo{\frac{\norminf{u_2}\log n}{\log\log n}},\label{eq:4.7.0}\\
	\norminf{u_2}&=O\left( \frac{1}{\sqrt{n}}\right) .\label{eq:4.7.00}
	\end{align}
	For~\eqref{eq:4.7.0}, we refer to the proof of~\eqref{eq:4.4.0} in Lemma~\ref{lem:L:A(u-u*)}. Although $u_2$ in Lemma~\ref{lem:L:A(u-u*)} is  the second eigenvector of $L$ and here $u_2$ is the the second eigenvector of $(L,D)$, one can observe that all we need for the proof of~\eqref{eq:4.4.0} to hold are
	$$\max_{1\leq m\leq n}\norm{u_2-\um}=\bigo{\norminf{u_2}}$$
	and
	$$\norm{u_2-\utwostar}=\bigo{\frac{1}{\sqrt{\log n}}}.$$
	The former is guaranteed by Lemma~\ref{lem:Lsym:A(u-u*)} and the latter by~\eqref{eq:Lsym:u-u*}. Therefore we have proved~\eqref{eq:4.7.0}. To prove~\eqref{eq:4.7.00}, we expand
	\begin{align*}
	\norminf{u_2}&=\norminf{\frac{1}{1-\lambda_2(\Lsym)}\Dneg Au_2}\\
	&\leq\norminf{\frac{1}{1-\lambda_2(\Lsym)}\Dneg A\utwostar}+\norminf{\frac{1}{1-\lambda_2(\Lsym)}\Dneg A(u_2-\utwostar)}.\numberthis\label{eq:4.7.1}
	\end{align*}
	By Theorem~\ref{thm:lambda(Lsym)} and the bound for $\dmin$ we have
	$$\norminf{\frac{1}{1-\lambda_2(\Lsym)}\Dneg}=\bigo{\frac{1}{\log n}}.$$
	Therefore the two terms on the right hand side of~\eqref{eq:4.7.1} are bounded by
	$$\norminf{\frac{1}{1-\lambda_2(\Lsym)}\Dneg A\utwostar}=\bigo{\frac{1}{\log n}\norminf{A}\norminf{\utwostar}}=\bigo{\frac{1}{\sqrt{n}}},$$
	$$\norminf{\frac{1}{1-\lambda_2(\Lsym)}\Dneg A(u_2-\utwostar)}=\bigo{\frac{1}{\log n}\norminf{A(u_2-\utwostar)}}=\bigo{\frac{1}{\log\log n}\norminf{u_2}}.$$
	Hence the second term of right hand side of~\eqref{eq:4.7.1} is absorbed into the left hand side and~\eqref{eq:4.7.00} follows.
\end{proof}
\begin{proof}[\bf Proof of Theorem~\ref{thm:Lsym:strong-consis}]
	By\eqref{eq:3.9.3},
	\begin{align*}
	\prb{z_i\lp A\utwostar\rp_i \geq \eta_1(\al,\be)\frac{\log n}{\sqrt{n}}\;\text{ for all }i}=1-o(1).
	\end{align*}
	Using this in conjunction with Theorem~\ref{thm:lambda(Lsym)} (i), we have
	\begin{align}
	\prb{z_i\lp \frac{1}{1-\lambda_2(\Lsym)}\Dneg A\utwostar\rp_i \geq \eta_2(\al,\be)\frac{1}{\sqrt{n}}\;\text{ for all }i}=1-o(1).\label{eq:3.10.1}
	\end{align}
	Finally note that
	\begin{align}
	u_2=\frac{1}{1-\lambda_2(\Lsym)}\Dneg A\utwostar+\frac{1}{1-\lambda_2(\Lsym)}\Dneg A(u_2-\utwostar) \label{eq:3.10.2}
	\end{align}
	The proof is finished by combining~\eqref{eq:3.10.1},~\eqref{eq:3.10.2} and Lemma~\ref{lem:Lsym:A(u-u*)}.
\end{proof}

\end{document}